\documentclass{article} 
\usepackage{iclr2025_conference,times}


\usepackage{amsmath,amsfonts,bm}









\def\eqref#1{equation~\ref{#1}}









\def\1{\bm{1}}










\DeclareMathAlphabet{\mathsfit}{\encodingdefault}{\sfdefault}{m}{sl}
\SetMathAlphabet{\mathsfit}{bold}{\encodingdefault}{\sfdefault}{bx}{n}











\newcommand{\E}{\mathbb{E}}

\newcommand{\R}{\mathbb{R}}



\usepackage{hyperref}
\usepackage{url}
\usepackage{amssymb}
\usepackage{algorithm}
\usepackage{algpseudocode}
\usepackage{subfigure}
\usepackage{graphicx}
\usepackage{amsthm}
\usepackage{thmtools}

\def\N{\mathbb{N}}
\def\P{\mathbb{P}}

\def\W{\mathcal{W}}
\def\C{\mathcal{C}}
\def\E{\mathbb{E}}
\def\ps{p^{\circ}}
\def\pt{p^{\ast}}
\def\bl{\underline{b}}
\def\bu{\bar{b}}
\def\S{\mathcal{S}}
\def\bmX{\bm{X}}
\def\bmx{\bm{x}}
\def\V{\text{Var}}
\def\hatfX{\hat{f}^q_{\S_1}(\bmX)}
\def\hatfx{\hat{f}^q_{\S_1}(\bmx)}
\def\g{g_{\S_1}}
\def\complexfc{\frac{2p_{\diamond}}{p_{\diamond}+2-2q}}
\def\complexfcinv{\frac{p_{\diamond}+2-2q}{p_{\diamond}}}
\def\complexfcinvtwo{\frac{p_{\diamond}+2-2q}{2p_{\diamond}}}

\newtheorem*{remark}{Remark}

\newtheorem{lemma}{Lemma}
\newtheorem{assumption}{Assumption}

\newtheorem{definition}{Definition}

\title{Minimax Optimal Two-Stage Algorithm For Moment Estimation Under Covariate Shift}


\author{Zhen Zhang, Xin Liu\thanks{Correspond to \texttt{\{liu.xin,tengjiaye\}@mail.shufe.edu.cn}.},  Shaoli Wang, Jiaye Teng$^{\ast}$  \\
School of Statistics and Data Science\\
Shanghai University of Finance and Economics\\
Shanghai 200433, P.R. China \\
\texttt{zhenzhang@stu.sufe.edu.cn, swang@shufe.edu.cn} \\
}



\iclrfinalcopy 
\begin{document}

\maketitle

\begin{abstract}
Covariate shift occurs when the distribution of input features differs between the training and testing phases.  In covariate shift, estimating an unknown function's moment is a classical problem that remains under-explored, despite its common occurrence in real-world scenarios. In this paper, we investigate the minimax lower bound of the problem when the source and target distributions are known. To achieve the minimax optimal bound (up to a logarithmic factor), we propose a two-stage algorithm. Specifically, it first trains an optimal estimator for the function under the source distribution, and then uses a likelihood ratio reweighting procedure to calibrate the moment estimator. In practice, the source and target distributions are typically unknown, and estimating the likelihood ratio may be unstable. To solve this problem, we propose a truncated version of the estimator that ensures double robustness and provide the corresponding upper bound. Extensive numerical studies on synthetic examples confirm our theoretical findings and further illustrate the effectiveness of our proposed method.
\end{abstract}

\section{Introduction} \label{sec:intro}

\textit{Covariate shift} occurs when the marginal distribution of the input covariates differs between the training and test data, but the conditional distribution of the response given the covariates remains consistent. It is prevalent and crucial in real-world applications, such as natural language processing \citep{wang2017instance}, computer vision \citep{tzeng2017adversarial}, reinforcement learning \citep{chang2021mitigating},  finance \citep{timper2021improvement} and medical care \citep{guan2021domain}. In addition to its practical significance, covariate shift has been extensively studied theoretically, particularly in the context of function estimation. For example, the linear function in linear regression \citep{ryan2015semi}, the classifier in nonparametric classification \citep{kpotufe2021marginal}, and the conditional mean function in nonparametric regression \citep{feng2024towards}. 

However, in modern machine learning tasks, we are often more interested in the functionals of these estimated functions, such as the average treatment effect in causal inference or accuracy in classification problems. To address these issues, practitioners commonly use a two-stage approach: first, they estimate the functions, and then they use these estimators to calculate the desired functional \citep{oates2017control,lei2021conformal,blanchet2024can}. Yet, there are few theoretical works that analyze the efficiency of these two stages from a unified perspective. Consequently, this paper aims to explore the following question:

\textit{Under covariate shift, can we calibrate an optimal function estimator under source distribution to maintain the optimality for the functional under target distribution, and if so, how?}

To this end, we consider the problem of estimating the moment of an unknown function under covariate shift. This is a common scenario in many fields, such as counterfactual inference in causal inference\footnote{\textcolor{black}{For a more detailed discussion, please refer to Appendix \ref{app:motivation}.}}. Specifically, we aim to estimate  the $q$-th moment $I_f^q := \E_{\bmX \sim \P^{\ast}}[f^q(\bmX)]$ of $f$ under target distribution $\P^{\ast}$ with p.d.f. $\pt(\bmx)$, based on values $f(\bmx_1), \ldots, f(\bmx_n)$ observed on $n$ random samples $\bmx_1, \ldots, \bmx_n \in \Omega$ drawn from source distribution $\P^{\circ}$ with p.d.f. $\ps(\bmx)$. Here, the unknown function $f$ belongs to the Sobolev space $\W^{s,p}(\Omega)$ with $\Omega \subset \R^d$, where $s$ indicates the degree of smoothness and $p$ specifies the integrability condition of these derivatives.  To understand the complexity of this nonparametric functional estimation problem under covariate shift, we restrict our problem in certain class defined as $\mathcal{V} := \left\{(f, \ps, \pt): f \in \W^{s,p}(\Omega), \; \bl \leq \ps \leq \bu, \; \sup_{\bmx \in \Omega} w(\bmx) \leq B \right\}$, where $w(\bmx) := \frac{\pt(\bmx)}{\ps(\bmx)}$ represents the likelihood ratio. The constants $\bl, \bu$ and $B$ are used to quantitatively compare the quality of different source distribution and the intensity of covariate shift. The problem we consider extends the work of \cite{blanchet2024can} to scenarios involving covariate shift.

Our contributions in this paper can be summarized as follows:

\underline{Contribution I}: \textbf{The impact of covariate shift on the minimax lower bound when $\ps$ and $\pt$ are known}. We establish the minimax lower bound \textcolor{black}{for any given target p.d.f. $p^{\ast}$} over $\mathcal{V}$ as follow\footnote{We use the notations $\gtrsim$ and $\lesssim$ to hide log factors and define $\mathcal{V}_{\pt}:=\{(f,p^{\circ}):(f, p^{\circ},p^{\ast})\in \mathcal{V}\}$.}:  
$$
\inf_{\hat{H}^q \in \mathcal{H}}\sup_{(f, \ps) \in \mathcal{V}_{\pt}} \E \left|\hat{H}^q-I_f^q \right| \gtrsim B \cdot \bu \cdot n^{\max \{-q(\frac sd - \frac 1p) -1, -\frac 12 - \frac sd\}},
$$
where $\mathcal{H} := \{\hat{H}^q: \Omega^n \times \R^n \rightarrow \R\}$ denotes the class that contains all estimators of the $q$-th moment $I^q_f$. The value $B \geq \sup_{\bmx \in \Omega} w(\bmx)$ measures the cost we need when doing estimations under covariate shift. A larger $B$ indicates a greater extent of covariate shift, which leads to a more challenging estimation task. When there is no covariate shift, $B=1$.  The value $\bu$ indicates the influence of the sampling strategy to the problem, a larger $\bu$ implies a more irregular source distribution. When $\P^{\circ}$ is taken as uniform distribution, $\bu = 1$. The term $n^{\max \{-q(\frac sd - \frac 1p) -1, -\frac 12 - \frac sd\}}$ describes the difficulty caused by the combination of function class $\W^{s,p}(\Omega)$ and the order of the moment $q$.

\begin{figure}[t]
    \centering
    \includegraphics[width=0.9\linewidth]{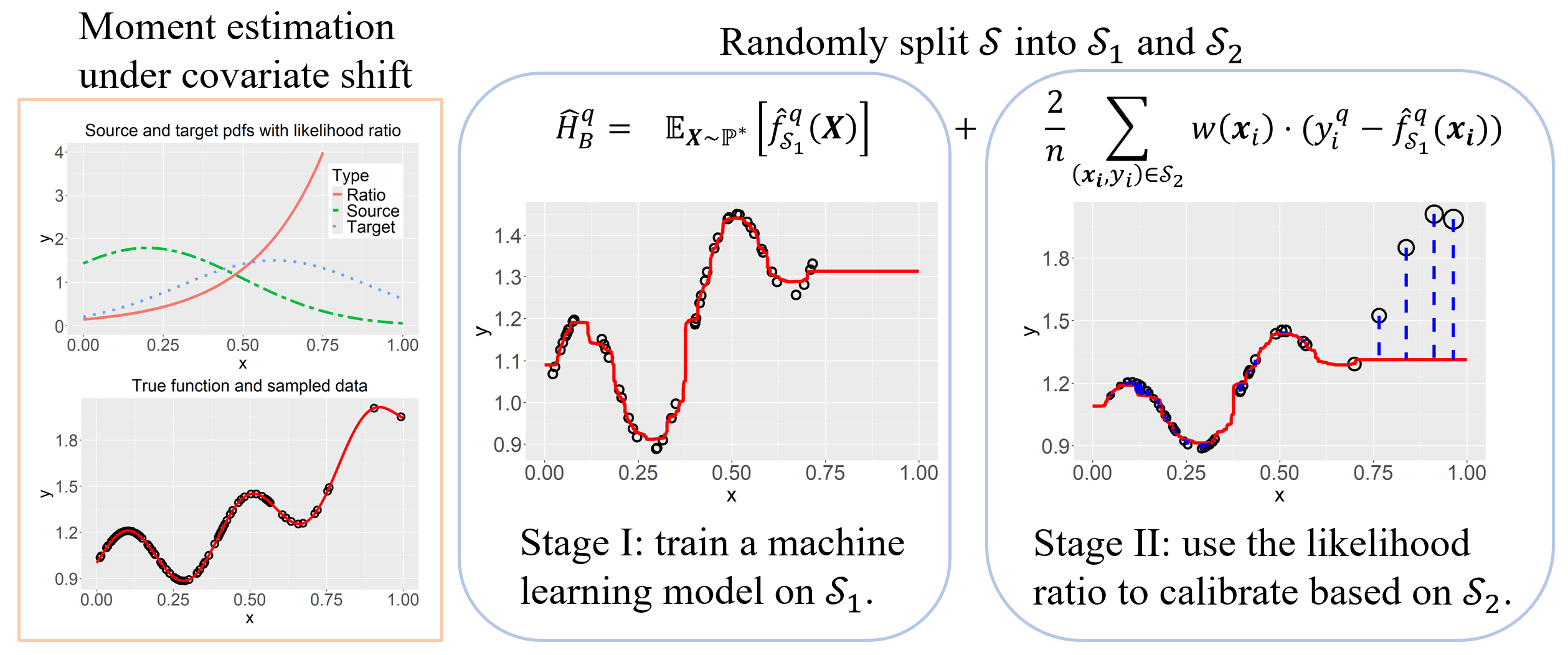}
    \vspace{-0.5em}
    \caption{The top figure in the left box shows the source and target distributions along with the corresponding likelihood ratio. The bottom figure in the left box illustrates the underlying function and the sampled data. The middle and right boxes show the first and second stages, respectively. More details can be found in Section \ref{subsec:upperbound}.}
    \label{fig:framwork}
    \vspace{-0.5em}
\end{figure}
\underline{Contribution II}: \textbf{A two-stage algorithm that achieves the lower bound}. 
We then propose a two-stage algorithm which attains the minimax lower bound up to a logarithmic factor. Specifically, we split the data $\S=\{(\bmx_i, y_i=f(\bmx_i))\}_{i=1}^n$ into two parts with equal size, $\S_1$ and $\S_2$. We use $\S_1$ to fit an optimal nonparametric estimator $\hat{f}_{\S_1}$  of $f$, and then use $\S_2$ to do calibration. Our estimator is formulated as follows, and the framework is illustrated in Figure \ref{fig:framwork}:
\begin{equation}
\label{eq: unw_estimator}
\hat{H}^q_B(\S) := \E_{\bmX \sim \P^{\ast}} [\hat{f}_{\S_1}^q(\bmX)] + \sum_{(\bmx_i, y_i) \in \S_2} w(\bmx_i) \cdot (y_i^q - \hat{f}_{\S_1}^q(\bmx_i)).
\end{equation}
Since $\pt(\bmx)$ is known, we use a sampling method to estimate the first term. This estimator can be seen as importance sampling using  $\hat{f}_{\S_1}$  as the control variate. The $\hat{f}_{\S_1}$ introduced in the first stage is used to reduce variance, while $w(\bmx)$ in the second stage is used to debias residual. Furthermore, if $\hat{f}_{\S_1}$ is chosen to be optimal (Assumption \ref{ass:oracle}) then it matches the minimax lower bound up to a logarithmic factor, as follows:
$$
\mathbb{E}_{\S}\left[\left|\hat{H}_B^q(\S)-I_f^q\right|\right] \lesssim B \cdot \bu \cdot n^{\max\{-q(\frac{s}{d}-\frac{1}{p})-1,-\frac{1}{2}-\frac{s}{d}\}}.
$$

\underline{Contribution III}: \textbf{Stabilized algorithm with double robustness when $\ps$ and $\pt$ are unknown}. In practice, the source and target distributions are typically unknown. Therefore, we need to replace $w(\bmx)$ in \eqref{eq: unw_estimator} with the estimator $\hat{w}(\bmx)$. To estimate $w(\bmx)$, it is necessary to have $m~(\gg n)$\footnote{\textcolor{black}{The condition $m~(\gg n)$ is not necessary in this paper but it is a common scenario in practice, as unlabeled data is generally more easily available than labeled data.}} unlabelled data points drawn from $\P^{\ast}$, denoted as $\S^{\prime} = \{ x^{\prime}_1, \ldots, x^{\prime}_m \}$.  Although there is extensive literature \citep{yu2012analysis,bickel2007discriminative,kpotufe2017lipschitz} on methods for estimating $w(\bmx)$, $\hat{w}(\bmx)$  becomes unstable when we lack prior information about $\P^{\ast}$ and $\P^{\circ}$. To address this, we introduce the truncated version of \eqref{eq: unw_estimator} as follows:
$$
\tilde{H}_{T}^q(\S \cup \S^{\prime}) := \frac{1}{m}  \sum_{i=1}^{m} \hat{f}_{\S_1}^q(x^{\prime}_i) + \sum_{(\bmx_i, y_i) \in \S_2} \tau_T(\hat{w}(\bmx_i)) \cdot (y_i^q - \hat{f}_{\S_1}^q(\bmx_i)),
$$
where $\tau_T(\cdot) := \min\{\cdot, T\}$. To understand how this affects the convergence rate, we assume that $g(T) := \P \left( \left\{x: \frac{\pt(\bmx)}{\ps(\bmx)} > T  \right\} \right) \leq T^{-\alpha}$ and denote $r(n) = n^{\max\{-q(\frac{s}{d}-\frac{1}{p})-1,-\frac{1}{2}-\frac{s}{d}\}}$. By selecting $T$ to balance the trade-off between bias and variance, the convergence rate of the truncated version of the estimator is:
$$
\mathbb{E}_{\S}\left[\left|\tilde{H}_{T}^q(\S \cup \S^{\prime})-I_f^q\right|\right] \lesssim \bu \cdot r(n) ^{\frac{\alpha}{\alpha+1}} + m^{-\frac 12}.
$$  
Note that our estimator involves two models: one for estimating $f$ and the other for estimating $w(\bmx)$. We demonstrate that our estimator possesses the double robust property, meaning it be consistent if at least one of the two estimates is consistent.

The paper is organized as follows. Section \ref{sec:relatedwork} provides a review of the relevant literature. Section \ref{sec:notation-setup} introduces the notation and problem setup. In Section \ref{sec:minimax}, we establish the minimax rate of convergence for the moment estimation problem under covariate shift and demonstrate its attainability where $\ps$ and $\pt$ are known. In Section \ref{sec:unknown}, we introduce a stabilized algorithm with double robustness for cases where $\ps$ and $\pt$ are unknown. In Section \ref{sec:simulation}, we provide the simulation results to illustrate our theoretical findings. Finally, we conclude in Section \ref{sec:conclusion} and leave most of proofs to the appendix.

\section{Related Work} \label{sec:relatedwork}

\textbf{Covariate shift.} 
Extensive work has been done on covariate shift \citep{sugiyama2012machine}. The primary tool to address covariate shift is the reweighting method, as discussed by \cite{shimodaira2000improving}. This method utilizes the likelihood ratio to reweight the loss function or estimator, thereby correcting for bias. Numerous studies have examined the statistical efficiency of this method in various contexts, such as nonparametric classification \citep{kpotufe2021marginal} and nonparametric regression \citep{ma2023optimally, feng2024deep}. \textcolor{black}{Among them, \citet{ma2023optimally} is closely related to ours which studies the covariate shift problem in the context of nonparametric regression over a reproducing kernel Hilbert space and introduces the truncation trick on the reweighted loss for the case of unbounded likelihood ratios. However, while their focus is on estimating the unknown function itself, our work is concerned with the moments of the unknown function, thereby advancing one step beyond previous studies.} The likelihood ratio is the key component of this method, but it is often unknown in practice. Many studies focus on estimating the likelihood ratio in covariate shift using techniques such as histogram-based methods \citep{kpotufe2017lipschitz}, kernel mean matching \citep{yu2012analysis, li2020robust}, and discriminative learning \citep{bickel2007discriminative, bickel2009discriminative}. 

There are also other methods that do not require estimating the likelihood ratio. For example, transformation-based methods \citep{flamary2016optimal, wang2024optimal} involve learning a transformation from the source feature domain to the target feature domain and then using the transformed data to train the model. Domain-invariant methods \citep{robey2021model, blanchard2021domain} aim to learn transformations that map data between domains and subsequently enforce invariance to these transformations.

\textbf{Two-stage algorithm} is a class of method that divides the problem-solving process into two distinct phases, each with a specific purpose. These methods have numerous applications across various fields. For example, in the deep learning community, the training regime typically involves representation learning followed by downstream task fine-tuning \citep{devlin2018bert, du2020few, he2022masked}. In computational statistics, regression-adjusted control variates are introduced in the first stage to reduce the variance for the estimation of the quantity of interest \citep{oates2017control, holzmuller2023convergence}. In uncertainty quantification, conformal inference uses a calibration set in the second stage to construct confidence intervals for any point estimator from the first stage \citep{romano2019conformalized, tibshirani2019conformal, teng2021t, lei2021conformal}. 

Recently, \cite{blanchet2024can} found that using optimal nonparametric machine learning algorithms to construct control variates is the best way to enhance Monte Carlo methods, but it remains under-explored how to adapt such a two-stage algorithm to ensure it remains optimal under covariate shift.

\section{Notation and Setup} \label{sec:notation-setup}

\textbf{Notation.}  Let $\|\cdot\|$ denote the standard Euclidian norm, and let $\Omega = [0,1]^d$ denote the unit cube in $\R^d$ for any fixed $d \in \N$. Define $\N_0 := \N \cup \{0\}$ as the set of all non-negative integers. For any given $s \in \N_0$ and $1 \leq p \leq \infty$, the Sobolev space $\W^{s,p}(\Omega)$ is defined as follows:
$$
\W^{s,p}(\Omega) := \left\{ f \in \mathcal{L}^p(\Omega): D^{\alpha}f \in \mathcal{L}^p(\Omega), \forall \alpha \in \N_0 \text{ satisfying } |\alpha| \leq s \right\}.
$$

\textbf{Problem Setup.} Consider the problem of estimating a function's $q$-th moment under covariate shift. For any fixed $q \in \N$ and $f \in \W^{s,p}(\Omega)$, we want to estimate the $q$-th moment $I_f^q := \E_{\bmX \sim \P^{\ast}}[f^q(\bmX)]$ under target distribution $\P^{\ast}$ where $p^{\ast}$ is the p.d.f. of $\P^{\ast}$. However, we only access to $n$ observations $\S = \{(\bmx_i, y_i)\}_{i=1}^n \subset \Omega^n \times \R^n$, where $\{\bmx_i\}_{i=1}^n$ are i.i.d. samples from source distribution $\P^{\circ}$ with p.d.f. $p^{\circ}$ and $y_i = f(\bmx_i)$. Our goal is to find an estimator $\hat{H}^q: \Omega^n \times \R^n \rightarrow \R$ to tackle with this nonparametric functional estimation problem under covariate shift. 

Notably, under covariate shift, transferring knowledge from the source to the target is not straightforward without additional assumptions on $\P^{\circ}$ and $\P^{\ast}$. For example, consider the extreme case where $\P^{\circ}$ is a uniform distribution on $[0, \frac{1}{2}]^d$ and  $\P^{\ast}$ is a uniform distribution on $[\frac{1}{2}, 1]^d$. In this scenario, anything learned on $[0, \frac{1}{2}]^d$ is independent of the information on $[\frac{1}{2}, 1]^d$. Therefore, the difficulty of this problem hinges on the notion of discrepancy between the source and target distributions. By imposing various conditions on this discrepancy, we can define different families of source-target pairs (Definition \ref{def:b-bounded}), denoted as $(\P^{\circ}, \P^{\ast})$. In this paper, we focus on pairs defined on the likelihood ratio, a common choice under covariate shift \citep{ma2023optimally, feng2024towards}, as outlined below.
\begin{definition}[Uniformly $B$-bounded]
\label{def:b-bounded}
    For any source-target pairs $(\P^{\circ}, \P^{\ast})$ with p.d.f. $(\ps, \pt)$ and constant $B \geq 1$, we say that the pair is uniformly $B$-bounded if:
    $$
    \sup_{\bmx \in \Omega} w(\bmx) := \frac{\pt(\bmx)}{\ps(\bmx)} \leq B.
    $$
\end{definition}
It is worth noting that $B=1$ recovers the case without covariate shift, \emph{i.e.}, $\P^{\circ}=\P^{\ast} $.
At the same time, the source distribution also affects the difficulty of the problem. A well-shaped source distribution provides comprehensive information about the function $f$. To exclude extreme cases, we assume that $\ps$ is bounded below by $\bl$ and above by $\bu$, an assumption commonly used in transfer learning \citep{audibert2007fast, cai2021transfer}.

Putting the above together, we consider the following class in the rest of this paper:
$$
\mathcal{V}(\bl, \bu, B, s, p) := \left\{(f, \ps, \pt): f \in \W^{s,p}(\Omega), \; \bl \leq \ps \leq \bu, \; \sup_{\bmx \in \Omega} w(\bmx) \leq B \right\},
$$
and investigate how the parameters within this family influence the difficulty of the estimation problem. We use the shorthand $\mathcal{V}$ when the context is clear. 

\section{Minimax lower bound and its attainability} \label{sec:minimax}
In this section, we establish the minimax rate of convergence for the moment estimation problem under covariate shift in Section \ref{subsec:minimaxlb}, assuming that $\ps$ and $\pt$ are known. We then propose a two-stage algorithm that achieves this bound up to a logarithmic factor in Section \ref{subsec:upperbound}.  Additionally, since a large covariate shift intensity $B$ leads to high variance, we introduce a biased estimator in Section \ref{subsec:csindependent} to achieve a variance-bias trade-off. This allows us to obtain a bound that is independent of $B$.

\subsection{Minimax lower bound} \label{subsec:minimaxlb}
In this subsection, we study the minimax lower bound for the problem above via the method of two fuzzy hypotheses \citep{tsybakov2009nonparametric}. We have the following minimax lower bound on the class $\mathcal{H}_n^{f,q}$ that contains all estimators $\hat{H}^q: \Omega^n \times \R^n \rightarrow \R$ of the $q$-th moment $I_f^q$.

\begin{restatable}[Minimax Lower Bound on Estimating the Moment under Covariate Shift]{theorem}{lowerbound}
\label{thm:lowerbound}
Let $\mathcal{H}_n^{f,q}$ denote  the class of all the estimators using $\S = \{(\bmx_i, y_i=f(\bmx_i))\}_{i=1}^n$ to estimate the $q$-th moment of $f$ under $\P^{\ast}$. \textcolor{black}{For any given target p.d.f. $p^{\ast} \in \{p^{\ast}:(\cdot, \cdot, p^{\ast})\in \mathcal{V}\}$, when $p\geq2$, $q\leq p \leq 2q$ and $(f, p^{\circ}) \in \mathcal{V}_{p^{\ast}}:=\{(f,p^{\circ}):(f, p^{\circ},p^{\ast})\in \mathcal{V}\}$, it holds that
\begin{equation}
\label{eq:lowerbound}
    \inf_{\hat{H}^q \in \mathcal{H}_n^{f,q}} \sup_{(f,\ps) \in \mathcal{V}_{\pt}} \E _{\mathcal{S}} \left[ \left| \hat{H}^q(\mathcal{S}) - I_f^q \right| \right] \gtrsim   B \cdot \bu \cdot n^{\max \{-q(\frac sd - \frac 1p) -1, -\frac 12 - \frac sd\}}.
\end{equation}
}    
\end{restatable}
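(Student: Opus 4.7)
The plan is to invoke Tsybakov's method of two fuzzy hypotheses (Theorem 2.15 in \cite{tsybakov2009nonparametric}), building on the constructions used by \cite{blanchet2024can} for the no-shift case and augmenting them so that the covariate shift contributes the prefactor $B\cdot\bu$. Given the fixed target density $\pt$, I will choose a worst-case source density $\ps$ that saturates $\bl\leq\ps\leq\bu$ and $\pt/\ps\leq B$, picking a sub-region $R\subset\Omega$ on which $\ps\asymp\bl$ (so that $\pt\asymp B\bl$) and localizing every function perturbation inside $R$. This has the dual effect of (i) inflating the functional gap $|I_{f_1}^q - I_{f_0}^q|$ by the $\pt$-mass of the perturbation, giving the factor $B$, and (ii) deflating the information carried by the $n$ source samples, since only those samples landing in $R$ are informative; the prefactor $\bu$ will enter through the $\ps$-weighted bound on the admissible bump amplitudes.

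Two hypothesis families are needed, one for each exponent inside the maximum. For the rate $n^{-1/2 - s/d}$ I will use a Rademacher sign mixture: tile $R$ with $M^d$ disjoint cubes of side $h=1/M$, place bump functions $\phi_j$ of height $\asymp h^{s}$ in each (so that $\sum_j\phi_j$ lies in a ball of $\W^{s,p}(\Omega)$), and consider $\mu_1$ uniform over $f_\epsilon=f_0+\sum_j\epsilon_j\phi_j$ for $\epsilon_j$ i.i.d.\ Rademacher, with $\mu_0$ the point mass at $f_0$. Expanding $I_{f_\epsilon}^q = \sum_{k=0}^{q}\binom{q}{k}\int f_0^{q-k}\bigl(\sum_j\epsilon_j\phi_j\bigr)^k d\pt$ and using disjointness of the supports, odd-$k$ contributions vanish under the sign average while the $k=2$ term already yields a deterministic functional gap of order $B\bu\cdot M^d h^{d+2s}$. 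Choosing $M$ to balance this gap against the Hellinger bound described below gives $M\asymp n^{1/(2s+d)}$ and the claimed rate.

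For the rate $n^{-q(s/d-1/p)-1}$ I will use a non-mixture two-point construction: $f_0$ equal to a benign constant and $f_1 = f_0 + a\,\phi((\cdot-\bmx_0)/h)$ for a single bump inside $R$ of amplitude $a\asymp h^{s-d/p}$ saturating the $\W^{s,p}$ ball. Under the hypothesis $q\leq p\leq 2q$, the highest-order term $a^q\int\phi^q\,d\pt\asymp B\bu\, a^q h^d$ is the one that controls the separation (lower-order terms in $a$ cancel or can be absorbed into $f_0$ by shifting), so the functional gap is of order $B\bu\, h^{q(s-d/p)+d}$. The chi-squared distance between $P_{f_0}^{\otimes n}$ and $P_{f_1}^{\otimes n}$ stays of constant order provided $n\bl h^d\lesssim 1$, so setting $h^d\asymp(n\bl)^{-1}$ yields the exponent $-q(s/d-1/p)-1$; taking the maximum of the two rates finishes the argument.

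The main obstacle is the Hellinger bound for the sign-mixture construction: one must argue that observing $f_\epsilon$ at $n$ points drawn from $\ps$ cannot localize enough of the $M^d$ signs $\epsilon_j$ to distinguish the mixture from $f_0$. This reduces to bounding the probability that a single sample hits any given bump (of order $h^d\bl$) and then tensorizing via the subadditivity of chi-squared across the independent sign coordinates, together with a Paley--Zygmund style argument to upgrade the expected-value separation of the functional into an in-probability separation required by the fuzzy hypothesis framework. A secondary difficulty is verifying, under the restriction $q\leq p\leq 2q$, that the intended term in the polynomial expansion of $(f_0+\sum_j\epsilon_j\phi_j)^q$ is indeed the dominant one and that the Sobolev normalization is binding in the direction of the perturbation, since different orderings of $p$ and $q$ would select different bump amplitudes and shift which of the two rates dominates.
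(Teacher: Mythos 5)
Your overall framework (two fuzzy hypotheses, with one rare-event/single-bump instance giving $n^{-q(s/d-1/p)-1}$ and one oscillating Rademacher-mixture instance giving $n^{-1/2-s/d}$) matches the paper, and the single-bump construction is essentially the paper's Case~I. However, there are two genuine gaps.

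\textbf{Origin of the $\bu$ prefactor.} You claim "the prefactor $\bu$ will enter through the $\ps$-weighted bound on the admissible bump amplitudes," but the class $\mathcal{V}$ constrains $f\in\W^{s,p}(\Omega)$ with respect to Lebesgue measure on $\Omega$, not with respect to $\ps$, so the admissible bump amplitudes are determined by the $\W^{s,p}$-norm alone and carry no $\ps$-dependence. As written, your construction — source density $\asymp\bl$ on the bump region, target density $\asymp B\bl$, bump heights fixed by the Sobolev ball — yields a prefactor of order $B\bl$ (or $B$ times a power of $\bl$ after optimizing over the bump width and, in the Rademacher case, the number of bumps), not $B\cdot\bu$. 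The paper gets $\bu$ by directly scaling the candidate functions: $g_1$ carries an extra $\bu^{1/q}$ in Case~I and the entire family $\mathcal{F}$ is scaled by $(B\bu)^{1/q}$ in Case~II, so that the $q$-th power contributes the factor $\bu$ (respectively $B\bu$) to the functional gap. You never introduce this scaling, so the rate you would actually prove is off by a factor that can be arbitrarily small relative to $\bu$.

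\textbf{The Rademacher instance.} You propose $\mu_0=\delta_{f_0}$ against an unbiased Rademacher mixture and extract the separation from the quadratic (deterministic) term, arguing the odd terms average to zero. For that separation to dominate the linear-term fluctuation $\sqrt{M^d}\,h^{d+s}$ you need $M^{d/2}h^{s}\gtrsim 1$, i.e.\ roughly $s<d/2$, an additional regime restriction that the theorem does not impose. The paper avoids this by using two \emph{biased} Rademacher priors with bias $\pm\kappa\asymp n^{-1/2}$ and an additive constant $M$ so that the \emph{linear} term $qM^{q-1}f_j$ drives the separation $\Delta'\asymp A-C$; the bias is then concentrated via Hoeffding, and the KL between the two sign-mixtures is bounded by $n\kappa\log\frac{1+\kappa}{1-\kappa}=O(1)$. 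Notably, the paper's Case~II uses $\ps=\pt$ uniform — no covariate shift at all — with the entire $B\bu$ coming from function scaling; your plan to localize the bumps on a small region $R$ where $\ps$ is small and $\pt/\ps$ is saturated is a fundamentally different (and under your current normalization, insufficient) mechanism. You would need to add the explicit $\bu^{1/q}$ (respectively $(B\bu)^{1/q}$) amplitude scaling and switch to the biased-mixture, linear-term argument to close these gaps.
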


Theorem \ref{thm:lowerbound} indicates that the complexity of this problem is determined by three factors: covariate shift, sampling strategy, and the interplay between the function class and the moment order. We will provide a simulation study to illustrate these three factors in Section \ref{sec:simulation}.

\textbf{Covariate Shift}: The value $B \geq \sup_{\bmx \in \Omega} w(\bmx)$ represents the additional cost associated with making accurate estimations in the presence of covariate shift. A larger $B$ signifies a more pronounced covariate shift, which inherently complicates the estimation process due to the increased divergence between the source and target distributions. When there is no covariate shift, $B=1$, meaning the distributions are aligned, and estimation is straightforward. 

\textbf{Sampling Strategy}: The parameter $\bu$\ quantifies the impact of the sampling strategy on the problem. A larger $\bu$ suggests a more irregular source distribution, which introduces complexities in the estimation process. For example, when $\P^{\circ}$ is a uniform distribution, $\bu=1$, indicating that the source distribution is regular and well-behaved.

\textbf{Function Class and Moment Order}: The term $n^{\max \{-q(\frac sd - \frac 1p) -1, -\frac 12 - \frac sd\}}$ encapsulates the complexity arising from the interplay between the function class $\W^{s,p}(\Omega)$ and the order of the moment $q$. It delineates the function class into two distinct regimes based on error decomposition: $(f^q-\hat{f}^q)$ can be broken down into the semi-parametric influence part $f^{2q-2}(f-\hat{f})^2$ and the propagated estimation error $(f-\hat{f})^{2q}$ \citep{blanchet2024can}. When $s > \frac{d(2q-p)}{p(2q-2)}$, the smoothness parameter 
$s$ is sufficiently large to make the semi-parametric influence part the dominant term. Conversely, $s<\frac{d(2q-p)}{p(2q-2)}$, the semi-parametric influence no longer dominates, and the final convergence rate shifts from $n^{-\frac 12 - \frac sd}$ to $n^{-q(\frac sd- \frac 1p)-1}$.

\subsection{Attainability} \label{subsec:upperbound}
This subsection focuses on constructing minimax optimal estimators for the $q$-th moment under covariate shift. It addresses the question posed in Section \ref{sec:intro}: under covariate shift, if we can obtain an optimal function estimator $\hat{f}_{\S_1}$ that satisfies Assumption \ref{ass:oracle}, then we can calibrate it as follows to derive the moment estimator $\hat{H}_B^q$, which preserves the optimality in functional estimation:
\begin{equation}
\label{eq:estimator1}
    \hat{H}^q_B (\S) := \E_{\bmX \sim \P^{\ast}}[\hat{f}_{\S_1}^q(\bmX)] + \frac{2}{n}\sum_{(\bmx_i,y_i) \in \S_2} w(\bmx_i) \cdot (y_i^q - \hat{f}_{\S_1}^q(\bmx_i)),
\end{equation}
where $\S_1 = \left\{ (\bmx_i, y_i) \right\}_{i=1}^{\frac{n}{2}}$ and $\S_2= \left\{ (\bmx_i, y_i) \right\}_{i=\frac{n}{2}+1}^{n}$ are two subsets of the sample $\S$.

\begin{assumption}[Optimal Function Estimator as an Oracle]
\label{ass:oracle}
    Given any function $f \in \W^{s,p}(\Omega)$ and $n \in \N$, let $\{\bmx_i\}_{i=1}^n$ be $n$ data points sampled independently and identically from the source distribution $\P^{\circ}$, whose p.d.f. $\ps$ is bounded bellow by $\bl$ and above by $\bu$ on $\Omega$. Assume that for $\frac sd > \frac 1p - \frac{1}{2q}$, there exists an oracle $K_n: \Omega ^n \times \R^n \rightarrow \W^{s,p}(\Omega)$ that estimates $f$ based on the $n$ points $\{\bmx_i\}_{i=1}^n$ along with the $n$ observed function values $\{f(\bmx_i)\}_{i=1}^n$ and satisfies the following bound for any $r$ satisfying $\frac{1}{r}\in\left(\max\{\frac{d-sp}{pd},0\},\max\{\frac{1}{p},\mathbb{I}\{s>\frac{d}{p}\}\}\right]$:
    \begin{equation}
    \label{eq:oracle_rate}
        \left(\mathbb{E}_{\{\bmx_i\}_{i=1}^n}\left[\|K_n(\{\bmx_i\}_{i=1}^n,\{f(\bmx_i)\}_{i=1}^n)-f\|_{\mathcal{L}^r(\Omega)}^r\right]\right)^{\frac{1}{r}}\lesssim \bu \cdot n^{-\frac{s}{d}+(\frac{1}{p}-\frac{1}{r})_+}.
    \end{equation}
\end{assumption}
Assumption \ref{ass:oracle} illustrates that when the function is smooth enough, \emph{i.e.}, $\frac sd > \frac 1p - \frac{1}{2q}$, it becomes possible to construct an optimal function estimator, provided the source distribution is well-shaped. This assumption is also used in \cite{blanchet2024can}, and many works \citep{wendland2004scattered, krieg2024random, blanchet2024can} focus on constructing such estimators, notably through the moving least squares method, which achieves the convergence rate in \eqref{eq:oracle_rate}. \textcolor{black}{A construction of the desired oracle is deferred to Appendix \ref{app:build_orcale}.}

\begin{restatable}[Upper Bound on Moment Estimation with Smoothness]{theorem}{upperbound}
\label{thm:upper-bound-1}
 Assume that $p\geq2$, $q \leq p\leq 2q$ and $\frac sd > \frac 1p - \frac{1}{2q}$. Let  $(f, \ps, \pt) \in \mathcal{V}(\bl, \bu, B, s, p)$ and we have sample $\S = \{(\bmx_i, y_i=f(\bmx_i))\}_{i=1}^n$.  If $\hat{f}_{\S_1}$ satisfies the Assumption \ref{ass:oracle}, then the estimator $\hat{H}^q_B$ constructed in \eqref{eq:estimator1} above satisfies 
 \begin{equation}
     \mathbb{E}_{\S}\left[\left|\hat{H}^q_B(\S)-I_f^q\right|\right] \lesssim B \cdot \bu \cdot n^{\max\{-q(\frac{s}{d}-\frac{1}{p})-1,-\frac{1}{2}-\frac{s}{d}\}}.
 \end{equation}
\end{restatable}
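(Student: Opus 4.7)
\textbf{Proof plan for Theorem \ref{thm:upper-bound-1}.}

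The plan is to exploit the fact that the two-stage estimator is \emph{conditionally unbiased} given the first-stage sample $\S_1$, reducing the analysis to a variance computation, and then to translate variance into function-approximation error through a polynomial expansion of $f^q - \hat{f}_{\S_1}^q$. First I would verify unbiasedness: conditional on $\S_1$, the reweighted sum in the second term is an importance-sampling estimator whose conditional mean is $\int w(\bmx)(f^q(\bmx) - \hat{f}_{\S_1}^q(\bmx))\ps(\bmx)\,d\bmx = \E_{\bmX\sim\P^{\ast}}[f^q(\bmX) - \hat{f}_{\S_1}^q(\bmX)]$, which exactly cancels the residual relative to $I_f^q$. Hence $\E[\hat{H}^q_B(\S)\mid\S_1] = I_f^q$, and by Jensen/Cauchy--Schwarz,
\[
\E_{\S}\bigl|\hat{H}^q_B(\S) - I_f^q\bigr| \;\le\; \sqrt{\E_{\S_1}\bigl[\Var(\hat{H}^q_B(\S)\mid\S_1)\bigr]}.
\]

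Next I would bound the conditional variance. Since $\S_2$ consists of $n/2$ i.i.d.\ draws from $\P^{\circ}$, the conditional variance equals $\tfrac{2}{n}\Var_{\bmx\sim\P^\circ}[w(\bmx)(f^q(\bmx)-\hat{f}_{\S_1}^q(\bmx))\mid\S_1]$, which I would bound by the second moment. Using $w\le B$ once to pass from $w^2\ps$ to $w\pt$, and then $\pt=w\ps\le B\bu$ to pass from a $\pt$-integral to a Lebesgue integral, gives
\[
\Var(\hat{H}^q_B(\S)\mid\S_1) \;\le\; \frac{2B^2\bu}{n}\,\bigl\|f^q-\hat{f}_{\S_1}^q\bigr\|_{\mathcal{L}^2(\Omega)}^2.
\]
Thus it remains to control $\E_{\S_1}\|f^q-\hat{f}_{\S_1}^q\|_{\mathcal{L}^2(\Omega)}^2$ in terms of the oracle rate in Assumption \ref{ass:oracle}.

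For this, I would expand via the binomial identity $\hat{f}^q - f^q = \sum_{k=1}^{q}\binom{q}{k} f^{q-k}(\hat{f}-f)^k$ and apply Minkowski followed by H\"older with conjugate exponents depending on $k$:
\[
\bigl\|f^{q-k}(\hat{f}-f)^k\bigr\|_{\mathcal{L}^2} \;\le\; \|f\|_{\mathcal{L}^{\alpha_k}}^{q-k}\,\|\hat{f}-f\|_{\mathcal{L}^{2k}}^{k},
\]
choosing $\alpha_k$ so that the first factor is finite via the Sobolev embedding $\W^{s,p}\hookrightarrow\mathcal{L}^{\alpha_k}$ (or $\mathcal{L}^\infty$ when $sp>d$), and then inserting the rate $(\E\|\hat{f}-f\|_{\mathcal{L}^{2k}}^{2k})^{1/(2k)}\lesssim\bu\,n^{-s/d + (1/p-1/(2k))_+}$ from Assumption \ref{ass:oracle}. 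Combining with the $\sqrt{2B^2\bu/n}$ prefactor yields a finite sum of candidate rates; the two relevant extreme terms are $k=1$, which contributes the ``influence-function'' rate $n^{-1/2-s/d}$, and $k=q$, which contributes the ``propagation'' rate $n^{-q(s/d-1/p)-1}$.

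The main obstacle will be the case analysis in this final step: one has to verify that intermediate values of $k$ never beat the two extremes, which requires checking a family of inequalities in the exponent parametrized by $k$ and by the regime $s \gtrless d(2q-p)/(p(2q-2))$, and one has to handle the borderline case $sp=d$ of Sobolev embedding with care so that $\|f\|_{\mathcal{L}^{\alpha_k}}$ remains $\mathcal{O}(1)$. The assumption $p\ge 2$, $q\le p\le 2q$, and $s/d>1/p-1/(2q)$ is exactly what is needed to keep all the H\"older exponents admissible and to ensure that the oracle rate in \eqref{eq:oracle_rate} applies at $r=2k$ for each $k\in\{1,\dots,q\}$, so once the exponent arithmetic is completed the maximum of the two extreme rates gives the claimed bound.
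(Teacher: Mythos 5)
Your overall skeleton matches the paper's: conditional unbiasedness of $\hat{H}^q_B$ given $\S_1$, reduction of the $L^1$ error to $\sqrt{\E_{\S_1}[\Var(\hat{H}^q_B\mid\S_1)]}$, the bound $\Var(\hat{H}^q_B\mid\S_1)\lesssim \tfrac{B^2\bu}{n}\|f^q-\hat f^q_{\S_1}\|^2_{\mathcal L^2(\Omega)}$, and then a function-approximation estimate. Where you diverge is the last step: the paper writes $\hat f^q-f^q=\int_0^{g}q(f+y)^{q-1}\,dy$ with $g=\hat f_{\S_1}-f$ and uses Cauchy--Schwarz plus $|f+y|\lesssim\max\{|f|,|g|\}$ to land directly on the two terms $\int|g|^{2q}$ and $\int|g|^2|f|^{2q-2}$, whereas you use the binomial expansion with $q$ terms. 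That is a legitimate alternative, since the intermediate terms $\int|f|^{2(q-k)}|g|^{2k}$ interpolate (by H\"older with exponents $\tfrac{q-1}{q-k}$ and $\tfrac{q-1}{k-1}$) between the two extremes; so the two decompositions are ultimately equivalent.

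There is, however, a genuine gap in the exponent bookkeeping. First, the displayed H\"older inequality
$\|f^{q-k}(\hat f-f)^k\|_{\mathcal L^2}\le\|f\|_{\mathcal L^{\alpha_k}}^{q-k}\|\hat f-f\|_{\mathcal L^{2k}}^{k}$
is only a H\"older inequality when $\alpha_k=\infty$ (the exponents $\tfrac12=\tfrac{q-k}{\alpha_k}+\tfrac12$ force $\alpha_k=\infty$); with a finite Sobolev exponent $\alpha_k<\infty$, the conjugate exponent on $\hat f-f$ is necessarily \emph{larger} than $2k$, not equal to it. Second, and more substantively, your closing claim that the stated assumptions guarantee the oracle rate \eqref{eq:oracle_rate} is applicable at $r=2k$ for every $k\in\{1,\dots,q\}$ is false. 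Assumption~\ref{ass:oracle} requires $\tfrac1r\le\max\{\tfrac1p,\I\{s>d/p\}\}$, and when $p>2$ and $s\le d/p$ this upper endpoint is $\tfrac1p<\tfrac12$, so $r=2$ (the $k=1$ term) is not admissible. This is precisely why the paper's proof splits into three regimes: for the $\int|g|^2|f|^{2q-2}$ term it takes $r=2$ only when $s>d/p$ (Case I), takes $r=p$ when $s\in(\tfrac{d(2q-p)}{p(2q-2)},\tfrac{d}{p})$ with H\"older exponents $\tfrac p2$ and $\tfrac{p}{p-2}$ (Case II), and takes $r=\tfrac{2p_\diamond}{p_\diamond+2-2q}$ with the corresponding H\"older split when $s\in(\tfrac{d(2q-p)}{2pq},\tfrac{d(2q-p)}{p(2q-2)})$ (Case III). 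Your plan needs the analogous variable choice of $r$ and H\"older conjugates per regime; as written, the exponent arithmetic will not go through in the low-smoothness cases.
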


\textbf{Construction Procedure.} We construct our moment estimator in a two-stage manner. First, we divide the sample $\S$ into two parts $\S_1 = \left\{ (\bmx_i, y_i) \right\}_{i=1}^{\frac{n}{2}}$ and $\S_2= \left\{ (\bmx_i, y_i) \right\}_{i=\frac{n}{2}+1}^{n}$. Using a machine learning algorithm, we compute a nonparametric estimation $\hat{f}_{\S_1}$ of $f$ based on $\S_1$. In the second stage, we use $\S_2$ for calibration based on the following equation:
\begin{align}
    \E_{\bmX \sim \P^{\ast}}[f^q(\bmX)] &= \E_{\bmX \sim \P^{\ast}}[\hat{f}_{\S_1}^q(\bmX)] + \E_{\bmX \sim \P^{\ast}}[f^q(\bmX)-\hat{f}_{\S_1}^q(\bmX)]  \nonumber \\
    & = \E_{\bmX \sim \P^{\ast}}[\hat{f}_{\S_1}^q(\bmX)] + \E_{X \sim \P^{\circ}}[w(\bmX)\cdot (f^q(\bmX)-\hat{f}_{\S_1}^q(\bmX))]. \label{eq:decompose2}
\end{align}
Specifically, we estimate residual $\E_{\bmX \sim \P^{\ast}}[f^q(\bmX)]-\E_{\bmX \sim \P^{\ast}}[\hat{f}_{\S_1}^q(\bmX)]$, \emph{i.e.}, the second term in \eqref{eq:decompose2} using samples from $\S_2$. This is given by $ \frac{2}{n}\sum_{(\bmx_i,y_i) \in \S_2} w(\bmx_i) \cdot (y_i^q - \hat{f}_{\S_1}^q(\bmx_i))$. Since we assume that $\pt$ is known, the first term in \eqref{eq:decompose2} can be easily estimated. Finally, by combining these steps, we obtain our moment estimator, as summarized in Algorithm \ref{alg:optimal-alg}:
\begin{equation*}
    \hat{H}^q_B (\S) := \E_{\bmX \sim \P^{\ast}}[\hat{f}_{\S_1}^q(\bmX)] + \frac{2}{n}\sum_{(\bmx_i,y_i) \in \S_2} w(\bmx_i) \cdot (y_i^q - \hat{f}_{\S_1}^q(\bmx_i)).
\end{equation*}

\begin{algorithm}[t]
\caption{Minimax Optimal Algorithm for Known Source and Target Distributions}\label{alg:optimal-alg}
\begin{algorithmic}[1]
    \State \textbf{Input:} labeled data: $\S = \{(\bmx_i, y_i=f(\bmx_i))\}_{i=1}^n$.
    \State Randomly split $\S$ into two parts $\S_1 = \left\{ (\bmx_i, y_i) \right\}_{i=1}^{\frac{n}{2}}$ and $\S_2= \left\{ (\bmx_i, y_i) \right\}_{i=\frac{n}{2}+1}^{n}$.
    \State Train a machine learning model on $\S_1$ to obtain $\hat{f}_{S_1}(\bmx)$.
    \State Choose a sampling method based on $\pt(\bmx)$ to compute $\E_{\bmX \sim \P^{\ast}}[\hat{f}_{\S_1}^q(\bmX)]$.
    \State Use $\S_2$ to estimate the $I_f^q$ based on \eqref{eq:estimator1}.
    \State \textbf{Output:} $q$-th moment under target: $I_f^q$.
\end{algorithmic}
\end{algorithm}

Note that the $\hat{f}_{\S_1}$ in \eqref{eq:estimator1} can be constructed using various method in practice \citep{kunzel2019metalearners,wager2018estimation}. Theorem \ref{thm:upper-bound-1}  shows that if $\hat{f}_{\S_1}$ is optimal, \emph{i.e.}, it satisfies Assumption \ref{ass:oracle} above, then the estimator $\hat{H}^q_B$ achieves the minimax lower bound up to a logarithmic factor.


\subsection{\texorpdfstring{$B$-independent bound}{B-independent bound}} \label{subsec:csindependent}
The bound derived in Theorem \ref{thm:upper-bound-1} depends on the intensity of the covariate shift.  In some scenarios, the term $B \geq \sup_{\bmx \in \Omega} w(\bmx)$, which measures this shift, can become exceedingly large even when the actual difference between $\pt$ and $\ps$ is relatively minor. This situation leads to the issue where large values of the weighting function $w(\bmx)$  in the estimator $\hat{H}^q_B$ introduce significant variance, despite the estimator maintaining its unbiased nature. High variance is problematic because it overshadows the benefits of unbiasedness, resulting in less reliable estimations. Consequently, it becomes crucial to balance bias and variance to achieve more stable estimators. To address this challenge, we propose using a truncated version of the estimator $\hat{H}_B^q$. The truncated estimator is defined as follows:
\begin{equation}
\label{eq:truncated-estimator}
    \hat{H}^q_T (\S) := \E_{\bmX \sim \P^{\ast}}[\hat{f}_{\S_1}^q(\bmX)] + \frac{2}{n}\sum_{(\bmx_i,y_i) \in \S_2} \tau_T(w(\bmx_i)) \cdot (y_i^q - \hat{f}_{\S_1}^q(\bmx_i)),
\end{equation}
where $\tau_T(\cdot):= \min\{\cdot,T\}$ is the truncation function, which limits the influence of large likelihood ratios by truncating values of $w(\bmx)$  that exceed a predefined threshold $T>0$. 

\begin{restatable}[$B$-independent Bound]{theorem}{Bindbound}
\label{thm:upper-bound-2}
    Denote $\Omega^-_T := \left\{x: w(\bmx) > T \right\}$ as the area where the likelihood ratio exceeds the threshold $T$, and let $\P(\Omega^-_T) = g(T)$ denote the probability of this event. Under the conditions in Theorem \ref{thm:upper-bound-1}, the upper bound for the variance and bias of the truncated estimator \ref{eq:truncated-estimator} are given as follows:
    \begin{align*}
        & Bias = \left| \E (\hat{H}_T^q) - I_f^q \right| \leq \bu \cdot \P(\Omega^-_T), \\
        & Variance = \E \left[\hat{H}_T^q - \E (\hat{H}_T^q) \right]^2 \leq \bu^2 \cdot T^2 \cdot r(n)^2 + \bu^2 \cdot \P(\Omega^-_T)^2.
    \end{align*}
    If we further assume that $g(T) \leq T^{-\alpha}$, and pick $T=\mathcal{O}(r(n)^{-\frac{1}{\alpha+1}})$\footnotemark, then we obtain the following $B$-independent bound:
     \begin{equation}
     \label{eq:trunacted_upper}
         \mathbb{E}_{\S}\left[\left|\hat{H}_T^q(\S)-I_f^q\right|\right] \lesssim \bu \cdot r(n)^{\frac{\alpha}{\alpha+1}},
     \end{equation}
     where $r(n) = n^{\max\{-q(\frac{s}{d}-\frac{1}{p})-1,-\frac{1}{2}-\frac{s}{d}\}}$ is the same order in Theorem \ref{thm:upper-bound-1}.
\end{restatable}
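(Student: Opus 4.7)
The plan is to establish the three inequalities---bias, variance, and the final rate---by adapting the proof of Theorem~\ref{thm:upper-bound-1}: substitute $w$ with $\tau_T(w)$ throughout, and carefully account for the additional terms that arise on the truncation set $\Omega_T^-=\{\bmx:w(\bmx)>T\}$.

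For the bias, the only gap between $\hat{H}_T^q$ and the unbiased estimator $\hat{H}_B^q$ of Theorem~\ref{thm:upper-bound-1} is the replacement $w\mapsto \tau_T(w)$, so conditioning on $\S_1$ yields
\[
\E[\hat{H}_T^q\mid \S_1]-I_f^q=\E_{\bmX\sim \P^{\circ}}\bigl[(\tau_T(w(\bmX))-w(\bmX))(f^q(\bmX)-\hat{f}_{\S_1}^q(\bmX))\bigr].
\]
The integrand vanishes off $\Omega_T^-$; on $\Omega_T^-$ we have $|\tau_T(w)-w|=w-T\le w$. Invoking the Sobolev embedding (valid under the smoothness condition $s>d/p$ implied by Theorem~\ref{thm:upper-bound-1}) to bound $f^q-\hat{f}_{\S_1}^q$ in $L^{\infty}$ and using $\ps\le \bu$, I would arrive at the stated bias estimate $\bu\cdot \P(\Omega_T^-)$.

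For the variance, I would use the total variance decomposition $\Var(\hat{H}_T^q)=\E[\Var(\hat{H}_T^q\mid \S_1)]+\Var(\E[\hat{H}_T^q\mid \S_1])$. Because $\tau_T(w)\le T$, each summand in the $\S_2$-average has conditional variance at most $T^2\,\E_{\bmX\sim \P^{\circ}}[(f^q(\bmX)-\hat{f}_{\S_1}^q(\bmX))^2]$; applying the semi-parametric decomposition $f^q-\hat{f}^q=q\,f^{q-1}(f-\hat{f})+(\text{remainder})$ from the proof of Theorem~\ref{thm:upper-bound-1}, together with Assumption~\ref{ass:oracle} at the adaptively chosen integrability index $r$, then gives the first contribution $\bu^2 T^2 r(n)^2$ (the $1/n$ factor from averaging over $n/2$ samples is already absorbed into $r(n)$). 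The conditional-mean term, which reflects the first-stage estimator's residual bias through the truncated correction, is controlled by the probability of $\Omega_T^-$ and contributes $\bu^2\P(\Omega_T^-)^2$.

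Finally, combining the bias and variance bounds through $\E|\hat{H}_T^q-I_f^q|\le |\E\hat{H}_T^q - I_f^q|+\sqrt{\Var(\hat{H}_T^q)}$ and substituting $g(T)\le T^{-\alpha}$ yields an upper bound of the form $\bu\,(T^{-\alpha}+T\cdot r(n))$. The two terms balance when $T^{\alpha+1}\asymp r(n)^{-1}$, that is, $T=\mathcal{O}(r(n)^{-1/(\alpha+1)})$, at which point both equal $r(n)^{\alpha/(\alpha+1)}$, giving \eqref{eq:trunacted_upper}. The delicate step is the variance bound: the naive inequality $(f^q-\hat{f}^q)^2\lesssim (|f|^{q-1}+|\hat{f}|^{q-1})^2(f-\hat{f})^2$ loses the cancellation required to obtain the $-1/2-s/d$ regime of $r(n)$, so one must rerun the regime-split argument of Theorem~\ref{thm:upper-bound-1} under truncation and carefully choose $r$ in Assumption~\ref{ass:oracle} to treat both regimes uniformly.
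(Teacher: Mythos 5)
Your overall strategy — split into bias and variance, condition on $\S_1$, use the law of total variance, bound the conditional variance by $T^2$ times the second moment of $f^q-\hat f^q$ and bound the remaining terms by powers of $\P(\Omega_T^-)$, then balance $T^{-\alpha}$ against $T\cdot r(n)$ — is exactly the paper's proof, and the optimization over $T$ is identical. Two substantive remarks. First, your bias argument invokes the Sobolev embedding ``valid under the smoothness condition $s>d/p$ implied by Theorem~\ref{thm:upper-bound-1},'' but that condition is \emph{not} implied: Theorem~\ref{thm:upper-bound-1} only requires $\frac sd>\frac1p-\frac1{2q}$, and the proof of that theorem explicitly covers the regimes $s\in\bigl(\frac{d(2q-p)}{2pq},\frac dp\bigr)$ where $f$ lies in $\mathcal L^{p_\diamond}$ for finite $p_\diamond$, not in $L^\infty$. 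The paper's own bias step is terse at exactly this point (it writes $\int_{\Omega_T^-}\pt\,(f^q-\hat f^q)\,d\bmx\lesssim\P(\Omega_T^-)$ ``since $f,\hat f\in\mathcal L^q(\P^\ast)$''), and a clean justification in the low-smoothness regime would use H\"older with the correct $\mathcal L^{p_\diamond/q}$ exponent; your version simply asserts the wrong hypothesis. Second, your worry that ``the naive inequality $(f^q-\hat f^q)^2\lesssim(|f|^{q-1}+|\hat f|^{q-1})^2(f-\hat f)^2$ loses the cancellation'' is a red herring: the paper does not use that bound but rather the fundamental-theorem-of-calculus representation $f^q-\hat f^q=\int_0^{\g}q(f+y)^{q-1}dy$ followed by H\"older, which splits cleanly into the $(f-\hat f)^{2q}$ and $(f-\hat f)^2 f^{2q-2}$ terms driving the two regimes of $r(n)$; no cancellation is being exploited. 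Since the first-term variance bound of Theorem~\ref{thm:upper-bound-2} factors out $T^2$ and then reuses \emph{verbatim} the bound on $\E[(f^q-\hat f^q)^2]$ from Theorem~\ref{thm:upper-bound-1}'s proof, there is nothing new to rerun; only the $\tau_T$ gives the extra $T^2$ and the $\Omega_T^-$ contributions.
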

\footnotetext{\textcolor{black}{The term depends on the parameters defining the class $\mathcal{V}$, which brings insight about the relation between the choice of $T$ and the smoothness of function, intensity of covariate shift. Unfortunately, they are unknown in practice, we can only determine the appropriate truncation point through a grid search method.}}
Compared to the unbiased estimator $\hat{H}^q_B$ in Theorem \ref{thm:upper-bound-1}, introducing the threshold $T$ adds a bias term when $T < B$. The threshold $T$ controls the trade-off between bias and variance: increasing $T$ reduces bias, while decreasing $T$ reduces variance. By carefully selecting the threshold $T$,  we can mitigate the variance introduced by extreme likelihood ratios, thus achieving an optimal $B$-independent bound. However, eliminating the influence of $B$ comes at the cost of changing $r(n)$ to $r(n)^{\frac{\alpha}{\alpha+1}}$. This result is especially beneficial when $B$ is large, as it helps mitigate high variance, improving the stability of the algorithm. Even when $B$ is not large, the estimator $\hat{w}(\bmx)$ may still be large, potentially introducing additional variance. To address this, we apply the truncated estimator $\hat{H}^q_T$, which we will discuss in Section \ref{sec:unknown}.
\textcolor{black}{
\begin{remark}
    The additional assumption $g(T)\leq T^{-\alpha}$ reduces the size of the class $\mathcal{V}$, potentially making the minimax lower bound in \ref{eq:lowerbound} less tight. Consequently, the upper bound in \ref{eq:trunacted_upper} may be better than the minimax lower bound \ref{eq:lowerbound} when $B$ is large. A more detailed discussion of this assumption is provided in Appendix \ref{app:truncation_ass}.
\end{remark}
 }
\section{Stabilized Algorithm for Unknown Likelihood Ratios} \label{sec:unknown}
In Section \ref{sec:minimax}, we discuss the importance of the likelihood ratio $w(\bmx)$ in debiasing estimators. However, the related discussion is limited to the ideal scenario where both the source p.d.f. $\ps(\bmx)$ and the target p.d.f. $\pt(\bmx)$ are known, allowing direct computation of $w(\bmx)$. In practice, these distributions are usually unknown, making the calculation of the likelihood ratio challenging.

Fortunately, we can still gather a series of unlabeled data from the target domain when both $\ps(\bmx)$ and $\pt(\bmx)$ are unknown. Specifically, we have access to $n$ labeled samples from the source distribution, denoted as $\S = \{(\bmx_i, y_i=f(\bmx_i))\}_{i=1}^n$. Additionally, we have $m~(\gg n)$ unlabeled samples from the target distribution, represented by  $\S^{\prime} = \{\bmx_i^{\prime}\}_{i=1}^m$. 

There are various methods available for estimating the likelihood ratio $w(\bmx)$ \citep{yu2012analysis,bickel2007discriminative,kpotufe2017lipschitz}. For example, we employ a classification-based approach to obtain the estimator $\hat{w}(\bmx)$, with further details provided in Appendix \ref{app:estimate_weight}. However, it is inevitable that the estimation procedure introduces large values of $\hat{w}(\bmx)$, even when the true value $w(\bmx)$ is relatively small. To address this issue, we adopt the truncated version of the moment estimator from Section \ref{subsec:csindependent}. Specifically, we  replace the unknown weight function $w(\bmx)$ in the estimator  \ref{eq:truncated-estimator}  with $\hat{w}(\bmx)$. Furthermore, since the target distribution $\pt(\bmx)$ is unknown, we cannot directly compute the first term $\E_{\bmX\sim\P^{\ast}}[\hat{f}_{\S_1}^q(\bmX)]$. To address this, we use unlabeled data to approximate it. The modified estimator is given by \eqref{eq:estimat_stable_estimator} and the entire  procedure is summarized in Algorithm \ref{alg:stabilized_algo}:
\begin{equation}
    \label{eq:estimat_stable_estimator}
    \tilde{H}^q_T(\S \cup \S^{\prime}) := \frac{1}{m} \sum_{i=1}^m \hat{f}_{\S_1}^q(\bmx_i^{\prime}) + \frac{2}{n} \sum_{(\bmx_i, y_i) \in \S_2} \tau_T(\hat{w}(\bmx_i)) \cdot (y_i^q - \hat{f}_{\S_1}^q(\bmx_i)).
\end{equation}


\subsection{Convergence Rate and Double Robustness}
In this subsection, we first examine the convergence rate of the estimator \ref{eq:estimat_stable_estimator} under the same assumptions in Theorem \ref{thm:upper-bound-2}. Additionally, we demonstrate the double robustness of the estimator \ref{eq:estimat_stable_estimator}.

The difference between estimators \ref{eq:truncated-estimator} and \ref{eq:estimat_stable_estimator} lies in the need to estimate additional quantities when $\ps(\bmx)$ and $\pt(\bmx)$ are unknown.  The following theorem shows how these unknown quantities affect the convergence rate of the estimator \ref{eq:estimat_stable_estimator}.
\begin{restatable}[Convergence Rate of Plug-in Truncated Estimator]{theorem}{rateofplugin}
\label{thm:plug-in-con}
    Under the same assumptions in Theorem \ref{thm:upper-bound-2}, the convergence rate of the plug-in truncated estimator $\tilde{H}^q_T$ is:
     \begin{equation}
    \E\left[\left|\tilde{H}_T^q(\S \cup \S^{\prime})-I_f^q\right|\right] \lesssim 
    \E \left[ \left| \hat{H}^q_T(\S) - I_f^q \right| \right] + m^{-\frac{1}{2}}.
     \end{equation}
\end{restatable}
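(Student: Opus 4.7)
The plan is to exploit the triangle inequality
$$\bigl|\tilde{H}_T^q(\S\cup\S') - I_f^q\bigr| \;\leq\; \bigl|\tilde{H}_T^q(\S\cup\S') - \hat{H}_T^q(\S)\bigr| \;+\; \bigl|\hat{H}_T^q(\S) - I_f^q\bigr|,$$
so that the second term is immediately controlled by Theorem \ref{thm:upper-bound-2}, and the entire task reduces to showing that $\E\bigl|\tilde{H}_T^q(\S\cup\S') - \hat{H}_T^q(\S)\bigr| \lesssim m^{-1/2}$. The difference of the two estimators splits cleanly into two mutually uncorrelated pieces, each corresponding to one of the two approximations made when passing from $\hat{H}_T^q$ to $\tilde{H}_T^q$:
\begin{align*}
\tilde{H}_T^q - \hat{H}_T^q
&= \underbrace{\Bigl(\tfrac{1}{m}\sum_{i=1}^m \hat{f}_{\S_1}^q(\bmx_i') - \E_{\bmX\sim\P^{\ast}}[\hat{f}_{\S_1}^q(\bmX)]\Bigr)}_{\text{(I): Monte Carlo term}} \\
&\quad + \underbrace{\tfrac{2}{n}\sum_{(\bmx_i,y_i)\in\S_2}\bigl(\tau_T(\hat{w}(\bmx_i))-\tau_T(w(\bmx_i))\bigr)(y_i^q-\hat{f}_{\S_1}^q(\bmx_i))}_{\text{(II): weight-estimation term}}.
\end{align*}

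For (I), I would condition on $\S_1$ so that $\hat{f}_{\S_1}^q$ is deterministic; Assumption \ref{ass:oracle} together with Sobolev embedding (and the constraint $\frac{s}{d}>\frac{1}{p}-\frac{1}{2q}$) controls $\|\hat{f}_{\S_1}^q\|_{L^\infty}$ by a constant depending only on the class $\mathcal{V}$. Because the $\bmx_i'$ are i.i.d.\ from $\P^{\ast}$, the conditional variance of (I) is at most $\|\hat{f}_{\S_1}^q\|_{L^\infty}^2/m$, so Jensen yields $\E|\mathrm{(I)}|\lesssim m^{-1/2}$ after taking expectation over $\S_1$. For (II), the truncation $\tau_T$ is $1$-Lipschitz, so
$$|\mathrm{(II)}| \;\leq\; \tfrac{2}{n}\sum_{(\bmx_i,y_i)\in\S_2}|\hat{w}(\bmx_i)-w(\bmx_i)|\cdot|y_i^q-\hat{f}_{\S_1}^q(\bmx_i)|,$$
and boundedness of the second factor plus the law of large numbers gives $\E|\mathrm{(II)}|\lesssim \E_{\bmX\sim\P^{\circ}}|\hat{w}(\bmx)-w(\bmx)|$. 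Invoking the classification-based construction in Appendix \ref{app:estimate_weight}, which is fit on the $m$ unlabeled samples from $\P^{\ast}$ (plus the covariates of $\S$ from $\P^{\circ}$), this $L^1(\P^{\circ})$ error is of order $m^{-1/2}$ under the stated assumptions.

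The main obstacle is the second piece: making the $L^1$ convergence of $\hat{w}$ to $w$ genuinely achieve the $m^{-1/2}$ rate rather than a slower nonparametric rate. This is where one must pay careful attention to how $\hat{w}$ is constructed; one typically needs either a parametric class for the ratio or sufficiently strong smoothness of $w$ relative to $d$, combined with the condition $m\gg n$ so that the weight-estimation error dominates neither the variance of (II) nor the Monte Carlo error from (I). A secondary subtlety is the independence structure: $\hat{w}$ is trained on $\S'$ and the covariates of $\S$, while (II) is evaluated on $\S_2$, so some care is needed when taking the iterated expectation — conditioning on $\hat{w}$ and then using that $\S_2$ is i.i.d.\ from $\P^{\circ}$ resolves this cleanly. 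Combining the bounds on (I) and (II) with Theorem \ref{thm:upper-bound-2} yields the claimed inequality.
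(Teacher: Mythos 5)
Your triangle-inequality decomposition through the intermediate estimator $\hat{H}^q_T$, the split of $\tilde{H}^q_T-\hat{H}^q_T$ into a Monte Carlo term (I) and a weight-estimation term (II), and your treatment of (I) all coincide with the paper's argument. The divergence is in (II), and there the gap is genuine.

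You bound $|\tau_T(\hat{w}(\bmx_i))-\tau_T(w(\bmx_i))|\le|\hat{w}(\bmx_i)-w(\bmx_i)|$ by $1$-Lipschitzness of $\tau_T$, bound $|y_i^q-\hat{f}^q_{\S_1}(\bmx_i)|$ by a constant, and then require $\E_{\bmX\sim\P^{\circ}}|\hat{w}(\bmX)-w(\bmX)|\lesssim m^{-1/2}$ to close the argument. You yourself flag this last step as the crux, and you are right to: it is not supportable under the theorem's hypotheses. The statement makes no quantitative assumption whatsoever on the quality of $\hat{w}$, and the classification-based construction in Appendix \ref{app:estimate_weight} carries no rate guarantee. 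Since $w$ is a nonparametric object, $\E|\hat{w}-w|$ generically decays more slowly than $m^{-1/2}$; the parametric rate would require a well-specified parametric model for the propensity score, which is nowhere assumed. As written, your proof therefore establishes only a conditional version of the theorem, predicated on an unstated hypothesis on $\hat{w}$, which is a real gap relative to the claimed unconditional bound.

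The paper's handling of (II) goes the other way. It does not invoke Lipschitzness of $\tau_T$ in $\hat{w}-w$ at all; it uses the blunt bound $|\tau_T(\hat{w}(\bmx_i))-\tau_T(w(\bmx_i))|\le T$ (both truncated weights lie in $[0,T]$), which holds with no assumption on $\hat{w}$. This converts (II) into a quantity of the form $T\cdot\E_{\bmX\sim\P^{\circ}}\big|f^q(\bmX)-\hat{f}^q_{\S_1}(\bmX)\big|$, which the paper then folds into the error already controlled in Theorem \ref{thm:upper-bound-2}. The burden of smallness is thereby placed on $\E|f^q-\hat{f}^q_{\S_1}|$, which Assumption \ref{ass:oracle} controls, rather than on $\E|\hat{w}-w|$, about which nothing is assumed. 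That is the move your proposal is missing, and it is exactly what keeps the conclusion free of any hypothesis on the weight estimator.
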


Theorem \ref{thm:plug-in-con} indicates that the only additional cost to the convergence rate for the unknown source and target distributions is $m^{-\frac{1}{2}}$. This term represent the Monte Carlo simulation error for estimating $\E_{\bmX \sim \P^{\ast}}[\hatfX]$ using $m$ unlabeled data points $\S^{\prime}$. The additional error from estimating $w(\bmx)$ is bounded above by the error of the estimator $\hat{H}^q_T$, due to the truncation applied to the estimator $\hat{w}(\bmx)$.

\textbf{Proof Sketch.} The relationship between the convergence rate of  $\hat{H}^q_T$ and $\tilde{H}^q_T$ is established through the following error decomposition:
\begin{align*}
     \E\left[ \left| \tilde{H}^q_T - I_f^q \right| \right] 
    \leq & \underbrace{\E_{\S \cup \S^{\prime}} \left[ \left|  \frac{2}{n} \sum_{(\bmx_i,y_i) \in \S_2} \left(\tau_T(\hat{w}(\bmx_i)) - \tau_T(w(\bmx_i))\right) \cdot (y_i^q - \hat{f}^q_{\S_1}(\bmx_i))  \right| \right]}_{\text{additional error for estimating } w(\bmx)}  + \\
    & \underbrace{\E_{\S \cup \S^{\prime}} \left[ \left| \frac{1}{m} \sum_{i=1}^m\hat{f}^q_{\S_1}(\bmx_i^{\prime}) - \E_{\bmX \sim \P^{\ast}}[\hatfX] \right| \right]}_{\text{Monte Carlo simulation error}}  +  \underbrace{\E_{\S \cup \S^{\prime}} \left[ \left| \hat{H}^q_T - I_f^q \right| \right]}_{\text{error of the case when } w(\bmx) \text{ is known}}. 
\end{align*}
The first term represents the additional error due to estimating the likelihood ratio $w(\bmx)$. Due to truncation, this term is bounded above by $T$ and the estimation error of $\hatfx$, which is further bounded by the third term. The second term accounts for the Monte Carlo simulation error in estimating $\E_{\bmX \sim \P^{\ast}}[\hatfX]$ using $m$ unlabeled data $\S^{\prime}$ form target domain, with a convergence rate $\mathcal{O}(m^{-\frac{1}{2}})$. The third term corresponds to the estimation error of $\hat{H}^q_T$ as discussed in Theorem \ref{thm:upper-bound-2}.

In our approach, two models play a crucial role: one for estimating the likelihood  ratio $w(\bmx)$ and another for estimating the function $f(\bmx)$ itself. So far, assumptions have been made about $\hat{f}_{\S_1}(\bmx)$ but not about $\hat{w}(\bmx)$. If assumptions are instead made only about $\hat{w}(\bmx)$, the conclusion is as follows.

\begin{restatable}
[Double Robustness of Plug-in Truncated Estimator]{corollary}{doublerobust}
\label{cor:double-robust}
    If $\P(\Omega^-_T) \rightarrow 0$ as $n \rightarrow \infty$, then the estimator $\tilde{H}^q_T$ exhibits double robustness. Specifically, this means that as long as either $\hat{w}(\bmx)$ or $\hat{f}_{\S_1}(\bmx)$ is consistent, the moment estimator $\tilde{H}^q_T$ is also consistent.
\end{restatable}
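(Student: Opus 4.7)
The plan is to decompose $\tilde H^q_T - I_f^q$ so that a single cross-term carries the bias, and then exploit the product structure of that cross-term to obtain double robustness. Using the change-of-measure identity
\[
I_f^q = \E_{\bmX \sim \P^{\ast}}\bigl[\hat f_{\S_1}^q(\bmX)\bigr] + \E_{\bmX \sim \P^{\circ}}\bigl[w(\bmX)(f^q(\bmX) - \hat f_{\S_1}^q(\bmX))\bigr],
\]
I would write
\[
\tilde H^q_T - I_f^q = R_m + R_n + \mathcal{B},
\]
where $R_m := \tfrac{1}{m}\sum_{i=1}^m \hat f_{\S_1}^q(\bmx_i') - \E_{\bmX \sim \P^{\ast}}[\hat f_{\S_1}^q(\bmX)]$ is the Monte Carlo error over $\S'$, $R_n$ is the analogous empirical deviation over $\S_2$, and
\[
\mathcal{B} := \E_{\bmX \sim \P^{\circ}}\bigl[(\tau_T(\hat w(\bmX)) - w(\bmX))\,(f^q(\bmX) - \hat f_{\S_1}^q(\bmX))\bigr]
\]
is the conditional bias. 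Conditional on $\S_1$ and $\hat w$, both $R_m$ and $R_n$ have mean zero with variances controlled respectively by $\|\hat f_{\S_1}^q\|_\infty^2/m$ and $T^2\|f^q-\hat f_{\S_1}^q\|_\infty^2/n$ (using $\tau_T(\hat w)\le T$ and Sobolev boundedness of $f$), and therefore vanish in probability as long as $T/\sqrt{n}\to 0$.

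For the bias $\mathcal{B}$, the next step is to invoke the $1$-Lipschitz property of $\tau_T$ to split
\[
|\tau_T(\hat w(\bmx)) - w(\bmx)| \le |\hat w(\bmx) - w(\bmx)| + (w(\bmx) - T)_+,
\]
which gives
\[
|\mathcal{B}| \le \E_{\bmX \sim \P^{\circ}}\bigl[|\hat w - w| \cdot |f^q - \hat f_{\S_1}^q|\bigr] + \|f^q - \hat f_{\S_1}^q\|_\infty\,\E_{\bmX \sim \P^{\circ}}[(w - T)_+].
\]
The tail piece is at most $\P^{\ast}(w>T)$ by the change of measure $\E_{\P^{\circ}}[w\,\mathbb{I}(w>T)] = \P^{\ast}(w>T)$, which vanishes by the hypothesis $\P(\Omega^-_T)\to 0$.

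It then remains to handle the remaining product $\E[|\hat w - w|\cdot|f^q - \hat f_{\S_1}^q|]$ in each consistency regime. If $\hat f_{\S_1}$ is consistent, I would dominate it by $2T\cdot \E|f^q - \hat f_{\S_1}^q|$, which tends to zero whenever the convergence of $\hat f_{\S_1}^q$ outpaces the growth of $T$. If instead $\hat w$ is consistent, I would dominate it by $\|f^q - \hat f_{\S_1}^q\|_\infty\cdot \E|\hat w - w|$, which tends to zero directly. In either case both contributions to $|\mathcal{B}|$ vanish, and together with $R_m,R_n\to 0$ this yields $\tilde H^q_T \to I_f^q$ in probability.

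The main obstacle I anticipate is not the algebra but the bookkeeping: choosing the truncation level $T=T(n)$ that grows fast enough to make $\P^{\ast}(w>T)\to 0$ and yet slowly enough to keep $R_n\to 0$ and $T\cdot \E|f^q - \hat f_{\S_1}^q|\to 0$ in Case~1. Once such a window for $T$ is exhibited, the factorization of $\mathcal{B}$ as (error in $\hat w$) $\times$ (error in $\hat f_{\S_1}$) is precisely the product structure that delivers double robustness: consistency of either factor kills the cross term.
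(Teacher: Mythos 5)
Your decomposition is genuinely different from the paper's and, structurally, cleaner. The paper splits $\tilde H^q_T - I_f^q$ directly into the two observable pieces
\[
\Bigl(\tfrac{2}{n}\textstyle\sum_{\S_2}\tau_T(\hat w)\,y_i^q - I_f^q\Bigr)
\;+\;
\Bigl(\tfrac{1}{m}\textstyle\sum_{\S'}\hat f^q_{\S_1} - \tfrac{2}{n}\textstyle\sum_{\S_2}\tau_T(\hat w)\,\hat f^q_{\S_1}\Bigr),
\]
argues each sum converges in probability to the right limit when $\hat w$ is consistent, and simply points back to Theorem~3 for the case when $\hat f_{\S_1}$ is consistent. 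You instead peel off the zero-mean empirical deviations $R_m, R_n$ (conditionally on the nuisances) and isolate the conditional bias $\mathcal{B} = \E_{\P^\circ}\bigl[(\tau_T(\hat w)-w)(f^q-\hat f^q_{\S_1})\bigr]$, which explicitly exhibits the Neyman-orthogonal \emph{product-of-errors} structure. That makes the ``either one is enough'' mechanism visible in a single line, rather than having to re-argue each term in two separate cases as the paper does. What the paper's route buys is elementary bookkeeping with no appeal to orthogonality; what your route buys is transparency and a direct connection to the standard doubly-robust/DML template, and it also treats both consistency regimes symmetrically rather than deferring one to Theorem~3.

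One small slip to fix: after the Lipschitz split $|\tau_T(\hat w)-w| \le |\tau_T(\hat w)-\tau_T(w)| + (w-T)_+$, the factor to dominate in the $\hat f$-consistent case is $|\tau_T(\hat w)-\tau_T(w)|$, which is bounded by $T$ because both truncated values lie in $[0,T]$ — not $|\hat w - w|$ by $2T$, since $\hat w$ is unbounded before truncation (that unboundedness is the entire reason the truncation exists). Also, both your argument and the paper's gloss over the fact that $\hat w$ in Algorithm~\ref{alg:stabilized_algo} is fit on all of $\S\cup\S'$, so $R_n$ is not exactly conditionally mean-zero given $\S_1$ and $\hat w$; a cross-fitting step (or an assumption that $\hat w$ is trained on independent data) is implicitly needed in either proof. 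Finally, your variance bound on $R_n$ uses an $\mathcal{L}^\infty$ bound on $f^q - \hat f^q_{\S_1}$, which via the Sobolev embedding requires $s>d/p$; in the regime $s\le d/p$ one should instead pass to the $\mathcal{L}^2(\P^\circ)$ bound already supplied by Assumption~\ref{ass:oracle}, as in the proof of Theorem~\ref{thm:upper-bound-1}.
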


Corollary \ref{cor:double-robust} provides guidance on selecting $\hat{w}(\bmx)$ and $\hat{f}_{\S_1}(\bmx)$ to maintain consistency when achieving the optimal estimator of $f$ that satisfies Assumption \ref{ass:oracle} is not feasible in practice. The assumption $\P(\Omega^-_T) \rightarrow 0$ as $n \rightarrow \infty$ indicates that the probability of the truncated region $\Omega^-_T$,  diminishes as the sample size increases, thus ensuring that the influence of extreme values becomes negligible in large samples. It can be achieved if we follow the rule in Theorem \ref{thm:upper-bound-2}.

\begin{figure}[t]
    \centering
    \subfigure[Effect of covariate shift.
    ]{
        \includegraphics[width=0.31\textwidth]{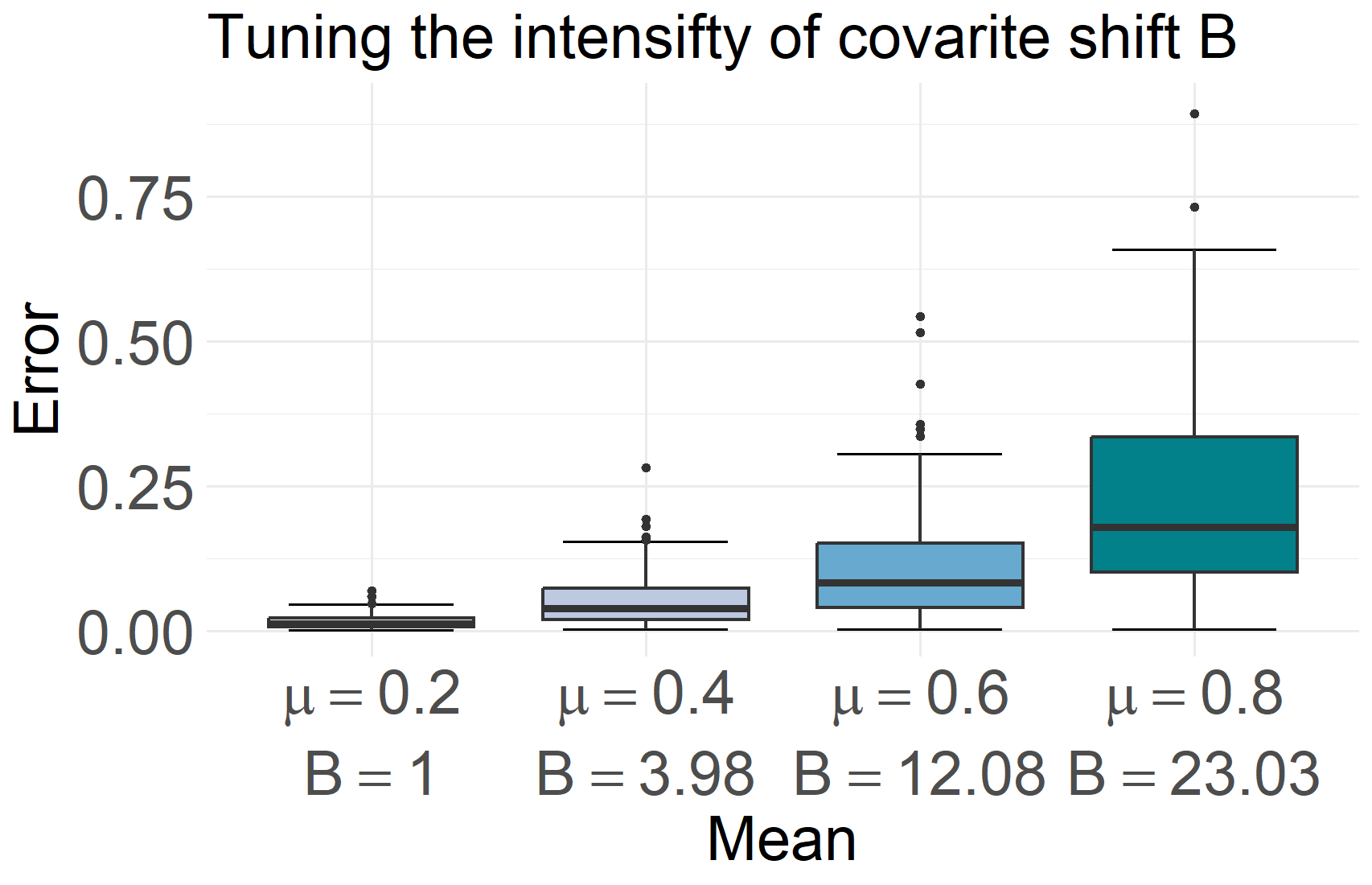}
    }
    \hfill
    \subfigure[Effect of sampling strategy.
    ]{
        \includegraphics[width=0.31\textwidth]{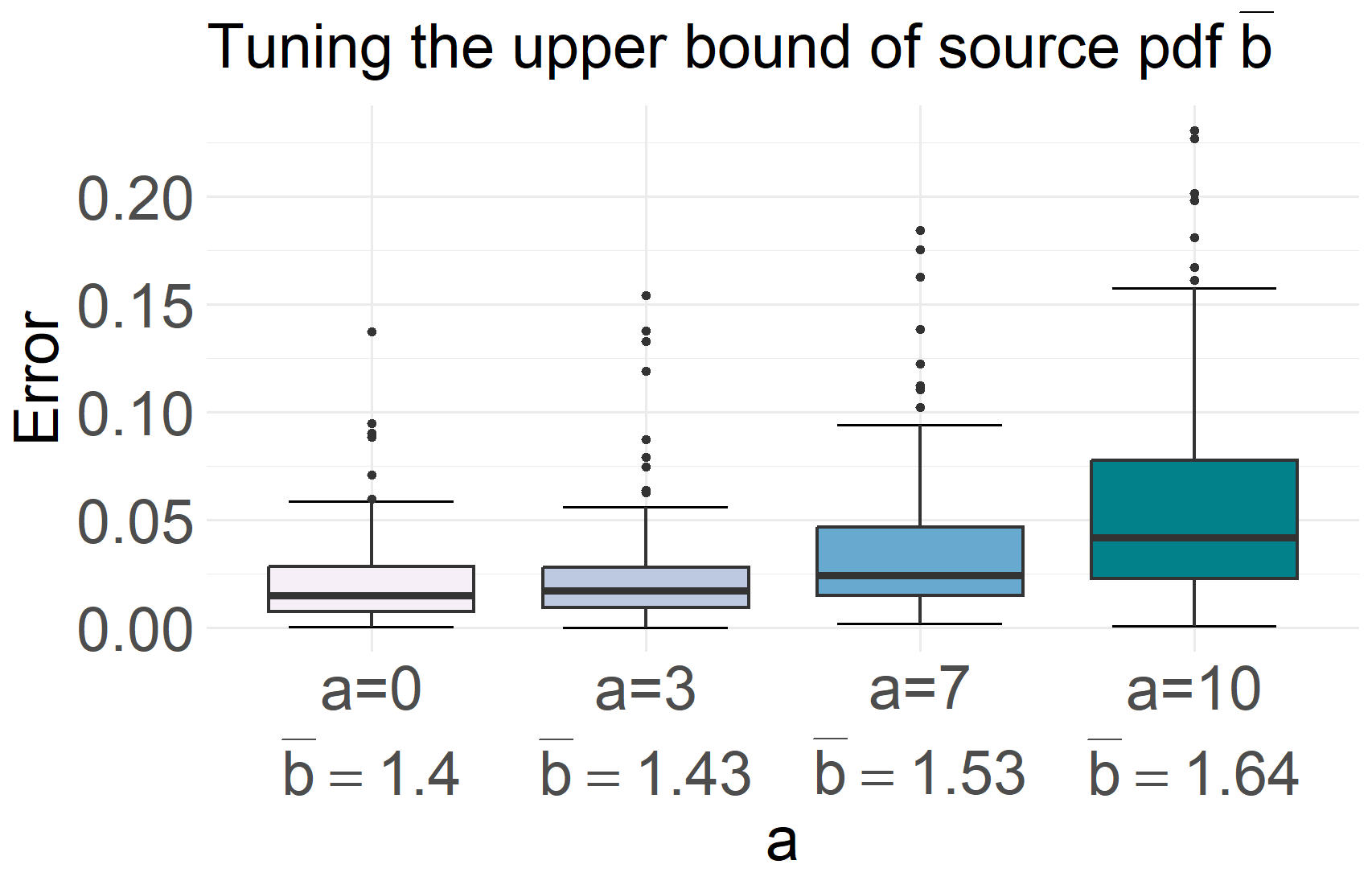}
    }
    \hfill
    \subfigure[Effect of function class.
    ]{
        \includegraphics[width=0.31\textwidth]{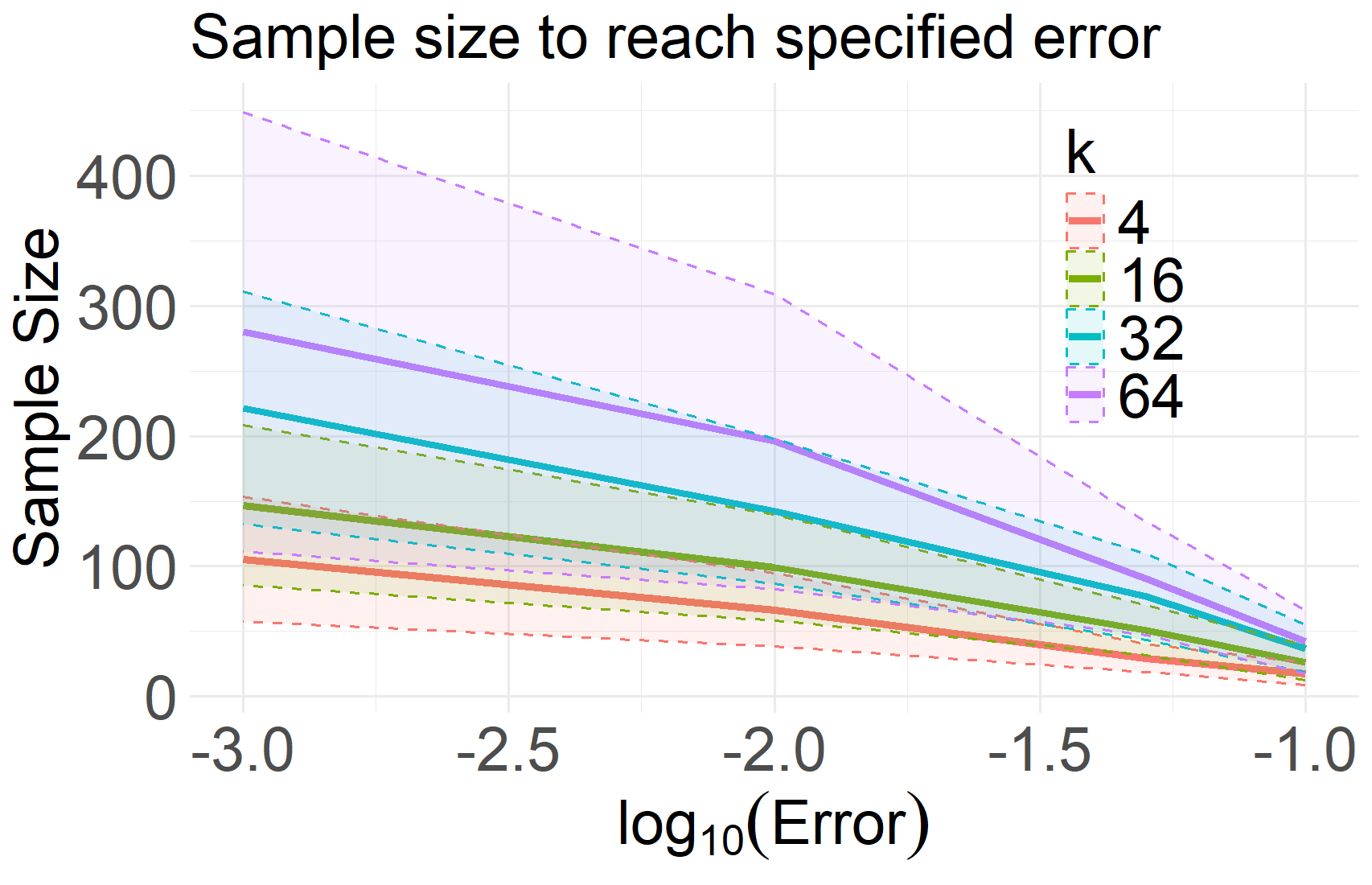}
    }
    \vspace{-0.9em}
    \caption{Impact of covariate shift, sampling strategy, and function class on the error of the estimator, measured by the $L_1$ norm between the true and estimated values. \textbf{(a)} Boxplot of the error  across different levels of covariate shift intensity $B$, which is adjusted by fixing  $\P^{\circ}$ as truncated normal distribution  with a mean of $0.2$ and altering $\P^{\ast}$ be the same truncated normal with a shifted mean $\mu$ to vary $B$ while keeping $\bu$ constant. As $B$ increases, the error increases and the stability of the algorithm decreases. \textbf{(b)} Boxplot of the error  across different levels of the upper bound $\bu$ of the $\P^{\circ}$, with fixed $\P^{\ast}$ as uniform distribution and modifying  $\P^{\circ}$ with a hyperparameter $a$ that vary $\bu$ while keeping $B$ constant. As $\bu$ increases, the error increases and the stability of the algorithm decreases. \textbf{(c)} Required sample size to achieve a specified error for different smoothness levels, controlled by hyperparameter $k$. A larger $k$ indicates less smoothness, resulting in a slower convergence rate.}
    \label{sim:reuslt}
    \vspace{-0.9em}
\end{figure}

\section{Numerical Experiments} \label{sec:simulation}
In this section, we generate synthetic data to illustrate three key factors that influence the problem discussed in Section \ref{subsec:minimaxlb} \textcolor{black}{, as well as the effect of truncation covered in Section \ref{subsec:csindependent}}. We set $q=2$ and the machine learning algorithm employed to learn $\hat{f}_{\S_1}$  is the random forest. Each experiment is repeated 100 times. \textcolor{black}{To highlight the advantages of our proposed two-stage estimator under covariate shift, we compare it against the Monte Carlo estimator and the one-stage estimator on both synthetic data (Figure~\ref{fig:comparison}) and real-world data (Figure~\ref{fig:realdata}) with detailed explanations provided in Appendix \ref{app:real_data}.}

\textbf{Covariate Shift.} We set $\P^{\circ}$ as a truncated normal distribution  on $[0,1]$ with a mean of $0.2$ and a standard deviation of $0.3$, denoted by $\texttt{tnorm}([0,1], 0.2, 0.3)$. To vary the intensity of covariate shift, we consider the target distributions  $\texttt{tnorm}([0,1], \mu, 0.3)$, adjusting the mean parameter $\mu$. We set $f(x) = 1 + x^2 + \frac{1}{5}\sin(16x)$ and select $\mu=0.2, 0.4, 0.6, 0.8$, with the corresponding $B$ values being $1.00, 3.98, 12.08, 23.03$, respectively.

\textbf{Sampling Strategy.} We set $\P^{\ast}$ as uniform distribution over $[0,1]$. To maintain the covariate shift, we consider source distributions with the following probability density functions:
$$
p(x;a) = a x^3 + (-\frac{24}{5}-\frac{3}{2}a)x^2 + (\frac{1}{2}a+\frac{24}{5})x + \frac{1}{5}.
$$
which ensures that $B = 5$. We set $f(x) = 1 + x^2 + \frac{1}{5}\sin(16x)$ and select $a=0,3, 7, 10$, with the corresponding $\bu$ values $1.40, 1.43, 1.53, 1.64$, respectively.

\textbf{Function Class.} We set $\P^{\ast}$ as $\texttt{tnorm}([0,1], 0.6 ,0.6)$ and $\P^{\circ}$ as uniform distribution over $[0,1]$. To examine the impact of the function class, we select functions $f$ from the following class:
$$
f(x; k) = 1 + x^2 + \frac{1}{5} \sin(k\cdot x),
$$
where increasing $k$ leads to a less smooth function. In our analysis, we use $k=4,16,32,64$. To determine the required sample size to achieve specified error levels of $0.1, 0.05, 0.01, 0.001$, we start with a sample size of $10$ and increment it by $10$ until the error falls below $0.001$. For each specified error level, we select the sample size for which the error is closest to the desired value.

The numerical experiments reveal that the error rates increase with the intensity of covariate shift, as illustrated in Figure~\ref{sim:reuslt}(a). When varying the hyperparameter $B$, which controls the mean shift in the source distribution, we observe that larger shifts lead to higher error rates. This pattern is consistent across different settings of the function class and sampling strategies, as depicted in Figures~\ref{sim:reuslt}(b) and (c). These findings underscore the importance of both the degree of covariate shift and the smoothness of the function in determining model performance.

\textcolor{black}{
\textbf{Effect of truncation.}To illustrate how truncation enhances the stability of the two-stage method, we compare the untruncated estimator from \eqref{eq: unw_estimator} with the truncated estimator from \eqref{eq:truncated-estimator} using $T=\frac{1}{2}B$. The source distribution $\P^{\circ}$ is defined as a truncated normal distribution  on $[0,1]$ with mean $0.2$ and standard deviation $0.3$, denoted by $\texttt{tnorm}([0,1], 0.2, 0.3)$. To vary the intensity of covariate shift, we consider the target distributions  $\texttt{tnorm}([0,1], \mu, 0.3)$, adjusting the mean parameter $\mu$. We set $f(x) = 1 + x^2 + \frac{1}{5}\sin(16x)$ and select $\mu=0.4, 0.6, 0.8$, resulting in corresponding $B$ values of 3.98, 12.08, and 23.03, respectively. \\
The results are presented in Figure \ref{fig:truncation}, showing that truncation enhances stability while preserving accuracy, with its effect becoming more significant as $B$ increases.
} 

\begin{figure}[t]
    \centering
\subfigure{
        \includegraphics[width=0.28\textwidth]{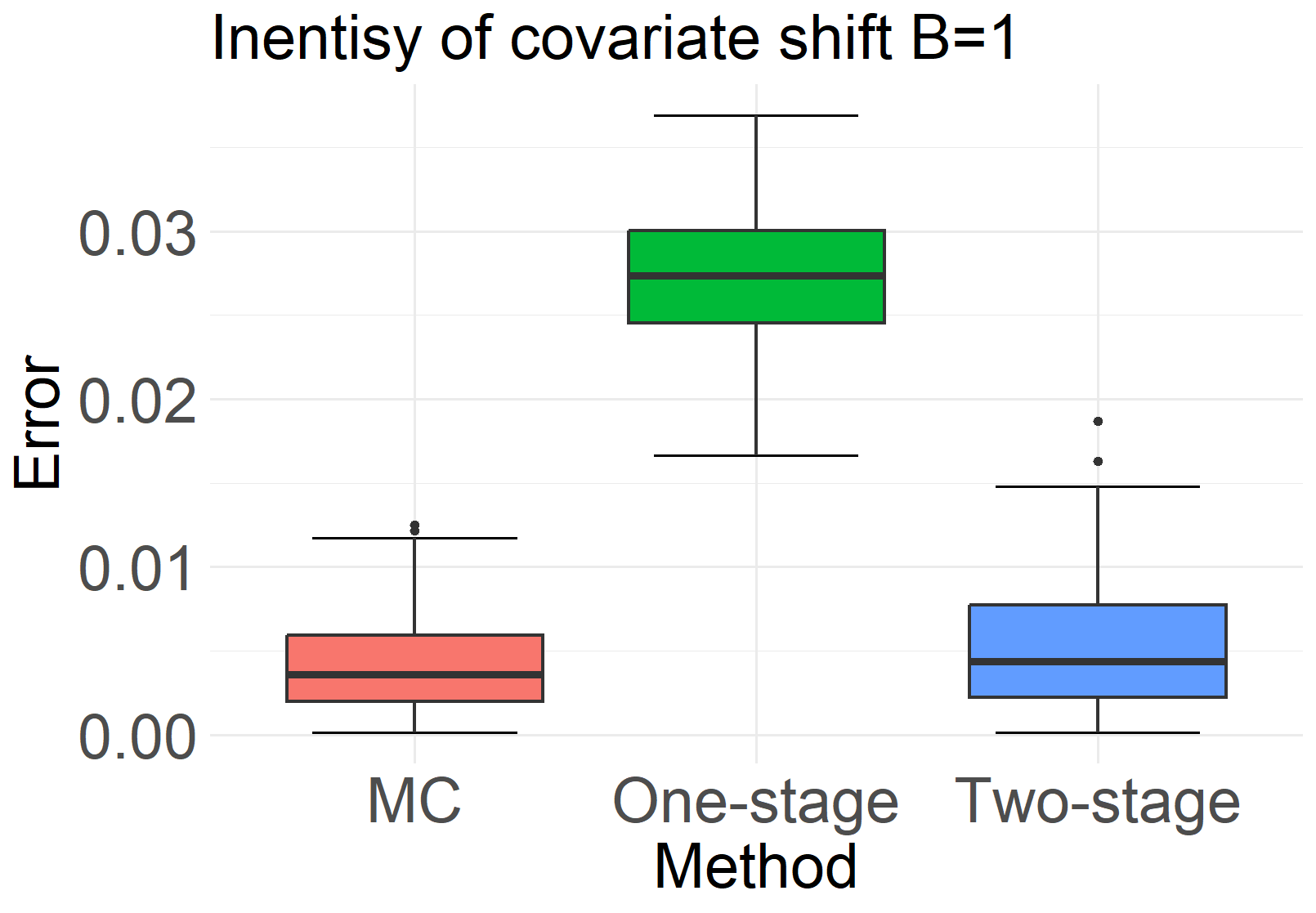}
    }
    \hfill
    \subfigure{
        \includegraphics[width=0.28\textwidth]{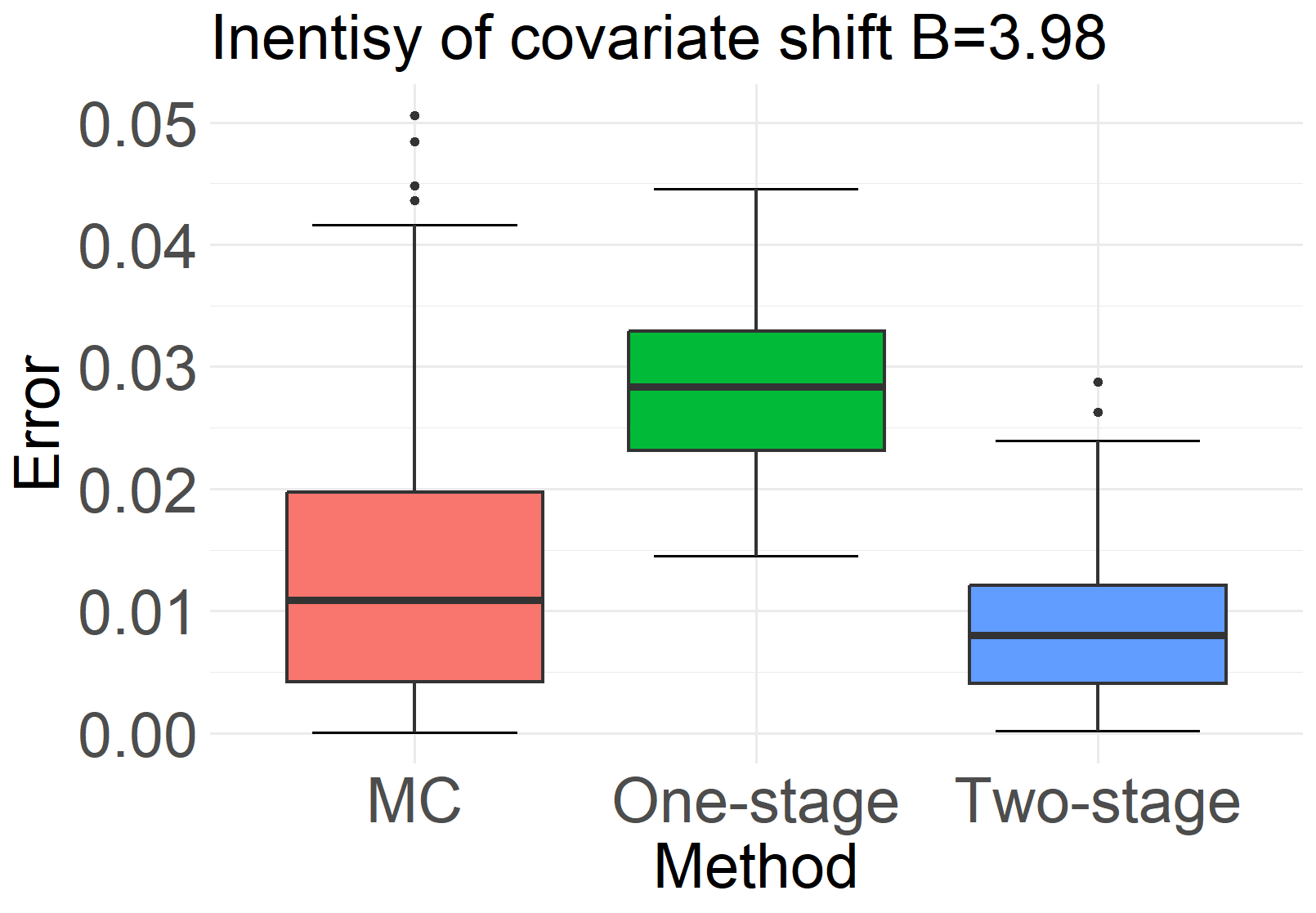}
    }
    \hfill
    \subfigure{
        \includegraphics[width=0.28\textwidth]{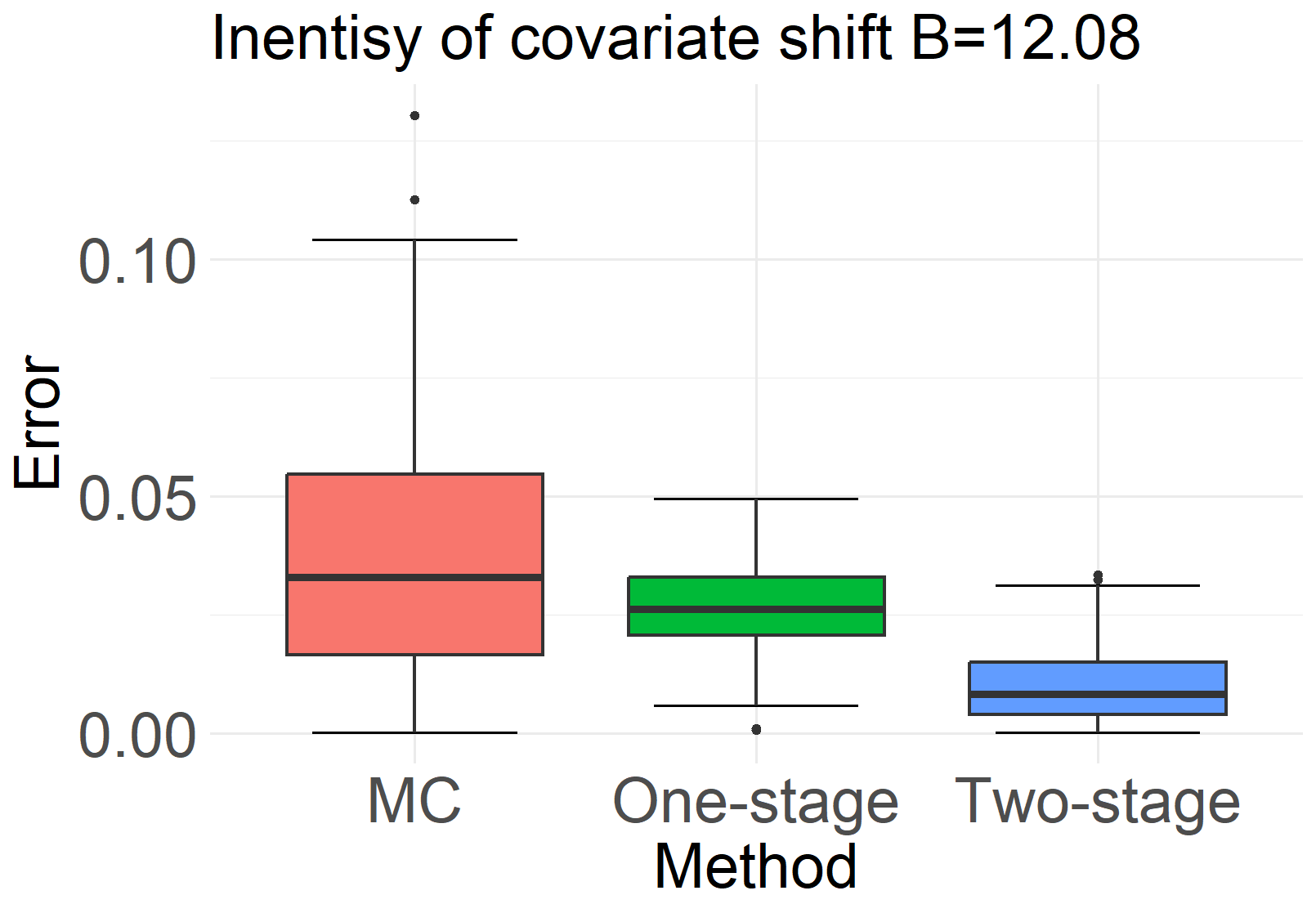}
    }
    \vspace{-0.9em}
    \caption{\textcolor{black}{Comparison of the Monte Carlo estimator, one-stage estimator and two-stage estimator on synthetic data, with performance measured by the $L_1$ norm between true and estimated values. Each subfigure presents boxplots of the errors for different methods across varying levels of covariate shift $B$. The Monte Carlo estimator performs better than the two-stage methods when there is no covariate shift ($B = 1$). However, as $B$ increases, the two-stage estimator demonstrates greater accuracy and improved stability, highlighting its necessity under significant covariate shift.}} 
    \label{fig:comparison}
    \vspace{-0.9em}
\end{figure}

\begin{figure}[t]
    \centering
\subfigure{
        \includegraphics[width=0.28\textwidth]{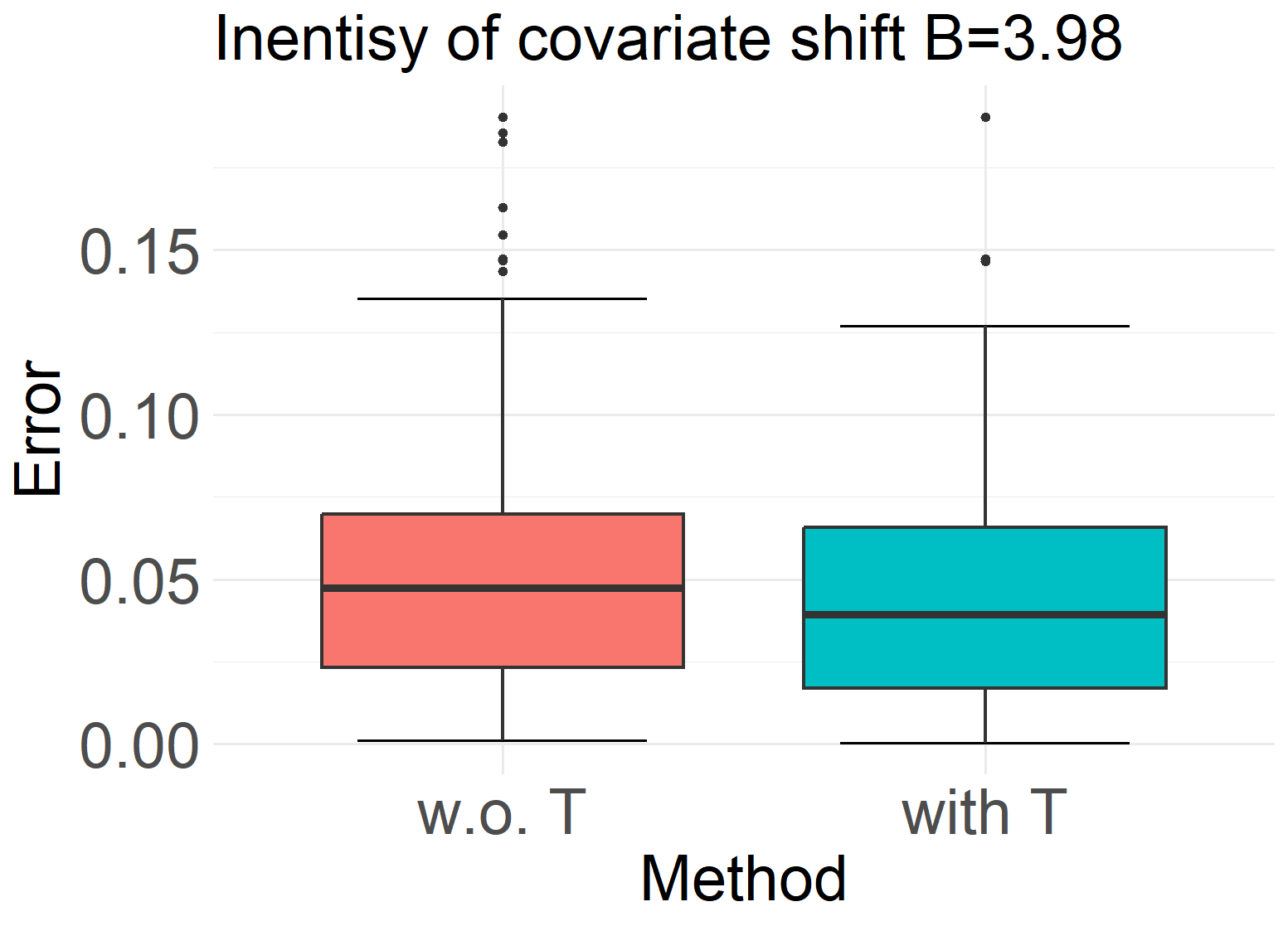}
    }
    \hfill
    \subfigure{
        \includegraphics[width=0.28\textwidth]{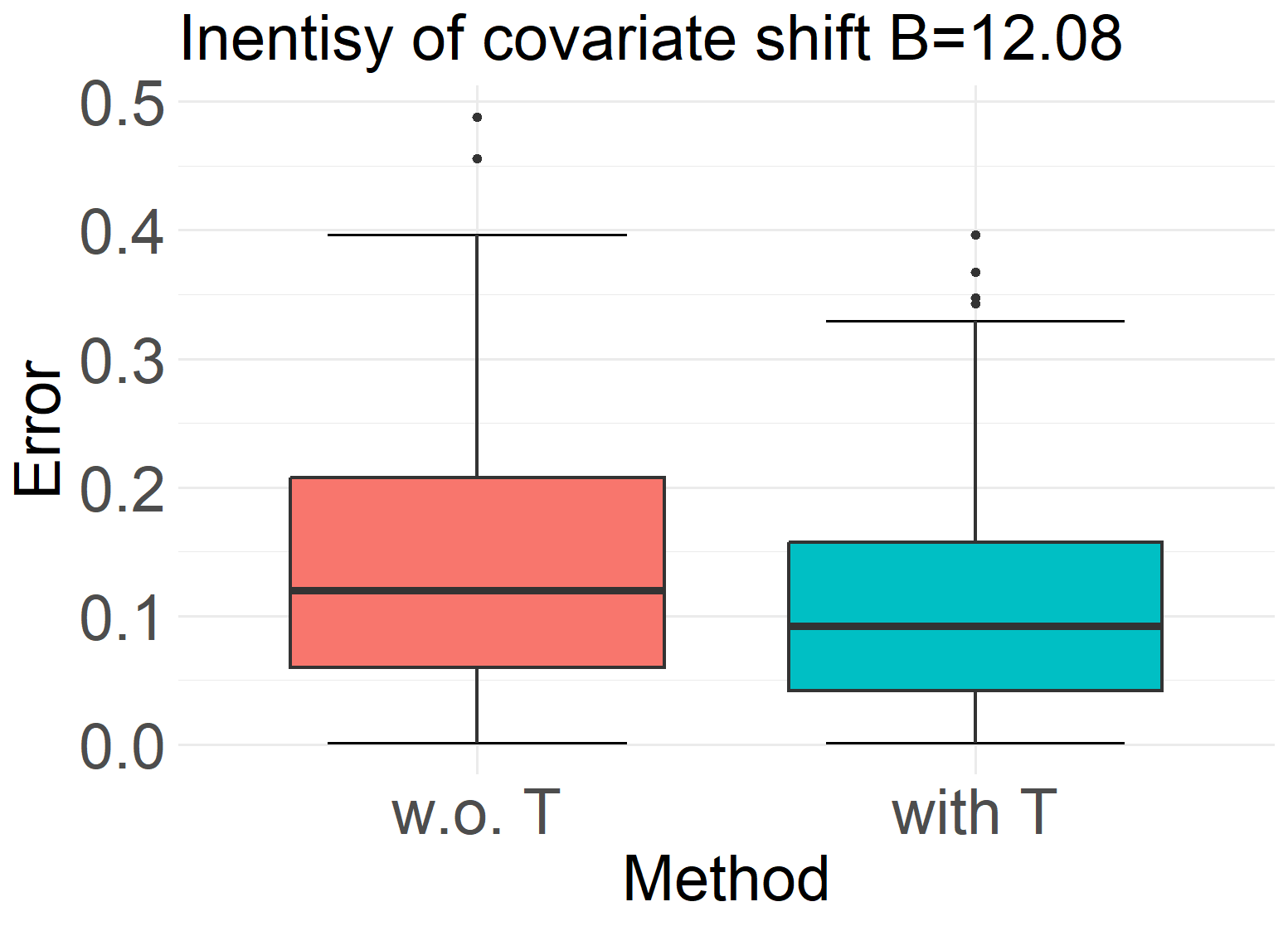}
    }
    \hfill
    \subfigure{
        \includegraphics[width=0.28\textwidth]{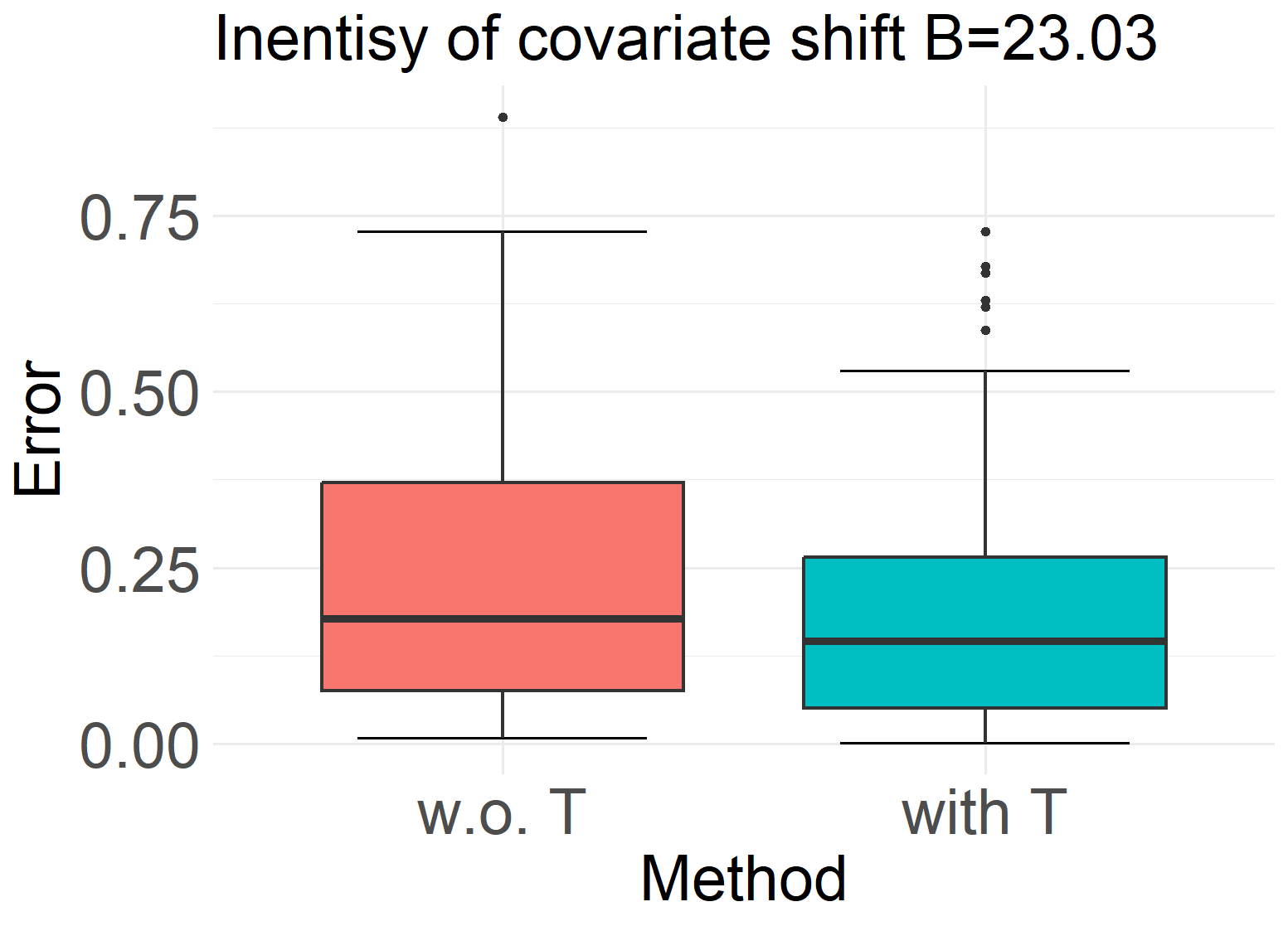}
    }
    \vspace{-0.9em}
    \caption{\textcolor{black}{Effect of truncation. Each subfigure presents boxplots of errors for both untruncated and truncated estimators across different levels of covariate shift $B$. Truncation improves stability while preserving accuracy, with its impact becoming more pronounced as $B$ increases.}} 
    \label{fig:truncation}
    \vspace{-0.9em}
\end{figure}

\section{Conclusion} \label{sec:conclusion}
In this paper, we explore how to calibrate an optimal function estimator to maintain its optimality for moment estimation under covariate shift. Firstly, we establish the minimax lower bound for this problem and develop an optimal two-stage algorithm when both the source and target distributions are known. Our findings indicate that the convergence rate is influenced by three factors: the intensity of covariate shift, the sampling strategy, and the properties of the function class. Furthermore, we propose a truncated version of the estimator that ensures double robustness and provide the corresponding upper bound when the source and target distributions are unknown. This is important in practice because estimating the likelihood ratio can often be unstable.

Nonetheless, several potential extensions warrant exploration. This paper utilizes the uniformly $B$-bounded class; however, alternative choices, such as the moment-bounded class, are also feasible. Additionally, our assumptions, including the constraints on the parameters $p$ and $q$ and the bounded source p.d.f., may be extremely restrictive for some application scenarios. We leave the relaxation of these assumptions for future work. Furthermore, our analysis focuses on moment functionals. A potential direction for future research is to explore more complex functionals to investigate whether the optimality still holds given the optimal function estimator under covariate shift.

\subsubsection*{Acknowledgments}
This work is supported by Shanghai Science and Technology Development Funds 24YF2711700 and Fundamental Research Funds for the Central Universities 2024110586. Xin Liu is partially supported by the National Natural Science Foundation of China 12201383, Shanghai Pujiang Program 21PJC056, and Innovative Research Team of Shanghai University of Finance and Economics. Shaoli Wang is partially supported by the National Natural Science Foundation of China NSFC72271148.

\bibliography{main}

\begin{thebibliography}{42}
\providecommand{\natexlab}[1]{#1}
\providecommand{\url}[1]{\texttt{#1}}
\expandafter\ifx\csname urlstyle\endcsname\relax
  \providecommand{\doi}[1]{doi: #1}\else
  \providecommand{\doi}{doi: \begingroup \urlstyle{rm}\Url}\fi

\bibitem[Audibert \& Tsybakov(2007)Audibert and Tsybakov]{audibert2007fast}
Jean-Yves Audibert and Alexandre~B Tsybakov.
\newblock Fast learning rates for plug-in classifiers.
\newblock \emph{The Annals of Statistics}, 2007.

\bibitem[Bickel et~al.(2007)Bickel, Br{\"u}ckner, and Scheffer]{bickel2007discriminative}
Steffen Bickel, Michael Br{\"u}ckner, and Tobias Scheffer.
\newblock Discriminative learning for differing training and test distributions.
\newblock In \emph{Proceedings of the 24th international conference on Machine learning}, pp.\  81--88, 2007.

\bibitem[Bickel et~al.(2009)Bickel, Br{\"u}ckner, and Scheffer]{bickel2009discriminative}
Steffen Bickel, Michael Br{\"u}ckner, and Tobias Scheffer.
\newblock Discriminative learning under covariate shift.
\newblock \emph{Journal of Machine Learning Research}, 10\penalty0 (9), 2009.

\bibitem[Blanchard et~al.(2021)Blanchard, Deshmukh, Dogan, Lee, and Scott]{blanchard2021domain}
Gilles Blanchard, Aniket~Anand Deshmukh, Urun Dogan, Gyemin Lee, and Clayton Scott.
\newblock Domain generalization by marginal transfer learning.
\newblock \emph{Journal of machine learning research}, 22\penalty0 (2):\penalty0 1--55, 2021.

\bibitem[Blanchet et~al.(2024)Blanchet, Chen, Lu, and Ying]{blanchet2024can}
Jose Blanchet, Haoxuan Chen, Yiping Lu, and Lexing Ying.
\newblock When can regression-adjusted control variate help? rare events, sobolev embedding and minimax optimality.
\newblock \emph{Advances in Neural Information Processing Systems}, 36, 2024.

\bibitem[Cai \& Wei(2021)Cai and Wei]{cai2021transfer}
T~Tony Cai and Hongji Wei.
\newblock Transfer learning for nonparametric classification: Minimax rate and adaptive classifier.
\newblock \emph{The Annals of Statistics}, 2021.

\bibitem[Chang et~al.(2021)Chang, Uehara, Sreenivas, Kidambi, and Sun]{chang2021mitigating}
Jonathan Chang, Masatoshi Uehara, Dhruv Sreenivas, Rahul Kidambi, and Wen Sun.
\newblock Mitigating covariate shift in imitation learning via offline data with partial coverage.
\newblock \emph{Advances in Neural Information Processing Systems}, 34:\penalty0 965--979, 2021.

\bibitem[Devlin et~al.(2018)Devlin, Chang, Lee, and Toutanova]{devlin2018bert}
Jacob Devlin, Ming-Wei Chang, Kenton Lee, and Kristina Toutanova.
\newblock Bert: Pre-training of deep bidirectional transformers for language understanding.
\newblock \emph{arXiv preprint arXiv:1810.04805}, 2018.

\bibitem[Ding(2024)]{ding2024first}
Peng Ding.
\newblock \emph{A first course in causal inference}.
\newblock CRC Press, 2024.

\bibitem[Du et~al.(2020)Du, Hu, Kakade, Lee, and Lei]{du2020few}
Simon~S Du, Wei Hu, Sham~M Kakade, Jason~D Lee, and Qi~Lei.
\newblock Few-shot learning via learning the representation, provably.
\newblock \emph{arXiv preprint arXiv:2002.09434}, 2020.

\bibitem[Feng et~al.(2024{\natexlab{a}})Feng, He, Jiao, Kang, and Wang]{feng2024deep}
Xingdong Feng, Xin He, Yuling Jiao, Lican Kang, and Caixing Wang.
\newblock Deep nonparametric quantile regression under covariate shift.
\newblock \emph{Journal of Machine Learning Research}, 25\penalty0 (385):\penalty0 1--50, 2024{\natexlab{a}}.

\bibitem[Feng et~al.(2024{\natexlab{b}})Feng, He, Wang, Wang, and Zhang]{feng2024towards}
Xingdong Feng, Xin He, Caixing Wang, Chao Wang, and Jingnan Zhang.
\newblock Towards a unified analysis of kernel-based methods under covariate shift.
\newblock \emph{Advances in Neural Information Processing Systems}, 36, 2024{\natexlab{b}}.

\bibitem[Flamary et~al.(2016)Flamary, Courty, Tuia, and Rakotomamonjy]{flamary2016optimal}
R{\'e}mi Flamary, Nicholas Courty, Davis Tuia, and Alain Rakotomamonjy.
\newblock Optimal transport for domain adaptation.
\newblock \emph{IEEE Trans. Pattern Anal. Mach. Intell}, 1\penalty0 (1-40):\penalty0 2, 2016.

\bibitem[Guan \& Liu(2021)Guan and Liu]{guan2021domain}
Hao Guan and Mingxia Liu.
\newblock Domain adaptation for medical image analysis: a survey.
\newblock \emph{IEEE Transactions on Biomedical Engineering}, 69\penalty0 (3):\penalty0 1173--1185, 2021.

\bibitem[He et~al.(2022)He, Chen, Xie, Li, Doll{\'a}r, and Girshick]{he2022masked}
Kaiming He, Xinlei Chen, Saining Xie, Yanghao Li, Piotr Doll{\'a}r, and Ross Girshick.
\newblock Masked autoencoders are scalable vision learners.
\newblock In \emph{Proceedings of the IEEE/CVF conference on computer vision and pattern recognition}, pp.\  16000--16009, 2022.

\bibitem[Holzm{\"u}ller \& Bach(2023)Holzm{\"u}ller and Bach]{holzmuller2023convergence}
David Holzm{\"u}ller and Francis Bach.
\newblock Convergence rates for non-log-concave sampling and log-partition estimation.
\newblock \emph{arXiv preprint arXiv:2303.03237}, 2023.

\bibitem[Kpotufe(2017)]{kpotufe2017lipschitz}
Samory Kpotufe.
\newblock Lipschitz density-ratios, structured data, and data-driven tuning.
\newblock In \emph{Artificial Intelligence and Statistics}, pp.\  1320--1328. PMLR, 2017.

\bibitem[Kpotufe \& Martinet(2021)Kpotufe and Martinet]{kpotufe2021marginal}
Samory Kpotufe and Guillaume Martinet.
\newblock Marginal singularity and the benefits of labels in covariate-shift.
\newblock \emph{The Annals of Statistics}, 49\penalty0 (6):\penalty0 3299--3323, 2021.

\bibitem[Krieg \& Sonnleitner(2024)Krieg and Sonnleitner]{krieg2024random}
David Krieg and Mathias Sonnleitner.
\newblock Random points are optimal for the approximation of sobolev functions.
\newblock \emph{IMA Journal of Numerical Analysis}, 44\penalty0 (3):\penalty0 1346--1371, 2024.

\bibitem[K{\"u}nzel et~al.(2019)K{\"u}nzel, Sekhon, Bickel, and Yu]{kunzel2019metalearners}
S{\"o}ren~R K{\"u}nzel, Jasjeet~S Sekhon, Peter~J Bickel, and Bin Yu.
\newblock Metalearners for estimating heterogeneous treatment effects using machine learning.
\newblock \emph{Proceedings of the national academy of sciences}, 116\penalty0 (10):\penalty0 4156--4165, 2019.

\bibitem[Lei \& Cand{\`e}s(2021)Lei and Cand{\`e}s]{lei2021conformal}
Lihua Lei and Emmanuel~J Cand{\`e}s.
\newblock Conformal inference of counterfactuals and individual treatment effects.
\newblock \emph{Journal of the Royal Statistical Society Series B: Statistical Methodology}, 83\penalty0 (5):\penalty0 911--938, 2021.

\bibitem[Li et~al.(2020)Li, Lam, and Prusty]{li2020robust}
Fengpei Li, Henry Lam, and Siddharth Prusty.
\newblock Robust importance weighting for covariate shift.
\newblock In \emph{International conference on artificial intelligence and statistics}, pp.\  352--362. PMLR, 2020.

\bibitem[Ma et~al.(2023)Ma, Pathak, and Wainwright]{ma2023optimally}
Cong Ma, Reese Pathak, and Martin~J Wainwright.
\newblock Optimally tackling covariate shift in rkhs-based nonparametric regression.
\newblock \emph{The Annals of Statistics}, 51\penalty0 (2):\penalty0 738--761, 2023.

\bibitem[Oates et~al.(2017)Oates, Girolami, and Chopin]{oates2017control}
Chris~J Oates, Mark Girolami, and Nicolas Chopin.
\newblock Control functionals for monte carlo integration.
\newblock \emph{Journal of the Royal Statistical Society Series B: Statistical Methodology}, 79\penalty0 (3):\penalty0 695--718, 2017.

\bibitem[Reznikov \& Saff(2016)Reznikov and Saff]{reznikov2016covering}
Aleksandr Reznikov and Edward~B Saff.
\newblock The covering radius of randomly distributed points on a manifold.
\newblock \emph{International Mathematics Research Notices}, 2016\penalty0 (19):\penalty0 6065--6094, 2016.

\bibitem[Robey et~al.(2021)Robey, Pappas, and Hassani]{robey2021model}
Alexander Robey, George~J Pappas, and Hamed Hassani.
\newblock Model-based domain generalization.
\newblock \emph{Advances in Neural Information Processing Systems}, 34:\penalty0 20210--20229, 2021.

\bibitem[Romano et~al.(2019)Romano, Patterson, and Candes]{romano2019conformalized}
Yaniv Romano, Evan Patterson, and Emmanuel Candes.
\newblock Conformalized quantile regression.
\newblock \emph{Advances in neural information processing systems}, 32, 2019.

\bibitem[Ryan \& Culp(2015)Ryan and Culp]{ryan2015semi}
Kenneth~Joseph Ryan and Mark~Vere Culp.
\newblock On semi-supervised linear regression in covariate shift problems.
\newblock \emph{The Journal of Machine Learning Research}, 16\penalty0 (1):\penalty0 3183--3217, 2015.

\bibitem[Shimodaira(2000)]{shimodaira2000improving}
Hidetoshi Shimodaira.
\newblock Improving predictive inference under covariate shift by weighting the log-likelihood function.
\newblock \emph{Journal of statistical planning and inference}, 90\penalty0 (2):\penalty0 227--244, 2000.

\bibitem[Sugiyama \& Kawanabe(2012)Sugiyama and Kawanabe]{sugiyama2012machine}
Masashi Sugiyama and Motoaki Kawanabe.
\newblock \emph{Machine learning in non-stationary environments: Introduction to covariate shift adaptation}.
\newblock MIT press, 2012.

\bibitem[Teng et~al.(2021)Teng, Tan, and Yuan]{teng2021t}
Jiaye Teng, Zeren Tan, and Yang Yuan.
\newblock T-sci: A two-stage conformal inference algorithm with guaranteed coverage for cox-mlp.
\newblock In \emph{International conference on machine learning}, pp.\  10203--10213. PMLR, 2021.

\bibitem[Tibshirani et~al.(2019)Tibshirani, Foygel~Barber, Candes, and Ramdas]{tibshirani2019conformal}
Ryan~J Tibshirani, Rina Foygel~Barber, Emmanuel Candes, and Aaditya Ramdas.
\newblock Conformal prediction under covariate shift.
\newblock \emph{Advances in neural information processing systems}, 32, 2019.

\bibitem[Timper(2021)]{timper2021improvement}
Robert Timper.
\newblock \emph{Improvement of the Credit Risk Stress Testing Methodology for Corporate Bonds Using Machine Learning with Covariate Shift Adaptation}.
\newblock PhD thesis, HEC Montr{\'e}al, 2021.

\bibitem[Tsybakov(2009)]{tsybakov2009nonparametric}
Alexandre~B. Tsybakov.
\newblock \emph{Introduction to Nonparametric Estimation}.
\newblock Springer, 2009.

\bibitem[Tzeng et~al.(2017)Tzeng, Hoffman, Saenko, and Darrell]{tzeng2017adversarial}
Eric Tzeng, Judy Hoffman, Kate Saenko, and Trevor Darrell.
\newblock Adversarial discriminative domain adaptation.
\newblock In \emph{Proceedings of the IEEE conference on computer vision and pattern recognition}, pp.\  7167--7176, 2017.

\bibitem[Wager \& Athey(2018)Wager and Athey]{wager2018estimation}
Stefan Wager and Susan Athey.
\newblock Estimation and inference of heterogeneous treatment effects using random forests.
\newblock \emph{Journal of the American Statistical Association}, 113\penalty0 (523):\penalty0 1228--1242, 2018.

\bibitem[Wainwright(2019)]{wainwright2019high}
Martin~J Wainwright.
\newblock \emph{High-dimensional statistics: A non-asymptotic viewpoint}, volume~48.
\newblock Cambridge university press, 2019.

\bibitem[Wang et~al.(2024)Wang, Fan, Chen, Li, Liu, Liu, Dai, Wang, Dong, and Tang]{wang2024optimal}
Hao Wang, Jiajun Fan, Zhichao Chen, Haoxuan Li, Weiming Liu, Tianqiao Liu, Quanyu Dai, Yichao Wang, Zhenhua Dong, and Ruiming Tang.
\newblock Optimal transport for treatment effect estimation.
\newblock \emph{Advances in Neural Information Processing Systems}, 36, 2024.

\bibitem[Wang et~al.(2017)Wang, Utiyama, Liu, Chen, and Sumita]{wang2017instance}
Rui Wang, Masao Utiyama, Lemao Liu, Kehai Chen, and Eiichiro Sumita.
\newblock Instance weighting for neural machine translation domain adaptation.
\newblock In \emph{Proceedings of the 2017 Conference on Empirical Methods in Natural Language Processing}, pp.\  1482--1488, 2017.

\bibitem[Wendland(2001)]{wendland2001local}
Holger Wendland.
\newblock Local polynomial reproduction and moving least squares approximation.
\newblock \emph{IMA Journal of Numerical Analysis}, 21\penalty0 (1):\penalty0 285--300, 2001.

\bibitem[Wendland(2004)]{wendland2004scattered}
Holger Wendland.
\newblock \emph{Scattered data approximation}, volume~17.
\newblock Cambridge university press, 2004.

\bibitem[Yu \& Szepesv{\'a}ri(2012)Yu and Szepesv{\'a}ri]{yu2012analysis}
Yaoliang Yu and Csaba Szepesv{\'a}ri.
\newblock Analysis of kernel mean matching under covariate shift.
\newblock \emph{arXiv preprint arXiv:1206.4650}, 2012.

\end{thebibliography}
\bibliographystyle{iclr2025_conference}

\appendix


\section{Appendix}
\subsection{Proof of Theorem \ref{thm:lowerbound}}
\lowerbound*

\textbf{Proof Sketch.} 
The primary strategy for deriving minimax lower bounds involves creating instances from $\mathcal{V}$ that are similar enough to be statistically indistinguishable, while ensuring that the corresponding $q$-th moments are as distinct as possible. In this problem, we encounter two extreme cases. The first is the rare event case, characterized by a function that has a sharp peak confined to a specific small region and is otherwise constant. Moreover, the covariate shift accentuates this peak's prominence; specifically, the target distribution assigns greater probability mass to the peak region than the source distribution. The second case involves a function that continuously oscillates around a constant. Utilizing the construction method outlined above, we derive the result based on the method of two fuzzy hypotheses.

\begin{lemma}[Method of Two Fuzzy Hypotheses \citep{tsybakov2009nonparametric}]
    Let $F:\Theta \rightarrow \R$ be some continuous functional defined on the measurable space $(\Theta, \mathcal{U})$ and taking values in $(\R, \mathcal{B}(\R))$, where $\mathcal{B}(\R)$ denotes the Borel $\sigma$-algebra on $\R$. Suppose that each parameter $\theta \in \Theta$ is associated with a distribution $\P_{\theta}$, which together form a collection $\{\P_{\theta}:\theta \in \Theta\}$ of distributions.

    For any fixed $\theta \in \Theta$, assume that our observation $\bmX$ is distributed as $\P_{\theta}$. Let $\hat{F}$ be an arbitrary estimator of $F(\theta)$ based on $\bmX$. Let $\mu_0, \mu_1$ be two prior measures on $\Theta$. Assume that there exist constants $c\in\R$, $\Delta \in (0, \infty)$ and $\beta_0, \beta_1 \in [0,1]$, such that:
    \begin{align*}
    \mu_0(\theta\in\Theta:F(\theta)\leq c-\Delta)\geq1-\beta_0,\\\mu_1(\theta\in\Theta:F(\theta)\geq c+\Delta)\geq1-\beta_1.
    \end{align*}
    For $j \in \{0,1\}$, we use $\P_j(\cdot):=\int \P_{\theta}(\cdot)\mu_j(d\theta)$ to denote the marginal distribution $\P_j$ associated with the prior distribution $\mu_j$. Then we have the following lower bound:
    $$
    \inf_{\hat{F}}\sup_{\theta\in\Theta}\mathbb{P}_\theta(|\hat{F}-F(\theta)|\geq\Delta)\geq\frac{1-TV(\mathbb{P}_0||\mathbb{P}_1)-\beta_0-\beta_1}2,
    $$
    where $TV(\cdot \mid \cdot)$ denotes the total variation (TV) distance.
\end{lemma}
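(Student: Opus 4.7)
The plan is to reduce the estimation problem to a binary testing problem between the two marginal distributions $\P_0$ and $\P_1$, and then invoke the standard total-variation lower bound for such tests. First I would observe that for any estimator $\hat F$,
\[
\sup_{\theta\in\Theta}\P_\theta(|\hat F-F(\theta)|\geq\Delta)\;\geq\;\max_{j\in\{0,1\}}\int \P_\theta(|\hat F-F(\theta)|\geq\Delta)\,\mu_j(d\theta)\;\geq\;\tfrac12\sum_{j=0}^{1}\int \P_\theta(|\hat F-F(\theta)|\geq\Delta)\,\mu_j(d\theta),
\]
so it suffices to lower-bound the averaged risk under $\mu_0$ and $\mu_1$.

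Next, I would turn $\hat F$ into a binary test by setting $\phi:=\mathbf{1}\{\hat F\geq c\}$. By the triangle inequality, on the event $\{F(\theta)\leq c-\Delta\}$ we have $\{\phi=1\}\subseteq\{|\hat F-F(\theta)|\geq\Delta\}$, and on $\{F(\theta)\geq c+\Delta\}$ we have $\{\phi=0\}\subseteq\{|\hat F-F(\theta)|\geq\Delta\}$. Integrating against $\mu_0$ and $\mu_1$ respectively and using the hypotheses $\mu_0(F\leq c-\Delta)\geq 1-\beta_0$ and $\mu_1(F\geq c+\Delta)\geq 1-\beta_1$, a short manipulation (adding and subtracting $\P_j(\phi=\cdot)$ and absorbing the small-mass events into $\beta_j$) yields
\[
\int \P_\theta(|\hat F-F(\theta)|\geq\Delta)\,\mu_0(d\theta)+\int \P_\theta(|\hat F-F(\theta)|\geq\Delta)\,\mu_1(d\theta)\;\geq\;\P_0(\phi=1)+\P_1(\phi=0)-\beta_0-\beta_1.
\]

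Finally I would invoke the classical test-versus-TV inequality: for any measurable test $\phi$ and any two probability measures $\P_0,\P_1$,
\[
\P_0(\phi=1)+\P_1(\phi=0)\;\geq\;1-TV(\P_0\|\P_1),
\]
which follows from writing $\P_0(\phi=1)+\P_1(\phi=0)=1-(\P_1(\phi=1)-\P_0(\phi=1))$ and noting that the supremum of $\P_1(A)-\P_0(A)$ over measurable $A$ equals $TV(\P_0\|\P_1)$. Combining this with the previous display and the factor $1/2$ gives the announced lower bound.

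The main obstacle is purely bookkeeping: one must carefully separate the events $\{F(\theta)\leq c-\Delta\}$ and its complement under $\mu_0$ (and the symmetric statement under $\mu_1$) so that the mass lost to the complements is exactly what gets subtracted as $\beta_0+\beta_1$, while the rest is cleanly converted into the testing error $\P_0(\phi=1)+\P_1(\phi=0)$. Once this decomposition is done, the TV step is immediate and the remaining arithmetic is routine.
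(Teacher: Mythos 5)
Your proof is correct and is essentially the textbook argument (Tsybakov, Theorem~2.15). The paper itself does not prove this lemma — it cites it from \cite{tsybakov2009nonparametric} and uses it as a black box — so there is no internal proof to compare against. The three steps you outline (averaging the sup risk over the two priors, converting the estimator into the threshold test $\phi=\mathbf 1\{\hat F\geq c\}$ so that errors on the concentration sets $\{F\leq c-\Delta\}$ and $\{F\geq c+\Delta\}$ force a large deviation, and then invoking the testing-versus-total-variation inequality $\P_0(\phi=1)+\P_1(\phi=0)\geq 1-TV(\P_0\|\P_1)$) are exactly the standard route, and the "bookkeeping" step you flag is carried out correctly: the complementary events under $\mu_0$ and $\mu_1$ contribute at most $\beta_0$ and $\beta_1$ of lost mass, which is precisely the subtracted term.
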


\begin{proof} 
We derive the minimax lower bounds established in Theorem \ref{thm:lowerbound} using the method of two fuzzy hypotheses outlined above. Firstly, we introduce some notations used in our proof. Let $\mathcal{C}(\Omega)$ denote the space of all continuous function $f:\Omega \rightarrow \R$ and $\lfloor \cdot \rfloor$ be the rounding function. For any $s>0$ and $f\in \mathcal{C}(\Omega)$, we define the Hölder norm $\|\cdot\|_{\mathcal{C}^s(\Omega)}$ by:
$$
\|f\|_{\C^s(\Omega)}:=\max_{|k|\leq\lfloor s\rfloor}\|D^kf\|_{\mathcal{L}^\infty(\Omega)}+\max_{|k|=\lfloor s\rfloor}\sup_{x,y\in\Omega,x\neq y}\frac{|D^kf(x)-D^kf(y)|}{\|x-y\|^{s-\lfloor s\rfloor}}.
$$
The corresponding Hölder space is defined as $\mathcal{C}^s(\Omega):=\left\{f \in \C(\Omega) : \|f\|_{\C^s(\Omega)} < \infty\right\}$. Consider the function $K_0$ defined as follows:
$$
K_0(\bmx):=\prod_{i=1}^d\exp\Big(-\frac{1}{1-x_i^2}\Big)\mathbb{I}(|x_i|\leq1),\forall \bmx=(x_1,x_2,\cdots,x_d)\in\mathbb{R}^d,
$$
then we define function $K(\bmx):=K_0(2\bmx) ,\forall \bmx \in \R^d$. From the construction of $K_0$ and $K$ above, we have that $K \in \mathcal{C}^{\infty}(\R^d)$ and compactly supported on $[-\frac{1}{2}, \frac{1}{2}]^d$. The function $K$ is used to model the small bump. Furthermore, we partition the domain $\Omega$ into $m^d$ small cubes $\Omega_1,\ldots, \Omega_{m^d}$, each with a side length of $m^{-1}$, where $m=(200n)^{\frac{1}{d}}$. Moreover, we use $\Vec{\bmx}:=(\bmx_1,\ldots,\bmx_n)$ and $\Vec{y}:=(y_1, \ldots, y_n)$ to denote the two $n$-dimensional vectors formed by the sample $\S$. With the preliminaries established, we now present the main proof below, addressing the two different cases:

(Case I) Construct the lower bound instance: let $\ps(\bmx)$ be any p.d.f. that is bounded bellow by $\bl$ and above by $\bu$. Since $\int_{\Omega} \ps(\bmx)d\bmx=\sum_{j=1}^{m^d} \int_{\Omega_j} \ps(\bmx)d\bmx =1$, there exists an $i^{\ast}$ such that $\P(\bmx: \bmx\in\Omega_{i^{\ast}}) = \int_{\Omega_{i^{\ast}}} \ps(\bmx)d\bmx \leq \frac{1}{m^d}$. Let $\pt(\bmx)$ be the p.d.f. of the uniform distribution over $[\frac{i^{\ast}}{m}, \frac{i^{\ast}}{m}+(\frac{1}{B})^{\frac{1}{d}}]^d$. Consider the following two functions $g_0$ and $g_1$:
\begin{align*}
    g_0(\bmx) & \equiv0 \quad (\forall \bmx\in\Omega),
    \\g_1(\bmx)&=\begin{cases}\bu^{\frac{1}{q}} \cdot m^{-s+\frac{d}{p}}K(m(\bmx-c_{i^{\ast}})) \quad (\bmx\in\Omega_{i^{\ast}}),\\0 \quad \mbox{(otherwise)}.\end{cases}
\end{align*}
This construction is commonly used in the proof of nonparametric statistics, see Chapter 15 in \cite{wainwright2019high}. The choice of $i^{\ast}$ models the worst-case scenario of covariate shift. The functions $g_0$ and $g_1$ are in the $\W^{s,p}(\Omega)$, as proven in Theorem 2.1 of \cite{blanchet2024can}.

Next, we compute the $q$-th moment under the target distribution $\P^{\ast}$. Obviously, $I_{g_0}^q = 0$, so we only need to compute $I_{g_1}^q$:
\begin{align}
\label{pf:case1-2}
    I_{g_1}^q &=  \int_{\Omega} B \cdot g_1^q(\bmx) d\bmx = B\cdot \bu \cdot \int_{\Omega_{i^{\ast}}} \left[m^{-s+\frac{d}{p}}K(m(\bmx-c_1))\right]^q d\bmx \notag\\
    &= B\cdot \bu \cdot m^{-q(s-\frac{d}{p})} \cdot \int_{\Omega_{i^{\ast}}} (K(y))^q \frac{1}{m^d} dy \gtrsim B\cdot \bu \cdot m^{-q(s-\frac{d}{p})-d}.
\end{align}

Construct two discrete measures $\mu_0, \mu_1$ supported on $\left\{ g_0,g_1 \right\} \subset \W^{s,p}(\Omega)$ as bellow: 
\begin{align*}
\mu_0(\{g_0\})&=\frac{1+\epsilon}{2},\mu_0(\{g_1\})=\frac{1-\epsilon}{2},\\\mu_1(\{g_0\})&=\frac{1-\epsilon}{2},\mu_1(\{g_1\})=\frac{1+\epsilon}{2}.
\end{align*}

Take $\epsilon=\frac{1}{2}$, $c=\Delta=\frac{1}{2}I_{g_1}^q$ and $\beta_0=\beta_1=\frac{1-\epsilon}{2}$, we obtain that:
\begin{align*}
\mu_0(f\in W^{s,p}(\Omega):I_f^q\leq c-\Delta)&=\mu_0(I_f^q\leq0)\geq\frac{1+\epsilon}{2}=1-\beta_0,\\\mu_1(f\in W^{s,p}(\Omega):I_f^q\geq c+\Delta)&=\mu_1(I_f^q\geq I_{g_1}^q)\geq\frac{1+\epsilon}{2}=1-\beta_1.
\end{align*}
We proceed to compute the mixture distribution under $\mu_0$ and $\mu_1$:
\begin{align*}
&\P_0(\Vec{\bmx}, \Vec{y}) = \P(\Vec{\bmx}) \cdot \P_0(\Vec{y} \mid \Vec{\bmx}) =  \P(\Vec{\bmx}) \cdot \int \P(\Vec{y} \mid \Vec{\bmx}, \theta) \mu_0(d\theta) \\
& =  \left( \prod_{i=1}^{n}\ps(\bmx_i) \right) \cdot \left(\frac{1+\epsilon}{2}\prod_{i:\bmx_i\in\Omega_{i^{\ast}}}\delta_0(y_i)+\frac{1-\epsilon}{2}\prod_{i:\bmx_i\in\Omega_{i^{\ast}}}\delta_{g_1(\bmx_i)}(y_i)\right)\cdot\prod_{j \neq i^{\ast}}\left(\prod_{i:\bmx_i\in\Omega_j}\delta_0(y_i)\right), \\
&\P_1(\Vec{\bmx}, \Vec{y}) = \P(\Vec{\bmx}) \cdot \P_1(\Vec{y} \mid \Vec{\bmx}) =  \P(\Vec{\bmx}) \cdot \int \P(\Vec{y} \mid \Vec{\bmx}, \theta) \mu_1(d\theta) \\
& =  \left( \prod_{i=1}^{n}\ps(\bmx_i) \right) \cdot \left(\frac{1-\epsilon}{2}\prod_{i:\bmx_i\in\Omega_{i^{\ast}}}\delta_0(y_i)+\frac{1+\epsilon}{2}\prod_{i:\bmx_i\in\Omega_{i^{\ast}}}\delta_{g_1(\bmx_i)}(y_i)\right)\cdot\prod_{j \neq i^{\ast}}\left(\prod_{i:\bmx_i\in\Omega_j}\delta_0(y_i)\right),
\end{align*}
where $\delta_a(\cdot)$ denotes the Dirac delta distribution at point $a$, \emph{i.e.}, $\int_{-\infty}^{\infty}f(x)\delta_a(x) dx=f(a)$ for any function $f: \R \rightarrow \R$.

Then we may compute KL divergence between $\P_0$ and $\P_1$ as bellow: 
\begin{align*}
    &KL(\P_0 \| \P_1) = \E \left[\log(\frac{\P_0(\Vec{\bmx}, \Vec{y})}{\P_1(\Vec{\bmx}, \Vec{y})}) \P_0(\Vec{\bmx}, \Vec{y})\right]  = \E_{\Vec{\bmx}} \E_{\Vec{y} \mid \Vec{\bmx}} \left[ \log(\frac{\P_0(\Vec{\bmx}, \Vec{y})}{\P_1(\Vec{\bmx}, \Vec{y})} )\P_0(\Vec{\bmx}, \Vec{y})\right] \\
      = &\E_{\Vec{\bmx}} \left[ \int_{-\infty}^{\infty}\cdots\int_{-\infty}^{\infty}\log\Big(\frac{\frac{1+\epsilon}{2}\prod_{i:\bmx_i\in\Omega_{i^{\ast}}}\delta_0(y_i)+\frac{1-\epsilon}{2}\prod_{i:\bmx_i\in\Omega_{i^{\ast}}}\delta_{g_1(\bmx_i)}(y_i)}{\frac{1-\epsilon}{2}\prod_{i:\bmx_i\in\Omega_{i^{\ast}}}\delta_0(y_i)+\frac{1+\epsilon}{2}\prod_{i:\bmx_i\in\Omega_{i^{\ast}}}\delta_{g_1(\bmx_i)}(y_i)}\Big) \right. \\
    & \left. \cdot\left(\frac{1+\epsilon}{2}\prod_{i:\bmx_i\in\Omega_{i^{\ast}}}\delta_0(y_i)+\frac{1-\epsilon}{2}\prod_{i:\bmx_i\in\Omega_{i^{\ast}}}\delta_{g_1(\bmx_i)}(y_i)\right)\cdot\left(\prod_{j\neq i^{\ast}}\prod_{i:\bmx_i\in\Omega_j}\delta_0(y_i)\right)\prod_{i=1}^ndy_i \right] \\
      =&  \E_{\Vec{\bmx}} \left[ \int_{-\infty}^{\infty}\cdots\int_{-\infty}^{\infty}\log\Big(\frac{\frac{1+\epsilon}{2}\prod_{i:\bmx_i\in\Omega_{i^{\ast}}}\delta_0(y_i)+\frac{1-\epsilon}{2}\prod_{i:\bmx_i\in\Omega_{i^{\ast}}}\delta_{g_1(\bmx_i)}(y_i)}{\frac{1-\epsilon}{2}\prod_{i:\bmx_i\in\Omega_{i^{\ast}}}\delta_0(y_i)+\frac{1+\epsilon}{2}\prod_{i:\bmx_i\in\Omega_{i^{\ast}}}\delta_{g_1(\bmx_i)}(y_i)}\Big) \right. \\
    &\left. \cdot\left(\frac{1+\epsilon}{2}\prod_{i:\bmx_i\in\Omega_{i^{\ast}}}\delta_0(y_i)+\frac{1-\epsilon}{2}\prod_{i:\bmx_i\in\Omega_{i^{\ast}}}\delta_{g_1(\bmx_i)}(y_i)\right) \prod_{i:\bmx_i \in \Omega_{i^{\ast}}} dy_i \right]  \\
   =&  \Big(\log\Big(\frac{1+\epsilon}{1-\epsilon}\Big)\frac{1+\epsilon}{2}+\log\Big(\frac{1-\epsilon}{1+\epsilon}\Big)\frac{1-\epsilon}{2}\Big)\mathbb{P}\Big(\{i:\bmx_i\in\Omega_{i^{\ast}}\}\neq\varnothing\Big) \footnotemark \\
    =&\epsilon \log\left( \frac{1+\epsilon}{1-\epsilon}\right) \mathbb{P}\Big(\{i:\bmx_i\in\Omega_{i^{\ast}}\}\neq\varnothing\Big).
\end{align*}
\footnotetext{\textcolor{black}{Using the property of Dirac measure: $\int_{-\infty}^{\infty}f(x)\delta_a(x)dx=f(a)$}.}
Since $\P(\bmx: \bmx \in  \Omega_{i^{\ast}}) \leq \frac{1}{m^d}$, it follows that:
\begin{align*}
    &\P\Big(\{i:\bmx_i\in\Omega_{i^{\ast}}\}\neq\varnothing\Big) = 1 - \P\Big(\{i:\bmx_i\in\Omega_{i^{\ast}}\} = \varnothing\Big) \\
    &= 1 - \left(1-\P(x \in  \Omega_{i^{\ast}}) \right)^{n}  
     \leq 1 - (1-\frac{1}{m^d})^n \\
    & \leq 1 - \left( \frac{1}{e} (1-\frac{1}{m^d}) \right)^{\frac{1}{200}} \leq 1- (2e)^{-\frac{1}{200}}.
\end{align*}

Then, use the Pinkser's inequality to upper bound the TV distance between $\P_0$ and $\P_1$ as bellow:
\begin{equation}
    \label{pf:case1-1}    TV(\mathbb{P}_0\|\mathbb{P}_1)\leq\sqrt{\frac12KL(\mathbb{P}_0\|\mathbb{P}_1)}\leq\sqrt{\frac{1-(2e)^{-\frac1{200}}}{2}\epsilon\log\left(\frac{1+\epsilon}{1-\epsilon}\right)}\leq\sqrt{\frac{3}{100}}\epsilon=\frac{\sqrt{3}}{10}\epsilon.
\end{equation}
Finally, combining \ref{pf:case1-2} and \ref{pf:case1-1},  we obtain:
\begin{align}
\label{pf:minimax1}
&\inf_{\hat{H}^q \in \mathcal{H}_n^{f,q}} \sup_{(f,\ps, \pt) \in \mathcal{V}} \mathbb{E}_{\S}\left[\left|\hat{H}^q(\S)-I_f^q\right|\right] \notag\\
&\geq\Delta\inf_{\hat{H}^q \in \mathcal{H}_n^{f,q}} \sup_{(f,\ps, \pt) \in \mathcal{V}} \mathbb{P}_{\S}\left[\left|\hat{H}^q(\S)-I_f^q\right|\geq\Delta\right] \notag\\
&\geq\frac12I_{g_1}^q\frac{1-TV(\mathbb{P}_0||\mathbb{P}_1)-\beta_0-\beta_1}2\geq\frac14\biggl(1-\frac{\sqrt{3}}{10}\biggr)\epsilon I_{g_1}^q \notag\\
& \gtrsim B\cdot \bu \cdot n^{-q(\frac{s}{d}-\frac{1}{p})-1},
\end{align}
which is the first term in the RHS of \eqref{eq:lowerbound}.
 
(Case II) We proceed to construct the second lower bound instance: let $\ps(\bmx)$ and $\pt(\bmx)$ be the p.d.f. of the uniform distribution over $[0,1]^d$. For any $1\leq j \leq m^d$, consider $f_j$ defined as follows:
\begin{align*}
    f_j(\bmx)=\left\{\begin{aligned}&m^{-s}K(m(\bmx-c_j))  \quad \mbox{($\bmx\in\Omega_j$)},\\&0 \quad \mbox{(otherwise)},\end{aligned}\right.
\end{align*}
we further pick two constants $\alpha, M$ satisfying $\alpha := \|K\|_{\mathcal{L}^{\infty}([-\frac{1}{2}, \frac{1}{2}]^d)}$ and $M=3\alpha$. Now consider the following finite set of $2^{m^d}$ functions:
\begin{equation*}
    \mathcal{F}:=\Big\{ (B\bu)^{\frac{1}{q}} \cdot (M+\sum_{j=1}^{m^d}\eta_jf_j):\eta_j\in\{\pm1\}, ~\forall 1\leq j\leq m^d\Big\}.
\end{equation*}
Any function in $\mathcal{F}$ belongs to $\W^{s,p}(\Omega)$ as proven in Theorem 2.1 of \cite{blanchet2024can}.

Next, construct two discrete measure $\mu_0, \mu_1$ supported on $\mathcal{F}$ whose p.m.f. has the following form:
\begin{align*}
\mu_k\Big(\Big\{(B\bu)^{\frac{1}{q}} \cdot (M+\sum_{j=1}^{m^d}\eta_jf_j)\Big\}\Big)=\prod_{j=1}^{m^d}\mathbb{P}(w_j^{(k)}=\eta_j), ~k\in\{0,1\}.
\end{align*}
Where $\{w_j^{(0)}\}_{j=1}^{m^d}$ and $\{w_j^{(1)}\}_{j=1}^{m^d}$ are independent and identical copies of $w_{\frac{1+\kappa}{2}}$ and $w_{\frac{1-\kappa}{2}}$ respectively, we take $\kappa = \frac{1}{3}\sqrt{\frac{2}{3n}}$. Here $w_p$ denote Radmacher random variable satisfying $\P(w_p=-1)=p$ and $\P(w_p=1)=1-p$.

In order to determine the separation between two priors $\mu_0$ and $\mu_1$, we need to derive the concentration inequality of each prior first. Define $A:=\int_{\Omega_j} B\bu \cdot (M+f_j(\bmx))^q d\bmx$ and $C:=\int_{\Omega_j} B\bu \cdot (M-f_j(\bmx))^q d\bmx$, we may derive the lower bound on the quantity $\Delta^{\prime}:=A-C>0$:
\begin{align}
\label{pf:case2-1}
\Delta' &=\int_{\Omega_j}B\bu \cdot(M+f_j(\bmx))^qd\bmx - \int_{\Omega_j}B\bu \cdot(M-f_j(\bmx))^qd\bmx \notag\\
&= B\bu \cdot \int_{\Omega_j}\Big(\int_{-f_j(\bmx)}^{f_j(\bmx)}q(M+y)^{q-1}dy\Big)d\bmx \geq B\bu \cdot\int_{\Omega_j}\Big(\int_{-f_j(\bmx)}^{f_j(\bmx)}q(\frac{1}{2}M)^{q-1}dy\Big)d\bmx \notag\\
&=  B\bu \cdot \frac{q}{2^{q-1}}M^{q-1}\int_{\Omega_j}\Big(2f_j(\bmx)\Big)d\bmx \gtrsim B\bu \cdot\int_{\Omega_j}f_j(\bmx)d\bmx \notag\\
&= B\bu \cdot \int_{\Omega_j}m^{-s}K(m(x-c_j))d\bmx = B\bu \cdot m^{-s-d}\|K\|_{\mathcal{L}^1([-\frac12,\frac12]^d)}. 
\end{align}
Moreover, let's set  $\lambda=\frac{1}{2}$ and apply Hoeffding's Inequality to the bounded random variables  $\{w_j^{(0)}\}_{j=1}^{m^d}$ and $\{w_j^{(1)}\}_{j=1}^{m^d}$ to deduce that:
\begin{align*}
    &\mathbb{P}\Big(\sum_{j=1}^{m^d}w_j^{(0)}\geq-(1-\lambda)m^d\kappa\Big)\leq\exp\Big(-\frac{2(\lambda m^d\kappa)^2}{4m^d}\Big)=\exp\Big(-\frac12\lambda^2\kappa^2m^d\Big),\\
    &\mathbb{P}\Big(\sum_{j=1}^{m^d}w_j^{(1)}\leq(1-\lambda)m^d\kappa\Big)\leq\exp\Big(-\frac{2(\lambda m^d\kappa)^2}{4m^d}\Big)=\exp\Big(-\frac12\lambda^2\kappa^2m^d\Big).
\end{align*}
By picking $c:= \frac{m^d}{2}(A+C)$, $\Delta:=(1-\lambda)\kappa m^d\Delta^{\prime}$ and $\beta_0 = \beta_1 = \exp(-\frac{1}{2}\lambda^2 \kappa^2 m^d)$, we may deduce that:
\begin{align*}
&\mu_0(f\in \W^{s,p}(\Omega):I_f^q\leq c-\Delta) \\
=&\mathbb{P}\Big(\sum_{j=1}^{m^d}I_{(B\bu)^{\frac{1}{q}}(M+w_j^{(0)}f_j)}^q\leq\frac{1-(1-\lambda)\kappa}{2}m^dA+\frac{1+(1-\lambda)\kappa}{2}m^dC\Big) \\
\geq&\mathbb{P}\Big(\sum_{j=1}^{m^d}w_j^{(0)}\leq-(1-\lambda)m^d\kappa\Big)=1-\mathbb{P}\Big(\sum_{j=1}^{m^d}w_j^{(0)}\geq-(1-\lambda)m^d\kappa\Big) \\
\geq& 1-\exp\left(-\frac12\lambda^2\kappa^2m^d\right)=1-\beta_0, \\
&\mu_1(f\in \W^{s,p}(\Omega):I_f^q\geq c+\Delta)\\
=& \mathbb{P}\Big(\sum_{j=1}^{m^d}I_{(B\bu)^{\frac{1}{q}}(M+w_j^{(1)}f_j)}^q\geq\frac{1+(1-\lambda)\kappa}{2}m^dA+\frac{1-(1-\lambda)\kappa}{2}m^dC\Big) \\
\geq& \mathbb{P}\Big(\sum_{j=1}^{m^d}w_j^{(1)}\geq(1-\lambda)m^d\kappa\Big)=1-\mathbb{P}\Big(\sum_{j=1}^{m^d}w_j^{(0)}\leq(1-\lambda)m^d\kappa\Big) \\
\geq& 1-\exp\left(-\frac12\lambda^2\kappa^2m^d\right)=1-\beta_1,
\end{align*}

We proceed to compute the mixture distribution under $\mu_0$ and $\mu_1$:
\begin{align*}
&\P_0(\Vec{\bmx}, \Vec{y}) = \P(\Vec{\bmx}) \cdot \P_0(\Vec{y} \mid \Vec{\bmx}) =  \P(\Vec{\bmx}) \cdot \int \P(\Vec{y} \mid \Vec{\bmx}, \theta) \mu_0(d\theta) \\
&=\left( \prod_{i=1}^{n}\ps(\bmx_i) \right) \cdot \prod_{j=1}^{m^d}\Big(\frac{1+\kappa}{2}\prod_{i:\bmx_i\in\Omega_j}\delta_{(B\bu)^{\frac{1}{q}}(M-f_j(\bmx_i))}(y_i)+\frac{1-\kappa}{2}\prod_{i:\bmx_i\in\Omega_j}\delta_{(B\bu)^{\frac{1}{q}}(M+f_j(\bmx_i))}(y_i)\Big), \\
&\P_1(\Vec{\bmx}, \Vec{y}) = \P(\Vec{\bmx}) \cdot \P_1(\Vec{y} \mid \Vec{\bmx}) =  \P(\Vec{\bmx}) \cdot \int \P(\Vec{y} \mid \Vec{\bmx}, \theta) \mu_1(d\theta) \\
&= \left( \prod_{i=1}^{n}\ps(\bmx_i) \right) \cdot \prod_{j=1}^{m^d}\Big(\frac{1-\kappa}{2}\prod_{i:\bmx_i\in\Omega_j}\delta_{(B\bu)^{\frac{1}{q}}(M-f_j(\bmx_i))}(y_i)+\frac{1+\kappa}{2}\prod_{i:\bmx_i\in\Omega_j}\delta_{(B\bu)^{\frac{1}{q}}(M+f_j(\bmx_i))}(y_i)\Big). 
\end{align*}
Denote $\mathcal{J}_n$ as the set of all indices $j$ satisfying that $\Omega_j$ contains at least one of the points in $\{\bmx_i\}_{i=1}^n$:
$$
\mathcal{J}_n:=\mathcal{J}_n(\bmx_1,\cdots,\bmx_n)=\left\{j:1\leq j\leq m^d\text{ and }\Omega_j\cap\{\bmx_1,\cdots,\bmx_n\}\neq\varnothing\right\}.
$$
Since $m^d = 200n >n$, we have $|\mathcal{J}_n| \leq n$. Thus, KL divergence can be upper bounded by:
\begin{align*}
    &KL(\P_0 \| \P_1) = \E \left[\log(\frac{\P_0(\Vec{\bmx}, \Vec{y})}{\P_1(\Vec{\bmx}, \Vec{y})}) \P_0(\Vec{\bmx}, \Vec{y})\right]  = \E_{\Vec{\bmx}} \E_{\Vec{y} \mid \Vec{\bmx}} \left[ \log(\frac{\P_0(\Vec{\bmx}, \Vec{y})}{\P_1(\Vec{\bmx}, \Vec{y})} )\P_0(\Vec{\bmx}, \Vec{y})\right] \\
     &= \E_{\Vec{\bmx}} \Big [ \int_{-\infty}^{\infty}\cdots\int_{-\infty}^{\infty}\log\Big(\prod_{j=1}^{m^d}\frac{\frac{1+\kappa}{2}\prod_{i:\bmx_i\in\Omega_j}\delta_{(B\bu)^{\frac{1}{q}}(M-f_j(\bmx_i))}(y_i)+\frac{1-\kappa}{2}\prod_{i:\bmx_i\in\Omega_j}\delta_{(B\bu)^{\frac{1}{q}}(M+f_j(\bmx_i))}(y_i)}{\frac{1-\kappa}{2}\prod_{i:\bmx_i\in\Omega_j}\delta_{(B\bu)^{\frac{1}{q}}(M-f_j(\bmx_i))}(y_i)+\frac{1+\kappa}{2}\prod_{i:\bmx_i\in\Omega_j}\delta_{(B\bu)^{\frac{1}{q}}(M+f_j(\bmx_i))}(y_i)}\Big) \\
     & \cdot\prod_{j=1}^{m^d}\left(\frac{1+\kappa}{2}\prod_{i:\bmx_i\in\Omega_j}\delta_{(B\bu)^{\frac{1}{q}}(M-f_j(\bmx_i))}(y_i)+\frac{1-\kappa}{2}\prod_{i:\bmx_i\in\Omega_j}\delta_{M+f_j(\bmx_i))}(y_i)\right)\prod_{i=1}^ndy_i \Big] \\
     & = \E_{\Vec{\bmx}} \Big[ \sum_{j\in\mathcal{J}_n}\int_{-\infty}^\infty\cdots\int_{-\infty}^\infty\log\Big(\frac{\frac{1+\kappa}{2}\prod_{i:\bmx_i\in\Omega_j}\delta_{(B\bu)^{\frac{1}{q}}(M-f_j(\bmx_i))}(y_i)+\frac{1-\kappa}{2}\prod_{i:\bmx_i\in\Omega_j}\delta_{(B\bu)^{\frac{1}{q}}(M+f_j(\bmx_i))}(y_i)}{\frac{1-\kappa}{2}\prod_{i:\bmx_i\in\Omega_j}\delta_{(B\bu)^{\frac{1}{q}}(M-f_j(\bmx_i))}(y_i)+\frac{1+\kappa}{2}\prod_{i:\bmx_i\in\Omega_j}\delta_{(B\bu)^{\frac{1}{q}})(M+f_j(\bmx_i))}(y_i)}\Big) \\
     &  \cdot\left(\frac{1+\kappa}{2}\prod_{i:\bmx_i\in\Omega_j}\delta_{(B\bu)^{\frac{1}{q}}(M-f_j(\bmx_i))}(y_i)+\frac{1-\kappa}{2}\prod_{i:\bmx_i\in\Omega_j}\delta_{(B\bu)^{\frac{1}{q}}(M+f_j(\bmx_i))}(y_i)\right)\prod_{i:\bmx_i\in\Omega_j}dy_i \Big] \\
     & = \E_{\Vec{\bmx}}\Big[|\mathcal{J}_n|\bigg(\log\bigg(\frac{1+\kappa}{1-\kappa}\bigg)\frac{1+\kappa}{2}+\log\bigg(\frac{1-\kappa}{1+\kappa}\bigg)\frac{1-\kappa}{2}\bigg) \Big] \\
     & \leq n\kappa \log\left(\frac{1+\kappa}{1-\kappa}\right). 
\end{align*}

Use the Pinkser's inequality to upper bound TV distance between $\P_0$ and $\P_1$ further:
\begin{equation}
    \label{pf:case2-2}
TV(\mathbb{P}_0\|\mathbb{P}_1)\leq\sqrt{\frac12KL(\mathbb{P}_0\|\mathbb{P}_1)}\leq\sqrt{\frac{n\kappa}2\log\left(\frac{1+\kappa}{1-\kappa}\right)}\leq\sqrt{\frac{3n}2}\kappa=\frac13.   
\end{equation}
Finally, let $\Delta = (1-\lambda)\kappa m^d \Delta'$ and $\beta_0=\beta_1=\exp(-\frac{1}{2}\lambda^2\kappa^2m^d)=\exp(-\frac{50}{27})<\frac{1}{6}$. Combining \ref{pf:case2-1} and \ref{pf:case2-2}, we obtain the final lower bound:
\begin{align}
\label{pf:minimax2}
&\inf_{\hat{H}^q \in \mathcal{H}_n^{f,q}} \sup_{(f,\ps, \pt) \in \mathcal{V}} \mathbb{E}_{\{x_i\}_{i=1}^n,\{y_i\}_{i=1}^n}\left[\left|\hat{H}^q\Big(\{x_i\}_{i=1}^n,\{y_i\}_{i=1}^n\Big)-I_f^q\right|\right] \notag\\
&\geq\Delta\inf_{\hat{H}^q \in \mathcal{H}_n^{f,q}} \sup_{(f,\ps, \pt) \in \mathcal{V}} \mathbb{P}_{\{x_i\}_{i=1}^n,\{y_i\}_{i=1}^n}\left[\left|\hat{H}^q\Big(\{x_i\}_{i=1}^n,\{y_i\}_{i=1}^n\Big)-I_f^q\right|\geq\Delta\right] \notag\\
&\geq (1-\lambda) \kappa m^d \Delta' I_{g_1}^q\frac{1-TV(\mathbb{P}_0||\mathbb{P}_1)-\beta_0-\beta_1}2 \notag\\
& \gtrsim B\cdot \bu \cdot n^{-\frac{s}{d}-\frac{1}{2}}.
\end{align}
Combing the two lower bounds established in \ref{pf:minimax1} and \ref{pf:minimax2} completes the proof of Theorem \ref{thm:lowerbound}.
\end{proof}

\subsection{Proof of Theorem \ref{thm:upper-bound-1}}
\upperbound*
We begin with the key lemma used in the proof.
\begin{lemma}[Sobolev Embedding Theorem]
    For some fixed dimension $d\in\N$, we have that:
    
    (I) For any $s,t \in \N_0$ and $p,q \in \R$ satisfying $s>t, p<d$ and $1\leq p <q \leq \infty$, we have $\mathcal{W}^{s,p}(\R^d)\subseteq \mathcal{W}^{t,q}(\R^d)$ when the relation $\frac{1}{p}-\frac{s}{d}=\frac{1}{q}-\frac{t}{d}$ holds. In the special case when $t=0$, we have $\mathcal{W}^{t,q}(\R^d) \subseteq \mathcal{L}^q(\R^d)$ for any $s \in \N$ and $p,q\in \R$ satisfying $1\leq p<q\leq \infty$ and $\frac{1}{p}-\frac{s}{d}\leq \frac{1}{q}$.

    (II) For any $\alpha\in(0,1)$, let $\beta = \frac{d}{1-\alpha}\in(d, \infty]$. Then we have $\mathcal{C}^1(\R^d) \cap \W^{1,\beta}(\R^d)\subseteq\mathcal{C}^{\alpha}(\R^d)$.
\end{lemma}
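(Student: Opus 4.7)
The plan is to prove both parts via the classical approach of Gagliardo--Nirenberg and Morrey, reducing everything to inequalities for $u\in C^\infty_c(\R^d)$ via density and then passing to the full Sobolev space.

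For Part (I), I would first establish the base case $s=1,\ t=0,\ p=1$, the Gagliardo--Nirenberg--Sobolev inequality $\|u\|_{L^{d/(d-1)}(\R^d)} \leq C\,\|Du\|_{L^1(\R^d)}$. The proof writes, for each coordinate $i$, the pointwise bound $|u(\bmx)| \leq \int_{-\infty}^{\infty} |\partial_i u(x_1,\ldots,y_i,\ldots,x_d)|\,dy_i$, multiplies the $d$ such one-dimensional bounds, raises to the power $1/(d-1)$, and integrates coordinate by coordinate using the generalized H\"older inequality to absorb each factor in turn. Next, I would bootstrap to $1\leq p<d$ by applying the $p=1$ version to $v=|u|^\gamma$ with $\gamma = p(d-1)/(d-p)$; H\"older's inequality on the right-hand side yields $\|u\|_{L^{p^\ast}}\leq C\|Du\|_{L^p}$ with $p^\ast = dp/(d-p)$, which is exactly the scaling $1/p - 1/d = 1/p^\ast$, i.e.\ the $s-t=1$ case.

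For general $s>t$, I would iterate this first-order embedding. Setting $p_0=p$ and $p_{j+1}=dp_j/(d-p_j)$, each step of the chain $\W^{s-j,p_j}\hookrightarrow \W^{s-j-1,p_{j+1}}$ lowers smoothness by one and sends $1/p_j$ to $1/p_j - 1/d$. After $s-t$ steps one arrives at $\W^{t,q}$ with $1/q = 1/p - (s-t)/d$, which is the stated relation; the hypothesis $p<d$ together with the scaling keeps every intermediate exponent strictly below $d$ so that the first-order step is legal at each stage. The stated special case $t=0$ follows by taking $t=0$ in the above and noting that if $1/p - s/d < 1/q$, an extra embedding step (applied on bounded subsets, or via log-convex interpolation between the exact-exponent Sobolev embedding and $L^\infty$) recovers the weaker inclusion into $\mathcal{L}^q$.

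For Part (II), the plan is to prove Morrey's inequality. Because the statement intersects with $\mathcal{C}^1(\R^d)$, one can work directly with the pointwise identity $u(\bmx)-u(\bmy)=\int_0^1 Du(\gamma(t))\cdot\gamma'(t)\,dt$ along a segment, average over a parameter point in $B_r((\bmx+\bmy)/2)$ with $r=\|\bmx-\bmy\|$, and obtain the key pointwise bound
\[
|u(\bmx)-u(\bmy)| \;\leq\; C \int_{B_{2r}(\bmx)} |Du(\bmz)|\,\|\bmz-\bmx\|^{1-d}\,d\bmz.
\]
Applying H\"older's inequality with exponents $\beta$ and $\beta'=\beta/(\beta-1)$, the weight integral $\int_{B_{2r}(\bmx)} \|\bmz-\bmx\|^{(1-d)\beta'}\,d\bmz$ is finite precisely when $(d-1)\beta'<d$, equivalently $\beta>d$, which is guaranteed since $\beta = d/(1-\alpha)>d$. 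Computing the resulting power of $r$ gives $r^{1-d/\beta}=\|\bmx-\bmy\|^{\alpha}$ with $\alpha = 1 - d/\beta$, exactly matching the H\"older exponent in the statement; combining with the $L^\infty$ bound (also a Morrey consequence) places $u$ in $\mathcal{C}^\alpha(\R^d)$.

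The main obstacle will be the bookkeeping of exponents in the iterated embedding of Part (I) — verifying that each intermediate $p_j$ stays strictly below $d$ and that the chain terminates at the prescribed $q$ without a degenerate endpoint — together with the averaging-over-a-ball step in Part (II) that produces the pointwise gradient-integral representation. Both are classical but technically the most delicate portions of the argument.
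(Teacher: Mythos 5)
The paper does not actually prove this lemma: it is stated verbatim as a classical citation (the Sobolev Embedding Theorem) and immediately followed by the proof of Theorem~\ref{thm:upper-bound-1}, which merely invokes the lemma as a black box. So there is no "paper's own proof" to compare against; the only question is whether your sketch is a sound reconstruction of the textbook argument, and on the whole it is — Gagliardo--Nirenberg--Sobolev at $p=1$, the $|u|^\gamma$ bootstrap, the first-order iteration, and Morrey's potential estimate for Part~(II) are exactly the standard route (Evans Ch.~5, Brezis Ch.~9).

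Two technical points need repair. First, your treatment of the $t=0$, sub-critical case ($\frac{1}{p}-\frac{s}{d}<\frac{1}{q}$) proposes "interpolation between the exact-exponent Sobolev embedding and $L^\infty$," but on all of $\R^d$ the space $\W^{s,p}$ with $sp<d$ does \emph{not} embed into $L^\infty$, so that endpoint is unavailable. The correct interpolation pair is $L^p$ (trivially available since $u\in\W^{s,p}\subset L^p$) and $L^{p^\ast}$ with $\frac{1}{p^\ast}=\frac{1}{p}-\frac{s}{d}$, giving $\|u\|_{L^q}\le \|u\|_{L^p}^{\theta}\|u\|_{L^{p^\ast}}^{1-\theta}$ for any $q\in[p,p^\ast]$, which is precisely the range the lemma asserts. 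Your alternative "applied on bounded subsets" only yields $L^q_{\mathrm{loc}}$, not $L^q(\R^d)$, so it does not salvage the claim as stated over the whole space (though it suffices for the paper's actual domain $\Omega=[0,1]^d$). Second, your assertion that "the hypothesis $p<d$ together with the scaling keeps every intermediate exponent strictly below $d$" is false at the $q=\infty$ endpoint, which the lemma's hypotheses permit: if $\frac{1}{q}=0$ then $\frac{1}{p_{s-t-1}}=\frac{1}{q}+\frac{1}{d}=\frac{1}{d}$, so the penultimate exponent equals $d$ exactly, and the final step $\W^{1,d}(\R^d)\hookrightarrow L^\infty(\R^d)$ is the critical embedding, which is known to fail on $\R^d$ for $d\ge 2$. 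For every $q<\infty$ your iteration is fine; you should either exclude $q=\infty$ from Part~(I) (and route that case through Part~(II)/Morrey, which requires the strict inequality $s>d/p$) or flag that the lemma as stated is optimistic at that endpoint.
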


\begin{proof}
Firstly, decompose the mean square error $\E_{\S} \left[ \left| \hat{H}^q_B(\S) - I_f^q \right|^2 \right]$ into bias part $ \left| \E_{\S} [\hat{H}^q_B(\S)] -I_f^q \right|^2$ and variance part $\E_{\S}\left[ \left| \hat{H}^q_B(\S) - \E_{\S}[\hat{H}^q_B(\S)] \right|^2 \right]$.

Estimator $\hat{H}^q_B(\S)$ is unbias for $I_f^q$, so the bias part is zero:
\begin{align*}
    \E_{\S}\left[\hat{H}^q_B(\S)\right] 
    &= \E_{\bmX \sim \P^{\ast}}\left[\hat{f}_{\S_1}^q(\bmX)\right] +  \frac{2}{n}\sum_{(\bmx_i,y_i) \in \S_2} \E \left[ w(\bmx_i) \cdot (y_i^q - \hat{f}_{\S_1}^q(\bmx_i)) \right] \\
    &= \E_{\bmX \sim \P^{\ast}}\left[\hat{f}_{\S_1}^q(\bmX)\right] +  \int_{\Omega} \frac{\pt(\bmx)}{\ps(\bmx)} \cdot \left[ f(\bmx)^q - \hat{f}_{\S_1}^q(\bmx) \right] \cdot \ps(\bmx) d\bmx \\
    &= \E_{\bmX \sim \P^{\ast}}\left[\hat{f}_{\S_1}^q(\bmX)\right] + \E_{\bmX \sim \P^{\ast}}\left[f^q(\bmX) - \hat{f}_{\S_1}^q(\bmX)\right] \\
    &= \E_{\bmX \sim \P^{\ast}}\left[f^q(\bmX)\right] = I_f^q.
\end{align*}

Using the law of total variance to decompose the variance part further:
\begin{align*}
    \V \left[ \hat{H}^q_B(\S) \right] = \E \left[ \V \left(\hat{H}^q_B(\S) \mid \S_1 \right) \right] + \V \left[ \E \left(\hat{H}^q_B(\S) \mid \S_1 \right) \right].
\end{align*}
Since $\E \left(\hat{H}^q_B(\S) \mid \S_1 \right) = I_f^q$, the second part is zero. So we only need to compute the first part.
\begin{align}
\label{eq:s0}
    \E \left[ \V \left(\hat{H}^q_B(\S) \mid \S_1 \right) \right] &= \E \left[ \V \left(\frac{2}{n}\sum_{(\bmx_i,y_i) \in \S_2} w(\bmx_i) \cdot (y_i^q - \hat{f}_{\S_1}^q(\bmx_i)) \mid \S_1 \right) \right]
    \notag\\ 
    &= \frac{2}{n} \E \left[ \V \left( w(\bmX) \cdot (f^q(\bmX) - \hat{f}^q_{\S_1}(\bmX) ) \mid \S_1 \right) \right] \notag\\
    & \leq \frac{2}{n} \E_{\S_1} \left[ \E_{\S_2 \mid \S_1} \left( w(\bmX) \cdot (f^q(\bmX) - \hat{f}^q_{\S_1}(\bmX)) \right)^2 \right] \notag\\
    & \leq  \frac{2}{n} B^2 \cdot \E_{\S_1} \left[ \E_{\S_2 \mid \S_1} \left( f^q(\bmX) - \hat{f}^q_{\S_1}(\bmX) \right)^2 \right]. 
\end{align}

Following the proof in \cite{blanchet2024can}, we further upper bound the expression $\E_{\S_1} \left[ \E_{\S_2 \mid \S_1} \left( f^q(\bmX) - \hat{f}^q_{\S_1}(\bmX) \right)^2 \right]$. Let $\g := \hat{f}_{\S_1}-f$ represent the difference between the estimator $\hat{f}_{\S_1}$ and underlying function $f$. 
\begin{align}
\label{eq:dcp1}
    &\E_{\S_1} \left[ \E_{\S_2 \mid \S_1} \left( f^q(\bmX) - \hat{f}^q_{\S_1}(\bmX) \right)^2 \right] \leq \bu \cdot \E_{\S_1} \left[ \int_{\Omega} \left(f^q(\bmx) - \hatfx \right)^2 d\bmx \right] \notag\\
    =& \bu \cdot \E_{\S_1} \left[ \int_{\Omega} \left((f(\bmx)+\g(\bmx))^q - f^q(\bmx)^q \right)^2 d\bmx \right] \notag\\
    =& \bu \cdot \E_{\S_1} \left[ \int_{\Omega}\left(\int_0^{\g(\bmx)} q(f(\bmx)+y)^{q-1}  dy\right)^2 d\bmx \right] \notag\\
    \leq & \bu \cdot \E_{\S_1} \left[ \int_{\Omega} \left|\int_0^{\g(\bmx)} 1 dy \right| \cdot \left|\int_0^{\g(\bmx)} q^2(f(\bmx)+y)^{2q-2} dy \right|   d\bmx \right] \quad \mbox{(Hölder's Inequality)} \notag\\
    \lesssim & \bu \cdot \E_{\S_1} \left[ \int_{\Omega} \left|\g(\bmx)\right| \cdot \left| \g(\bmx) \right|  \max\left\{\left| f^{2q-2}(\bmx)\right|, \left| \g^{2q-2}(\bmx)\right|\right\}    d\bmx \right] \notag\\
    \lesssim & \bu \cdot \E_{\S_1} \left[\int_{\Omega} \left| \g^{2q}(\bmx) \right| d\bmx\right] + \bu \cdot \E_{\S_1} \left[\int_{\Omega} \left| \g^{2}(\bmx) f^{2q-2}(\bmx) \right| d\bmx\right].
\end{align}

Let's further upper bound the two terms in \ref{eq:dcp1}. For the first term, since $\frac{s}{d}>\frac1p - \frac{1}{2q}$, we may apply \eqref{eq:oracle_rate} from Assumption \ref{ass:oracle} by choosing $r=2q$. This leads to the following deduction:
\begin{align}
\label{eq:first-term}
     &\E_{\S_1} \left[\int_{\Omega} \left| \g^{2q}(\bmx) \right| d\bmx\right] =  \E_{\S_1} \left[ \|\hat{f}_{\S_1}-f \|^{2q}_{\mathcal{L}^{2q}(\Omega)} \right] \notag \\
     & \lesssim \bu \cdot ((\frac n2)^{-\frac sd+(\frac1p-\frac1{2q})_+})^{2q}\lesssim \bu \cdot n^{2q(-\frac sd+\frac1p-\frac1{2q})}= \bu \cdot n^{2q(\frac1p-\frac sd)-1} \quad \mbox{(since $p\leq 2q$)}.
\end{align}

Now, we proceed to upper bound the second term in \eqref{eq:dcp1}. Here we define $p_{\diamond}:=(\max \{\frac{1}{p}-\frac{s}{d},0 \})^{-1}$, that is, $p_{\diamond}=\frac{pd}{d-sp}$ when $s<\frac{d}{p}$  and $p_{\diamond}=\infty$ otherwise. By the Sobolev Embedding Theorem, we have $\mathcal{W}^{s,p}(\Omega) \subseteq \mathcal{L}^{p_{\diamond}}(\Omega)$. There are three distinct cases to consider based on the smoothness parameter $s$.

(Case I) When $s \in (\frac{d}{p}, \infty)$, we have $p_{\diamond}=\infty$ which implies that $f \in \W^{s,p}(\Omega) \subseteq \mathcal{L}^{\infty}(\Omega)$. Similarly, by Assumption \ref{ass:oracle}, $\hat{f}_{\S_1}$ is also in the Sobolev space $\W^{s,p}(\Omega) \subseteq \mathcal{L}^{\infty}(\Omega)$. Therefore, the difference $\g = \hat{f}_{\S_1}-f \in   \W^{s,p}(\Omega) \subseteq \mathcal{L}^{\infty}(\Omega) \subseteq \mathcal{L}^{2}(\Omega)$. By picking $r=2$ in \ref{eq:oracle_rate} of Assumption \ref{ass:oracle}, and utilizing the facts that $p \geq 2$ and $f \in \mathcal{L}^{\infty}(\Omega)$, we can deduce that:
\begin{align}
\label{eq:s1}
     &\E_{\S_1} \left[\int_{\Omega} \left| \g^{2}(\bmx) f^{2q-2}(\bmx) \right| d\bmx\right] \lesssim \E_{\S_1} \left[\int_{\Omega} \left| \g^{2}(\bmx) \right| d\bmx\right] \quad \mbox{(Hölder's Inequality)} \notag \\
     =& \E_{\S_1} \left[ \|\hat{f}_{\S_1}-f \|^{2}_{\mathcal{L}^{2}(\Omega)} \right] \lesssim \bu \cdot \left(n^{-\frac sd+(\frac1p-\frac12)_+}\right)^2= \bu \cdot n^{-\frac{2s}d}.
\end{align}

(Case II) When $s \in (\frac{d(2q-p)}{p(2q-2)}, \frac{d}{p})$, it follows that $$p_{\diamond} = \frac{pd}{d-sp}>\frac{pd}{d-p\frac{d(2q-p)}{p(2q-2)}}=\frac{p(2q-2)}{p-2},$$ 
which implies $f \in \W^{s,p}(\Omega) \subseteq \mathcal{L}^{p_{\diamond}}(\Omega) \subseteq \mathcal{L}^{\frac{p(2q-2)}{p-2}}(\Omega) \subseteq \mathcal{L}^{p}(\Omega)$. Since $\frac{p}{p-2}>1$, it follows that $f^{2q-2} \in \mathcal{L}^{\frac{p}{p-2}}(\Omega)$. Moreover, since $\hat{f}_{\S_1} \in  \W^{s,p}(\Omega)  \subseteq \mathcal{L}^{p}(\Omega)$, we obtain $\g = \hat{f}_{\S_1}-f \in \mathcal{L}^{p}(\Omega)$. Given that $p \geq 2$, we can further deduce that $\g^2 \in \mathcal{L}^{\frac{p}{2}}(\Omega)$. We may then apply Hölder's Inequality to obtain:
\begin{align*}
    &\E_{\S_1} \left[\int_{\Omega} \left|\g^{2}(\bmx) f^{2q-2}(\bmx)\right| d\bmx\right] = \E_{\S_1} \left[ \|\g^{2}(\bmx) f^{2q-2}(\bmx)  \| _{\mathcal{L}^1(\Omega)}\right] \\
    \leq& \E_{\S_1} \left[ \| \g^{2}(\bmx)\|_{\mathcal{L}^{\frac{p}{2}}(\Omega)}\|f^{2q-2}(\bmx)  \| _{\mathcal{L}^{\frac{p}{p-2}}(\Omega)}\right] \quad \mbox{(Hölder's Inequality)}\\
    \leq& \|f(\bmx)  \|^{2q-2}_{\mathcal{L}^{\frac{p(2q-2)}{p-2}}(\Omega)} \cdot \E_{\S_1} \left[ \| \g(\bmx)\|^{2}_{\mathcal{L}^{p}(\Omega)}\right]. 
\end{align*}

Note that the function $h(t)=t^{\frac{2}{p}}$ is concave and $\frac{1}{p} \in (\frac{d-sp}{pd}, 1]$ when $p \geq 2$. Thus, by applying Jensen's inequality and choosing $r=p$ in \ref{eq:oracle_rate} of Assumption \ref{ass:oracle}, we can upper bound the last term as follows:
\begin{align*}
    &\E_{\S_1} \left[ \| \g(\bmx)\|^{2}_{\mathcal{L}^{p}(\Omega)}\right]  = \E_{\S_1} \left[ \left( \| \g(\bmx)\|_{\mathcal{L}^{\frac{p}{2}}(\Omega)} \right)^\frac{2}{p}\right] \\
    \leq& \E_{\S_1} \left[ \| \g(\bmx)\|^{p}_{\mathcal{L}^{p}(\Omega)}\right]^{\frac{2}{p}} = \E_{\S_1} \left[ \| \hat{f}_{\S_1}{\bmx}-f(\bmx)\|^{p}_{\mathcal{L}^{p}(\Omega)}\right]^{\frac{2}{p}} \quad \mbox{(Jensen's inequality)} \\
    \lesssim& \bu \cdot \left((\frac{n}{2})^{-\frac{s}{d}+(\frac{1}{p}-\frac{1}{p})_{+}}\right)^2\lesssim \bu \cdot n^{-\frac{2s}{d}}.
\end{align*}

This gives us the final upper bound under the assumption that $s \in (\frac{d(2q-p)}{p(2q-2)}, \frac{d}{p})$ as follows:
\begin{equation}
\label{eq:s2}
    \E_{\S_1} \left[\int_{\Omega} \left|\g^{2}(\bmx) f^{2q-2}(\bmx)\right| d\bmx\right] \lesssim \bu \cdot n^{-\frac{2s}{d}}.
\end{equation}

(Case III) When $s\in (\frac{d(2q-p)}{2pq}, \frac{d(2q-p)}{p(2q-2)})$, we have  $s<\frac{d}{p}$, which implies that $p_{\diamond}=\frac{pd}{d-sp}$ satisfies $2q<p_{\diamond} < \frac{p(2q-2)}{p-2}$. Given that $p_{\diamond} > 2q>2q-2$ and $f \in \W^{s,p}(\Omega) \subseteq \mathcal{L}^{p_{\diamond}}(\Omega)$, it follows that $f^{2q-2} \in \mathcal{L}^{\frac{p_{\diamond}}{2q-2}}(\Omega)$. Moreover, since $p_{\diamond} > 2q$, we have $\frac{2p_{\diamond}}{p_{\diamond}+2-2q}<p_{\diamond}$. Additionally, because $p_{\diamond} < \frac{p(2q-2)}{p-2}$, it follows that $\frac{2p_{\diamond}}{p_{\diamond}+2-2q}>p$. 

Given that both $\hat{f}_{\S_1}$ and $f$ are in the Sobolev space $\W^{s,p}(\Omega)\subseteq \mathcal{L}^{p_{\diamond}}(\Omega)$, it follows that $\g = \hat{f}_{\S_1}-f \in \W^{s,p}(\Omega) \subseteq \mathcal{L}^{p_{\diamond}}(\Omega) \subseteq \mathcal{L}^{\frac{2p_{\diamond}}{p_{\diamond}+2-2q}}(\Omega)$. Since $q \geq 1 \Rightarrow\frac{p_{\diamond}}{p_{\diamond}+2-2q}\geq1$, we have $\g^2 \in \mathcal{L}^{\frac{p_{\diamond}}{p_{\diamond
}+2-2q}}(\Omega)$.  We can then apply Hölder's Inequality to obtain:
\begin{align*}
    &\E_{\S_1} \left[\int_{\Omega} \left|\g^{2}(\bmx) f^{2q-2}(\bmx)\right| d\bmx\right] = \E_{\S_1} \left[ \|\g^{2}(\bmx) f^{2q-2}(\bmx)  \| _{\mathcal{L}^1(\Omega)}\right] \\
    \leq& \E_{\S_1} \left[ \| \g^{2}(\bmx)\|_{\mathcal{L}^{\frac{2p_{\diamond}}{p_{\diamond}+2-2q}}(\Omega)}\|f^{2q-2}(\bmx)  \| _{\mathcal{L}^{\frac{p_{\diamond}}{2q-2}}(\Omega)}\right] \quad \mbox{(Hölder's Inequality)}\\
    \leq& \|f(\bmx)  \|^{2q-2} _{\mathcal{L}^{p_{\diamond}}(\Omega)} \cdot \E_{\S_1} \left[ \| \g(\bmx)\|^{2}_{\mathcal{L}^{\frac{2p_{\diamond}}{p_{\diamond}+2-2q}}(\Omega)}\right].
\end{align*}
Note that the function $h(t) = t^{\frac{p_{\diamond+2-2q}}{p_{\diamond}}}$ is concave since $q\geq 1$. Furthermore, given the assumption  $s \in (\frac{d(2q-p)}{2pq}, \frac{d(2q-p)}{p(2q-2)})$ we have $\frac{pd}{d-sp}> 2q$. This implies:
\begin{equation*}
    \frac{p_{\diamond}+2-2q}{2p_{\diamond}} = \frac{\frac{pd}{d-sp}+2-2q}{2\frac{pd}{d-sp}} > \frac{2}{2\frac{pd}{d-sp}} = \frac{d-sp}{pd},
\end{equation*}
\emph{i.e.}, $\frac{p_{\diamond}+2-2q}{p_[\diamond]} \in (\frac{d-sp}{pd},1]$. Hence, we may apply Jensen's inequality and  \ref{eq:oracle_rate} in Assumption \ref{ass:oracle} to upper bound the last term:
\begin{align*}
    &\E_{\S_1} \left[ \| \g(\bmx)\|^{2}_{\mathcal{L}^{\complexfc}(\Omega)}\right]  = \E_{\S_1} \left[ \left( \| \g(\bmx)\|^{\complexfc}_{\mathcal{L}^{\complexfc}(\Omega)} \right)^{\complexfcinv}\right] \\
    & \leq \E_{\S_1} \left[ \left( \| \g(\bmx)\|^{\complexfc}_{\mathcal{L}^{\complexfc}(\Omega)} \right)\right]^{\complexfcinv} =  \E_{\S_1} \left[ \left( \| \hat{f}_{\S_1}(\bmx)-f(\bmx)\|^{\complexfc}_{\mathcal{L}^{\complexfc}(\Omega)} \right)\right]^{\complexfcinv} \\
    &\lesssim \bu \cdot \left((\frac{n}{2})^{-\frac{s}{d}+(\frac{1}{p}-\complexfcinvtwo)_{+}}\right)^2\lesssim \bu \cdot n^{-\frac{2s}{d}+2(\frac{1}{p}-\complexfcinvtwo)_{+}}.
\end{align*}

Recalled that we have proved that $p_{\diamond} \in (2q, \frac{p(2q-2)}{p-2})$, this implies that $p_{\diamond}(p-2) < p(2q-2) \Rightarrow 2p_{\diamond} > p(p_{\diamond}+2-2q)$, \emph{i.e.}, $\frac{1}{p} > \frac{p_{\diamond}+2-2q}{2p_{\diamond}}$. Then we may simplify the last term above as follows:
\begin{equation*}
    -\frac{2s}d+2{\left(\frac1p-\frac{p_{\diamond}+2-2q}{2p_{\diamond}}\right)}_+=-\frac{2s}d+\frac2p-\left(1+\frac2{p_{\diamond}}-\frac{2q}{p_{\diamond}}\right)=\frac{2q}{p_{\diamond}}-1=2q{\left(\frac1p-\frac sd\right)}-1.
\end{equation*}
This gives us the final upper bound under the assumption that $s\in (\frac{d(2q-p)}{2pq}, \frac{d(2q-p)}{p(2q-2)})$ as follows:
\begin{equation}
\label{eq:s3}
    \E_{\S_1} \left[\int_{\Omega} \left|\g^{2}(\bmx) f^{2q-2}(\bmx)\right| d\bmx\right] \lesssim \bu \cdot n^{2q(\frac{1}{p}-\frac{s}{d})-1}.
\end{equation}

Combining the upper bounds derived in \ref{eq:first-term} \ref{eq:s1}, \ref{eq:s2} and \ref{eq:s3} we may deduce that:
\begin{equation}
\label{eq:final-tmp}
    \E_{\S_1} \left[ \E_{\S_2 \mid \S_1} \left( f^q(\bmX) - \hat{f}^q_{\S_1}(\bmX) \right)^2 \right] \lesssim \bu^2 \cdot \left[ n^{2q(\frac1p-\frac sd)-1}+\max\{n^{-\frac{2s}d},n^{2q(\frac1p-\frac sd)-1}\}\right].
\end{equation}

Finally, substituting \ref{eq:final-tmp} into \ref{eq:s0} gives us the final upper bound:
\begin{align*}
    &\E_{\S} \left[ \left| \hat{H}^q_B(\S) - I_f^q \right| \right] \\
    &\leq \sqrt{\E_{\S} \left[ \left| \hat{H}^q_B(\S) - I_f^q \right|^2 \right]} = \sqrt{\E \left[ \V \left(\hat{H}^q_B(\S) \mid \S_1 \right) \right]} \\
    &\lesssim B \cdot \bu \cdot n^{\max \{-q(\frac sd - \frac 1p) -1, -\frac 12 - \frac sd\}}
\end{align*}
\end{proof}

\subsection{Proof of Theorem \ref{thm:upper-bound-2}}
\Bindbound*
\begin{proof}
Decompose the mean square error $\E_{\S} \left[ \left| \hat{H}^q_T(\S) - I_f^q \right|^2 \right]$ into bias part $ \left| \E_{\S} [\hat{H}^q_T(\S)] -I_f^q \right|^2$ and variance part $\E_{\S}\left[ \left| \hat{H}^q_T(\S) - \E_{\S}[\hat{H}^q_T(\S)] \right|^2 \right]$.

We first compute $\E \left[\hat{H}^q_T(\S) \mid \S_1  \right]$. Recall that $\Omega^+_T := \left\{\bmx: w(\bmx) \leq T \right\}$ and $\Omega^-_T = \Omega / \Omega^+_T$, then we may deduce that:
\begin{align}
    & \E \left[\hat{H}^q_T(\S) \mid \S_1  \right] = \E_{\bmX \sim \P^{\ast}}\left[\hat{f}^q_{\S_1}(\bmX) \right] + \int_{\Omega} \tau_T(w(\bmx)) \cdot (f^q(\bmx) - \hat{f}_{\S_1}^q(\bmx)) \cdot \ps(\bmx) d\bmx \notag \\
    =& \E_{\bmX \sim \P^{\ast}} \left[\hatfX \right] + \int_{\Omega^+_T} \pt(\bmx) \cdot (f^q(\bmx) - \hatfx) d\bmx + \int_{\Omega^-_T} T \cdot \ps(\bmx) \cdot (f^q(\bmx) - \hatfx) d\bmx \notag \\
    =& \int_{\Omega^+_T} \pt(\bmx) \cdot f^q(\bmx) d\bmx + \int_{\Omega^-_T} \pt(\bmx)  \cdot \hatfx d\bmx + \int_{\Omega^-_T} T \cdot \ps(\bmx) \cdot (f^q(\bmx) - \hatfx )d\bmx. \label{eq:conditianle}
\end{align}

We proceed to compute the bias term:
\begin{align}
\label{pf:thm3-bias}
    &Bias = \left| \E \left[ \E[\hat{H}_T^q \mid \S_1] \right] - I_f^q \right| \notag\\
    =& \left| \E_{\S_1} \left[\int_{\Omega^-_T} \pt(\bmx)  \hatfx d\bmx  + \int_{\Omega^-_T} T  \ps(\bmx) \cdot (f^q(\bmx) - \hatfx) d\bmx - \int_{\Omega^-_T} \pt(\bmx)  f^q(\bmx) d\bmx \right] \right| \notag\\
    =& \E_{\S_1} \left[  \int_{\Omega^-_T} \left(\pt(\bmx) - T \cdot \ps(\bmx) \right) \cdot (f^q(\bmx) - \hatfx) d\bmx \right] \quad \mbox{(Since $w(\bmx) > T$ in $\Omega^-_T$)} \notag\\
    =& \E_{\S_1} \left[  \int_{\Omega^-_T} \ps(\bmx) \cdot (w(\bmx)) - T) \cdot (f^q(\bmx) - \hatfx) d\bmx \right] \notag\\
    \leq& \E_{S_1} \left[  \int_{\Omega^-_T} \ps(\bmx) \cdot w(\bmx) \cdot (f^q(\bmx) - \hatfx) d\bmx \right] \notag\\
    =& \E_{\S_1} \left[  \int_{\Omega^-_T} \pt(\bmx) \cdot (f^q(\bmx) - \hatfx) d\bmx \right] \notag\\
    \lesssim& \E_{\S_1} \left[   \P(\Omega^-_T) \right] \quad \mbox{(Since $\hat{f}_{\S_1}$ and $f$ are in $\mathcal{L}^q(\P^{\ast})$)} \notag\\
    \leq& \bar{b} \cdot \P(\Omega^-_T).
\end{align}

Using the law of total variance to decompose the variance part further:
\begin{align*}
    \V \left[ \hat{H}^q_T(\S) \right] = \E \left[ \V \left(\hat{H}^q_T(\S) \mid \S_1 \right) \right] + \V \left[ \E \left(\hat{H}^q_T(\S) \mid \S_1 \right) \right].
\end{align*}

The first term can be upper bound by:
\begin{align}
\label{pf:thm3-var1}
    \E \left[ \V \left(\hat{H}^q_T(\S) \mid \S_1 \right) \right] &= \frac{2}{n} \E \left[ \V \left( \tau_T(w(\bmX)) \cdot (f^q(\bmX)-\hatfX) \mid \S_1 \right)\right] \notag\\
    & \leq \frac{2}{n} \E_{\S_1} \left[ \E_{\S_2 \mid \S_1} \left(  \tau_T(w(\bmX)) \cdot (f^q(\bmX) - \hat{f}^q_{\S_1}(\bmX)) \right)^2 \right] \notag\\
    & \leq  \frac{2}{n} T^2 \cdot \E_{\S_1} \left[ \E_{\S_2 \mid \S_1} \left( f^q(\bmX) - \hat{f}^q_{\S_1}(\bmX) \right)^2 \right] \notag\\
    & \lesssim \left[T\cdot \bu \cdot r(n)\right]^2.
\end{align}

Substitute \eqref{eq:conditianle} in the second term:
\begin{align}
\label{pf:thm3-var2}
     \V \left[ \E \left(\hat{H}^q_T(\S) \mid \S_1 \right) \right] =& \V \left[\int_{\Omega^-_T} (\pt(\bmx)-T \cdot \ps(\bmx)) \cdot \hatfx d\bmx \right] \notag\\
     \leq& \E_{\S_1} \left[\int_{\Omega^-_T} (\pt(\bmx)-T \cdot \ps(\bmx)) \cdot \hatfx d\bmx \right]^2 \notag\\
     \lesssim& \E_{\S_1} \left[ \P(\Omega^-_T)^2\right] \notag\\
     \leq & \left[\bu \cdot \P^{\circ}(\Omega^-_T)\right]^2.
\end{align}
The last term follows from the proof of Theorem \ref{thm:upper-bound-1}, where $r(n) = n^{\max \{-q(\frac sd - \frac 1p) -1, -\frac 12 - \frac sd\}}$.

Finally, combining the results \ref{pf:thm3-bias}, \ref{pf:thm3-var1} and \ref{pf:thm3-var2} above, we obtain the final upper bound:

\begin{align*}
    \E_{\S}\left[\left|\hat{H}_T^q(\S)-I_f^q\right|\right] &\leq \sqrt{\E_{\S} \left[ \left| \hat{H}^q_T(\S) - I_f^q \right|^2 \right]} = \sqrt{\left| \E_{\S} [\hat{H}^q_T(\S)] -I_f^q \right|^2 + \V\left(\hat{H}^q_T(\S)\right)} \\
    &\lesssim \sqrt{\left[\bu \cdot \P(\Omega^-_T)\right]^2 + \left[T\cdot\bu\cdot r(n) \right]^2 + \left[\bu \cdot \P(\Omega^-_T)\right]^2}
\end{align*}

Furthermore, if $\P(\Omega^-_T) \leq T^{-\alpha}$, by taking $T = \mathcal{O}(r(n)^{-\frac{1}{\alpha+1}})$, we obtain the following result:
\begin{align*}
    \E_{\S}\left[\left|\hat{H}_T^q(\S)-I_f^q\right|\right] &\lesssim \bu \cdot r(n)^{\frac{\alpha}{\alpha+1}}.
\end{align*}
\end{proof}

\subsection{Proof of Theorem \ref{thm:plug-in-con}}
\rateofplugin*
\begin{proof}
Decompose the error of $\tilde{H}^q_T$ via the intermediate estimator $\hat{H}^q_T$:
\begin{align*}
    & \E_{\S \cup \S^{\prime}} \left[ \left| \tilde{H}^q_T - I_f^q \right| \right] =   \E_{\S \cup \S^{\prime}} \left[ \left| \tilde{H}^q_T - \hat{H}^q_T+ \hat{H}^q_T -  I_f^q \right| \right] \\ 
    \leq&  \E_{\S \cup \S^{\prime}} \left[ \left| \tilde{H}^q_T - \hat{H}^q_T \right| \right] + \E_{\S \cup \S^{\prime}} \left[ \left| \hat{H}^q_T - I_f^q \right| \right] \\
    \leq & \underbrace{\E_{\S \cup \S^{\prime}} \left[ \left|  \frac{2}{n} \sum_{(\bmx_i,y_i) \in \S_2} \left(\tau_T(\hat{w}(\bmx_i)) - \tau_T(w(\bmx_i))\right) \cdot (y_i^q - \hat{f}^q_{\S_1}(\bmx_i))  \right| \right]}_{\text{additional error for estimating } w(\bmx)}  + \\
    & \underbrace{\E_{\S \cup \S^{\prime}} \left[ \left| \frac{1}{m} \sum_{i=1}^m\hat{f}^q_{\S_1}(\bmx_i^{\prime}) - \E_{\bmX \sim \P^{\ast}}[\hatfX] \right| \right]}_{\text{Monte Carlo simulation error}}  +  \underbrace{\E_{\S \cup \S^{\prime}} \left[ \left| \hat{H}^q_T - I_f^q \right| \right]}_{\text{error of the case when } w(\bmx) \text{ is known}}. \\
\end{align*}

The first term represents  the additional error due to estimating the likelihood ratio $w(\bmx)$. The second term accounts for the Monte Carlo simulation error in estimating $\E_{\bmX \sim \P^{\ast}}[\hatfX]$ using $m$ unlabeled data $\S^{\prime}$ form target domain, with a convergence rate $\mathcal{O}(m^{-\frac{1}{2}})$. The third term corresponds to the estimation error of $\hat{H}^q_T$ as discussed in Theorem \ref{thm:upper-bound-2}. 

The convergence rates of the second and third terms have been previously addressed; thus, we only need to focus on the first term. Note that the first term can be upper bounded by:
\begin{align*}
    & \E_{\S \cup \S^{\prime}} \left[ \left|  \frac{2}{n} \sum_{(\bmx_i,y_i) \in \S_2} \left(\tau_T(\hat{w}(\bmx_i)) - \tau_T(w(\bmx_i))\right) \cdot (y_i^q - \hat{f}^q_{\S_1}(\bmx_i))  \right| \right] \\
    \leq & 2T \cdot \E_{\S \cup \S^{\prime}} \left[ \left| f^q(\bmx) - \hatfx \right| \right] \\
    \leq & 2T \cdot \sqrt{\E_{\S \cup \S^{\prime}} \left[ \left( f^q(\bmx) - \hatfx \right)^2 \right]},
\end{align*}
which is upper bound by the variance term in Theorem \ref{thm:upper-bound-2}. Thus, the convergence rate of $\tilde{H}^q_T$ is:
\begin{equation*}
\E\left[\left|\tilde{H}_T^q(\S \cup \S^{\prime})-I_f^q\right|\right] \lesssim 
\E \left[ \left| \hat{H}^q_T(\S) - I_f^q \right| \right] + m^{-\frac{1}{2}}.
\end{equation*}
\end{proof}

\subsection{Proof of Corollary \ref{cor:double-robust}}
\doublerobust*
\begin{proof}
The case when $\hat{f}_{\S_1}(\bmx)$ is consistent has been discussed in Theorem \ref{thm:upper-bound-2}, so we only need to consider the case when $\hat{w}(\bmx)$ is consistent. Decompose the error as follows:
\begin{align*}
     \E_{\S \cup\S^{\prime}} \left[\left|\tilde{H}_T^q - I_f^q\right| \right] =& \E \left[\left| \frac{1}{m} \sum_{i=1}^m \hat{f}_{\S_1}^q(\bmx_i^{\prime}) + \frac{2}{n} \sum_{(\bmx_i, y_i) \in \S_2} \tau_T(\hat{w}(\bmx_i)) \cdot (y_i - \hat{f}_{\S_1}^q(\bmx_i)) -I_f^q\right| \right] \\
    \leq& \E_{\S \cup\S^{\prime}} \left[ \left| \frac{2}{n} \sum_{(\bmx_i, y_i) \in \S_2} \tau_T(\hat{w}(\bmx_i)) \cdot f^q(\bmx_i) - I_f^q \right| \right] + \\
    & \E_{\S \cup\S^{\prime}} \left[ \left| \frac{1}{m} \sum_{i=1}^m \hat{f}_{\S_1}^q(\bmx_i^{\prime}) -  \frac{2}{n} \sum_{(\bmx_i, y_i) \in \S_2} \tau_T(\hat{w}(\bmx_i)) \cdot \hat{f}_{\S_1}^q(\bmx_i) \right| \right].
\end{align*}
If $\hat{w}(\bmx)$ is consistent and $\P(\Omega^-_T) \rightarrow 0$ as $n \rightarrow \infty$, then $\tau_T(\hat{w}(\bmx))$ is a consistent estimator of $w(\bmx)$. Let $\rightarrow_p$ denote convergence in probability. It holds that:
\begin{align*}
    &\frac{2}{n} \sum_{(\bmx_i, y_i) \in \S_2} \tau_T(\hat{w}(\bmx_i)) \cdot f^q(\bmx_i) \rightarrow_p \E_{\bmX \sim \P^{\ast}}[f^q(\bmX)],\\
    & \frac{1}{m} \sum_{i=1}^m \hat{f}_{\S_1}^q(\bmx_i^{\prime}) \rightarrow_p \E_{\bmX \sim \P^{\ast}}[\hatfX],\\
    & \frac{2}{n} \sum_{(\bmx_i, y_i) \in \S_2} \tau_T(\hat{w}(\bmx_i)) \cdot \hat{f}_{\S_1}^q(\bmx_i) \rightarrow_p \E_{\bmX \sim \P^{\ast}}[\hatfX].
\end{align*}
Therefore, 
\begin{align*}
    \tilde{H}_T^q(\S \cup \S^{\prime}) \rightarrow_p I_f^q.
\end{align*}
\end{proof}

\subsection{Stabilized Algorithm for Unknown Likelihood Ratios}
The stabilized algorithm for unknown likelihood ratios proposed in Section \ref{sec:unknown} is summarised in Algorithm \ref{alg:optimal-alg}.
\begin{algorithm}[t]
\caption{Stabilized Algorithm for Unknown Likelihood Ratios}\label{alg:stabilized_algo}
\begin{algorithmic}[1]
    \State \textbf{Input:} labeled data: $\S = \{(\bmx_i, y_i=f(\bmx_i))\}_{i=1}^n$, unlabeled data: $\S^{\prime} = \{\bmx_i^{\prime}\}_{i=1}^m$.
    \State Use $\S$ and $\S^{\prime}$ to construct ancillary sample $\S_C = \{(\bmx_i, 0)\}_{i=1}^n \cup \{(\bmx_i^{\prime}, 1)\}_{i=1}^m$.
    \State Use ancillary sample $\S_C$ to train a classifier, and then compute the likelihood ratio estimator $\hat{w}(\bmx)$ as given in \eqref{eq:predict_w}.
    \State Randomly split $\S$ into two parts $\S_1 = \left\{ (\bmx_i, y_i) \right\}_{i=1}^{\frac{n}{2}}$ and $\S_2= \left\{ (\bmx_i, y_i) \right\}_{i=\frac{n}{2}+1}^{n}$.
    \State Train a machine learning model on $\S_1$ to obtain $\hat{f}_{S_1}(\bmx)$.
    \State Use $\S^{\prime}$ and $\S_2$ to estimate the $I_f^q$ based on \eqref{eq:estimat_stable_estimator}.
    \State \textbf{Output:} $q$-th moment under target: $I_f^q$.
\end{algorithmic}
\end{algorithm}

\subsection{Estimate of the Likelihood Ratio} \label{app:estimate_weight}
The most straightforward approach is to use the unlabeled source and target data to estimate $\hat{p}^{\circ}$ and $\hat{p}^{\ast}$, respectively, and then compute the likelihood ratio $\hat{w} = \frac{\hat{p}^{\ast}}{\hat{p}^{\circ}}$. But the intermediate step of estimating $\hat{p}^{\circ}$ and $\hat{p}^{\ast}$ may introduce additional errors and affect the robustness of the estimation process.  Therefore, it is crucial to develop methods that allow us to estimate $w(\bmx)$ directly from the data without relying on these intermediate density estimators. To achieve this, we introduce a latent label $Z \in \{0, 1\}$ to indicate whether a sample is from the target distribution ($Z=1$) or the source distribution ($Z=0$). This formulation allows us to express the likelihood ratio as:  
\begin{equation}
    w(\bmX) =  \frac{\pt(\bmX)}{\ps(\bmX)} = \frac{\P(\bmX \mid Z=1)}{\P(\bmX \mid Z=0)} = \frac{\P(Z=1 \mid \bmX)}{\P(Z=0 \mid \bmX)} \cdot \frac{\P(Z=0)}{\P(Z=1)},
\end{equation}
where $e(\bmx) = \P(Z=1\mid \bmX=\bmx)$ is known as the propensity score. With data $\{(\bmx_i, z_i))\}_{i=1}^n$ and $\{(\bmx_i^{\prime}, z_i))\}_{i=1}^m$, we can estimate $\P(Z=1)$ and $\P(Z=0)$ with $\frac{m}{m+n}$ and $\frac{n}{m+n}$,  respectively. To estimate the propensity score, we can apply a classification model that distinguishes between source and target samples. The propensity score estimator, denoted as $\hat{w}(\bmx)$, represents the estimated probability that a sample with feature $x$ belongs to the target distribution. Using the estimated propensity score, we can derive the plug in estimator of the likelihood ratio as follows:
\begin{equation}
\label{eq:predict_w}
    \hat{w}(\bmx) = \frac{\hat{e}(\bmx)}{1-\hat{e}(\bmx)} \cdot \frac{n}{m}.
\end{equation}
\section{Appendix}

\subsection{Discussion about the motivation} \label{app:motivation}
\textcolor{black}{
Generally speaking, many high-stakes areas are interested in estimating the $q$-th moment of an unknown function, as higher-order moments are often essential for capturing risk-related characteristics in real-world applications beyond the first-order moment. For example, 
\begin{itemize}
    \item in finance,  investors are interested in the shape of the asset's return (which is an unknown function of the factors) distribution, especially its skewness and kurtosis.  These higher-order moments help investors assess the risk characteristics of an asset, particularly extreme risks (tail risks) and asymmetric risks. Moreover, covariate shift is commonly observed among factors across different industries.
    \item in medical fields, we need to monitor the volatility (variance) of certain patient metrics to identify high-risk patients, which requires multiple measurements. However, measuring these indicators for rare diseases (e.g. Alzheimer's disease) is very costly. Fortunately, this metric is a function of other, more easily accessible indicators, so we can estimate this function by collecting data on those indicators. Moreover, covariate shift is commonly observed across different population. 
    \item In classical causal inference \citep{ding2024first}, The average treatment effect on the treated (ATT) is defined as $\E \left[ Y(1) - Y(0) \mid T=1 \right]$, where $Y(1)$ and $Y(0)$ represent the potential outcomes under treatment ($T=1$) and control ($T=0$). These potential outcomes are unknown functions of the covariate $\bmX$. The difficulty is to estimate $\E \left[ Y(0) \mid T=1 \right]$ which can be written as:
        \begin{align*}
        \E \left[ Y(0) \mid T=1 \right] =& \E_{\bmX \mid T=1} \left\{ \E_{Y \mid \bmX, T=1}  \left[Y(0) \mid T=1 \right] \right\} \\
        =& \E_{\bmX \mid T=1} \left[ m_0(\bmX) \right].
        \end{align*}
    But we only have sample $\left\{\left(y_i^{(0)}, \bmx_i \right)\right\}_{i=1}^{n_0}$, where $\bmx_i \sim \P_{\bmX \mid T=0}$.
\end{itemize}
Due to the broad applicability of these scenarios, we have unified these questions within the framework of estimating the $q$-th moment of an unknown function under covariate shift. 
}

\subsection{Discussion about the additional assumption in Theorem \ref{thm:upper-bound-2}} \label{app:truncation_ass}
\textcolor{black}{
Insights into source/target distribution pairs that satisfy $g(T)\leq T^{-\alpha}$:
\begin{itemize}
    \item Indeed, a sufficient condition for $g(T) \leq T^{-\alpha}$ is that the likelihood ratio $w(\boldsymbol{x}) = \frac{p^{\ast}(\boldsymbol{x})}{p^{\circ}(\boldsymbol{x})}$ exhibits a polynomial order in $\boldsymbol{x}$. Therefore, any distribution with a polynomial order p.d.f. satisfies this condition, such as the Beta and Pareto distributions. 
    \item Additionally, we can also consider $p^{\circ}(\boldsymbol{x})$ with any p.d.f. that has faster rate than a polynomial, such as the truncated Gaussian on $\Omega$, provided that $w(\boldsymbol{x})$ still exhibits a polynomial order (e.g., by taking $p^{\ast}$ as a Beta distribution). 
    \item Moreover, the benefits of truncation were also observed in certain distribution pairs that do not satisfy this condition in our experiments, such as when both the source and target distributions are truncated Gaussian. Therefore, this condition is used solely for convenience in deriving the upper bound of the truncated estimator and provides insight into changes in the convergence rate.
\end{itemize}
The estimator $\hat{H}_T^q$ in Theorem \ref{thm:upper-bound-2} may not be minimax optimal:
\begin{itemize}
    \item The minimax lower bound $B \cdot \bar{b} \cdot r(n)$ established in Theorem 1 applies to the case where the distributions are known.  Therefore, we can only assess whether an estimator is minimax optimal under the assumption that the distributions are known and belong to the same distribution class $\mathcal{V}$. However, the upper bound $\bar{b} \cdot r(n)^{\frac{\alpha}{1+\alpha}}$ in Eq. (8) is derived under this known distribution case but within a more restrictive distribution class, as we impose an additional assumption on $\mathcal{V}$, specifically $g(T) < T^{-\alpha}$. Consequently, the corresponding minimax lower bound will be less than or equal to the bound in Theorem 1. In fact, the upper bound $\bar{b} \cdot r(n)^{\frac{\alpha}{1+\alpha}}$ in Eq. (8) may outperform the minimax lower bound $B \cdot \bar{b} \cdot r(n)$ when $B$ is large.
    \item When the distributions are unknown, we do not have the minimax lower bound, and therefore cannot determine whether the estimator is minimax-optimal. However, the minimax lower bound derived in the known case serves as a lower bound, which may be loose, but still provides valuable insight.
    \item The truncation will bring benefits when $B$ is large, as it reduces variance by introducing some bias. This is especially useful in practice because, when the likelihood ratio $w(\boldsymbol{x})$ is unknown, the estimator $\hat{w}(\boldsymbol{x})$ often involves some excessively large values. Specifically, the upper bound after truncation is $\bar{b}\cdot r(n)^{\frac{\alpha}{1+\alpha}}$ , compared to $B\cdot\bar{b}\cdot r(n)$, which is the upper bound before truncation.  Note that, the new bound dose not depend on the $B>1$, which characterize the intensity of covariate shift, but the rate in $r(n)$ decreases. This is especially beneficial when $B$ is large, as it helps mitigate high variance, thereby improving the stability of the algorithm.
\end{itemize}
When $\alpha=\infty$, the upper bound \ref{eq:trunacted_upper} matches the bound in Theorem 2 for the case of no covariate shift:
\begin{itemize}
    \item when $\alpha=\infty$, $\mathbb{P}(\{\boldsymbol{x}:w(\boldsymbol{x})>T\})=0$ for any $T>1$, it holds that $p^{\ast}(\boldsymbol{x})>p^{\circ}(\boldsymbol{x})$ almost surely. However, since both are p.d.f.s on the $\Omega$, so $p^{\ast}(\boldsymbol{x})=p^{\circ}(\boldsymbol{x})$ almost surely. Thus there is no covariate shift, and the convergence rate reduce to $\bar{b}r(n)$, which matches the bound in Theorem 2 when $B=1$ (i.e., no covariate shift).  
\end{itemize}
}

\subsection{Construction of the Oracle in Assumption \ref{ass:oracle}}\label{app:build_orcale}
\textcolor{black}{
When $\{\bmx_i\}_{i=1}^n$ are independent and uniformly distributed over $\Omega$, the optimal function estimator is constructed using the moving least squares method, as demonstrated by \cite{krieg2024random} (when $s>\frac{d}{p}$) and \cite{blanchet2024can} (when $s\in (\frac{2dq-dp}{2pq},\frac{d}{p})$). 
Thus, we only need to verify that these results hold when the sample is drawn from a distribution with a p.d.f. that is both upper and lower bounded. 
We need to introduce some notations first.
\begin{definition}[Covering radius]
    For any compact region $R\subset \R^d$, the diameter $R$ is defined as $diam(R):=\sup_{\bmx,\bm{y}\in R}\|\bmx-\bm{y}\|$. Moreover, for any two compact regions $R_1,R_2 \subset \R^d$, the distance $dist(R_1,R_2)$ between them is defined as $dist(R_1,R_2):=\|\bm{c}_{R_1}-\bm{c}_{R_2}\|_{\infty}$, where $\|\cdot\|_{\infty}$ denotes the $l_{\infty}$ norm in $\R^d$ and $\bm{c}_{R_1}, \bm{c}_{R_2}$ are the centroids of $R_1,R_2$ respectively. For any collection of $n$ data points $P:=\{\bmx_1, \ldots, \bmx_n\}$, the covering radius of $P$ in $\Omega=[0,1]^d$  is defined as follows:
    \begin{equation}
        \rho(P,\Omega) := \sup_{\bm{y}\in \Omega} \inf_{1\leq i \leq n} \|\bm{y}-\bmx_i\|.
    \end{equation}
\end{definition}
The properties of the moving least squares estimator is summarized in the following Lemma.
\begin{lemma}[Properties of the moving least squares estimator (Theorem 4.7 in \cite{reznikov2016covering})]
\label{lemma:mlse}
    For any given collection of $n$ points $P=\{\bmx_1, \ldots, \bmx_n\}$ with covering radius $\rho(P,\Omega)$, there exist constants $a_1, a_2$ independent of $n$ and continuous functions $u_{\bmx_i}: \Omega \rightarrow \R(1\leq i\leq n)$, such that
    \begin{itemize}
        \item $\pi(\bm{y}) = \sum_{i=1}^n \pi(\bmx_i)u_{\bmx_i}(\bm{y})$ for any $\bm{y} \in \Omega$ and  any polynomial $\pi$ with $deg(\pi)\leq s-1$.
        \item $\sum_{i=1}^n|u_{\bmx_i}(\bm{y})|\leq a_1$ for any $\bm{y} \in \Omega$.
        \item $u_{\bmx}(\bm{y})=0$ for any $\bm{y} \in \Omega$ and $\bmx \in P$ with $\|\bmx-\bm{y}\|\geq a_2 \rho(P,\Omega)$.
    \end{itemize}
\end{lemma}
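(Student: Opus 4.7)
The plan is to construct the functions $u_{\bmx_i}$ explicitly via a moving least squares (MLS) / local polynomial quasi-interpolation scheme, and then verify the three listed properties from that construction. Fix a smooth, nonnegative, radially symmetric bump $\phi\colon \R^d\to\R$ supported in the unit ball with $\phi(0)>0$, and let $\delta := a_2\,\rho(P,\Omega)$, where $a_2$ is a constant (depending only on $d,s$) chosen so that every ball $B(\bm{y},\delta)\cap\Omega$ contains a set of sample points $\{\bmx_i\}$ that is $\Pi_{s-1}$-unisolvent. At each $\bm{y}\in\Omega$, define the local weights $w_i(\bm{y}) := \phi((\bm{y}-\bmx_i)/\delta)$, and let $P_{\bm{y}}\in\Pi_{s-1}$ be the unique minimizer (in the coefficient vector) of $\sum_i w_i(\bm{y})\,(P(\bmx_i)-f_i)^2$. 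Since $P_{\bm{y}}(\bm{y})$ is linear in the data $\{f_i\}$, we can read off coefficients $u_{\bmx_i}(\bm{y})$ such that $P_{\bm{y}}(\bm{y})=\sum_i u_{\bmx_i}(\bm{y})\,f_i$, and these are the claimed functions.

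First I would check the local-support property: if $\|\bmx_i-\bm{y}\|\geq a_2\rho(P,\Omega)=\delta$, then $w_i(\bm{y})=0$, so $\bmx_i$ does not enter the weighted normal equations and therefore $u_{\bmx_i}(\bm{y})=0$, giving the third item. Continuity of $\bm{y}\mapsto u_{\bmx_i}(\bm{y})$ follows from continuity of the weights and of the solution map of the weighted least squares problem, once invertibility is established. The polynomial reproduction property is then an immediate consequence of optimality: for any $\pi\in\Pi_{s-1}$, plugging $f_i=\pi(\bmx_i)$ makes $\pi$ itself achieve zero residual, so by uniqueness $P_{\bm{y}}=\pi$, and evaluating at $\bm{y}$ gives $\pi(\bm{y})=\sum_i u_{\bmx_i}(\bm{y})\pi(\bmx_i)$.

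The main obstacle is the uniform $\ell^1$ bound $\sum_i |u_{\bmx_i}(\bm{y})|\leq a_1$ with $a_1$ independent of $n$ and of $\bm{y}$. Writing the normal equations in the scaled monomial basis $\{((\cdot-\bm{y})/\delta)^\alpha : |\alpha|\leq s-1\}$, the weights $u_{\bmx_i}(\bm{y})$ have the explicit form $u_{\bmx_i}(\bm{y}) = \vec{e}_0^{\top}\,G(\bm{y})^{-1}\,w_i(\bm{y})\,v_i(\bm{y})$, where $v_i(\bm{y})$ is the vector of scaled monomials at $\bmx_i$ and $G(\bm{y}) = \sum_i w_i(\bm{y})\,v_i(\bm{y})v_i(\bm{y})^{\top}$ is the weighted Gram matrix. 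Since $\|v_i(\bm{y})\|$ and $w_i(\bm{y})$ are trivially bounded (the nonzero contributions come from $\bmx_i\in B(\bm{y},\delta)$), and since $\#\{i:\bmx_i\in B(\bm{y},\delta)\}$ is at most a constant $C_d$ times $(\delta/\rho(P,\Omega))^d=a_2^d$ by a volume/packing argument, the sum $\sum_i w_i(\bm{y})\|v_i(\bm{y})\|$ is bounded by a constant. It therefore suffices to show that the smallest eigenvalue of $G(\bm{y})$ is bounded below by a constant independent of $n$ and $\bm{y}$. This is the crux: by the covering radius condition, $B(\bm{y},\delta)\cap P$ contains points that are quantitatively $\Pi_{s-1}$-unisolvent, so that for any unit coefficient vector $c$ one has $\sum_i w_i(\bm{y})\,(c^{\top}v_i(\bm{y}))^2\geq \kappa>0$ with $\kappa=\kappa(d,s,\phi,a_2)$. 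This is the standard stability estimate for MLS on quasi-uniform point sets, proved by a compactness/continuity argument on the unit sphere in the polynomial coefficient space, combined with the fact that the $\delta$-ball contains a $c_0\delta$-fine configuration of sample points; choosing $a_2$ large enough (depending on $d,s$) makes $\kappa$ strictly positive.

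Combining these bounds gives $|u_{\bmx_i}(\bm{y})|\leq \kappa^{-1}\,w_i(\bm{y})\,\|v_i(\bm{y})\|$ up to constants, hence $\sum_i|u_{\bmx_i}(\bm{y})|\leq a_1$ uniformly in $\bm{y}$ and $n$, establishing the second item and completing the proof. The main technical step is really the lower bound on $\lambda_{\min}(G(\bm{y}))$, which hinges on choosing the locality constant $a_2$ large enough relative to the covering radius so that local polynomial unisolvence holds uniformly — everything else (reproduction, locality, continuity) is a bookkeeping consequence of the MLS construction.
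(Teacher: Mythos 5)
The paper does not supply its own proof of this lemma: it is quoted verbatim as a known result (attributed to Theorem 4.7 of the cited reference, in the tradition of Wendland's local polynomial reproduction / moving least squares theory) and is used as a black box to construct the oracle of Assumption~\ref{ass:oracle}. So there is no internal proof to compare against; your proposal must be judged on its own.

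Your construction of $u_{\bmx_i}$ via weighted local polynomial fitting is the standard one, and your verification of the reproduction property and of the locality property (item~3) is correct. The gap is in the uniform $\ell^1$ bound (item~2). You assert that $\#\{i:\bmx_i\in B(\bm{y},\delta)\}\leq C_d (\delta/\rho(P,\Omega))^d$ ``by a volume/packing argument,'' but the hypothesis of the lemma only controls the \emph{covering radius} $\rho(P,\Omega)$, which is an upper bound on the fill distance. It gives no lower bound on the separation of the sample points: arbitrarily many $\bmx_i$ can coincide (or cluster within an arbitrarily small ball) without affecting $\rho(P,\Omega)$. In such a degenerate configuration, $\sum_i w_i(\bm{y})\|v_i(\bm{y})\|$ grows with the multiplicity while the clustered points only contribute a rank-one perturbation to $G(\bm{y})$, so $\lambda_{\min}(G(\bm{y}))$ does not grow proportionally; your chain of inequalities then yields an unbounded estimate for $\sum_i|u_{\bmx_i}(\bm{y})|$. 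The standard repair (and what the cited sources effectively do) is to first extract a maximal $\rho(P,\Omega)$-separated subset $P'\subseteq P$; by maximality $P'$ has covering radius at most $2\rho(P,\Omega)$ and, crucially, the packing bound now holds for $P'$. One then runs your MLS construction on $P'$ and sets $u_{\bmx}\equiv 0$ for $\bmx\in P\setminus P'$, which preserves items~1 and~3 and restores the packing estimate needed for item~2. The remainder of your argument — lower-bounding $\lambda_{\min}(G(\bm{y}))$ via uniform local $\Pi_{s-1}$-unisolvence for $a_2$ large enough, plus continuity — is the right idea, but it too implicitly relies on this thinning step to make the compactness argument over admissible local configurations run uniformly in $n$.
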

Based on the function $u_{\bmx_i}(1\leq i \leq n)$ given in Lemma \ref{lemma:mlse}, we define a function estimator $K_n = K_n(\{\bmx_i\}_{i=1}^n, \{f(\bmx_i)\}_{i=1}^n)$ for $f$ as 
$$K_n(\{\bmx_i\}_{i=1}^n, \{f(\bmx_i)\}_{i=1}^n):=\sum_{i=1}^n f(\bmx_i)u_{\bmx_i}.$$
This estimator is derived using the moving least squares approximation, originally introduced by \cite{wendland2001local}. For further details, see \cite{wendland2004scattered}. \\
To demonstrate that the function estimator $K_n$ defined above satisfies the upper bound in Assumption \ref{ass:oracle}, we need to control the covering radius $\rho(P,\Omega)$ when $P$ is drawn from a distribution with a p.d.f. that is both lower and upper bounded.
\begin{lemma}[Bound on the covering radius (Theorem 2.1 in \cite{reznikov2016covering})]
\label{lemma:bound_on_cr}
    Given $P:=\{\bmx_1,\ldots,\bmx_n\}$ sampled independently and identically from a distribution with a p.d.f. that is both lower and upper bounded on $\Omega=[0,1]^d$, we have that there exist constants $c_1,c_2>0$ and $\alpha_0>0$, which are all independent of $n$, such that the following inequality holds for any $\alpha>\alpha_0$:
    $$
    \P\left(\rho(P,\Omega)\geq c_1\left( \frac{\alpha\log n}{n}\right)^{\frac{1}{d}} \right) \lesssim n^{1-c_2 \alpha}.
    $$
\end{lemma}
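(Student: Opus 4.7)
The plan is to use a standard covering/union-bound argument: reduce the supremum over the continuum $\Omega$ to a maximum over a finite $\epsilon$-net, and then bound the probability that any net point is uncovered by using the lower bound $\underline{b}$ on the source p.d.f.\ together with an elementary binomial tail estimate. First, I would set $\epsilon_n := c_1 (\alpha\log n /n)^{1/d}$ for a constant $c_1$ to be chosen, and construct a maximal $(\epsilon_n/2)$-net $\mathcal{N} \subset \Omega$, so that $|\mathcal{N}| \leq C_d \epsilon_n^{-d} \leq C'_d\, n/(\alpha\log n)$ for some dimension-dependent constants $C_d, C'_d$. The key deterministic observation is: if $\rho(P,\Omega) \geq \epsilon_n$, then there exists $y^{\ast}\in\Omega$ with $B(y^{\ast},\epsilon_n)\cap P = \emptyset$, and picking the nearest net point $y_k\in\mathcal{N}$ (which lies within $\epsilon_n/2$ of $y^{\ast}$) we obtain $B(y_k,\epsilon_n/2)\cap P = \emptyset$. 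Hence
\[
\mathbb{P}\bigl(\rho(P,\Omega) \geq \epsilon_n\bigr) \leq \sum_{y_k\in\mathcal{N}} \mathbb{P}\bigl(B(y_k,\epsilon_n/2)\cap P = \emptyset\bigr).
\]

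Next I would bound each summand. Because $p^{\circ}\geq \underline{b}>0$ on $\Omega$, for any $y_k\in\Omega$ the mass of the ball satisfies
\[
\mathbb{P}^{\circ}\bigl(B(y_k,\epsilon_n/2)\cap\Omega\bigr) \geq \underline{b}\cdot \mathrm{vol}\bigl(B(y_k,\epsilon_n/2)\cap\Omega\bigr) \geq \underline{b}\,\kappa_d\,\epsilon_n^{d},
\]
where $\kappa_d>0$ accounts for the worst-case fraction of a ball that remains inside the unit cube (this fraction is bounded below by $2^{-d}$ for balls centered at corners, so $\kappa_d$ is an absolute constant). By independence,
\[
\mathbb{P}\bigl(B(y_k,\epsilon_n/2)\cap P = \emptyset\bigr) \leq \bigl(1-\underline{b}\,\kappa_d\,\epsilon_n^{d}\bigr)^{n} \leq \exp\!\bigl(-\underline{b}\,\kappa_d\,n\epsilon_n^{d}\bigr) = \exp\!\bigl(-\underline{b}\,\kappa_d\,c_1^{d}\,\alpha\log n\bigr) = n^{-\underline{b}\kappa_d c_1^{d}\alpha}.
\]
Combining with the cardinality bound on $\mathcal{N}$,
\[
\mathbb{P}\bigl(\rho(P,\Omega) \geq \epsilon_n\bigr) \leq \frac{C'_d\, n}{\alpha\log n}\cdot n^{-\underline{b}\kappa_d c_1^{d}\alpha} \lesssim n^{1 - c_2\alpha},
\]
where $c_2 := \underline{b}\kappa_d c_1^{d}/2$ (the extra factor absorbs the logarithmic denominator and the constant $C'_d$ once $\alpha$ is large enough). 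Choosing $\alpha_0$ so that $c_2\alpha_0 > 1$ (or as large as needed for whatever downstream decay is required) and enforcing $\alpha>\alpha_0$ completes the argument.

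The main obstacle is not any single step — each is routine — but rather the bookkeeping needed to ensure constants come out correctly: specifically, verifying a uniform lower bound $\kappa_d$ on the volume of a half-radius ball intersected with $\Omega$ regardless of where the center sits (corners being the worst case), and tuning $c_1$ so that the exponent $\underline{b}\kappa_d c_1^{d}$ in the binomial bound dominates the $\log n$ losses from the net cardinality. A secondary subtlety is that the lemma as stated does not require the p.d.f.\ upper bound $\bar{b}$; only $\underline{b}$ enters the proof, since we only need that samples concentrate \emph{at least} as much as a uniform distribution scaled by $\underline{b}$. If one wanted a matching lower bound on $\rho(P,\Omega)$ (not needed here), then $\bar{b}$ would enter through the analogous upper-tail estimate on ball masses.
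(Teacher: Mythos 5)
The paper does not actually prove this lemma: it is imported verbatim as Theorem~2.1 of \cite{reznikov2016covering}, so there is no in-paper argument to compare against. Your self-contained proof is correct and follows the standard discretize--union-bound--exponential-tail route for covering-radius bounds, which is indeed the essential idea behind the cited result. The key steps all check out: a maximal $(\epsilon_n/2)$-separated set in $[0,1]^d$ is an $(\epsilon_n/2)$-cover of cardinality $O(\epsilon_n^{-d})$; the event $\{\rho(P,\Omega)\geq\epsilon_n\}$ deterministically forces some net point's $\epsilon_n/2$-ball to be empty of samples (triangle inequality); the corner-case volume fraction $2^{-d}$ gives a uniform $\kappa_d$; and the $(1-p)^n\leq e^{-np}$ bound converts the per-ball miss probability into $n^{-\underline{b}\kappa_d c_1^d\alpha}$, after which the net cardinality contributes the leading $n^{1}$ and the residual $1/(\alpha\log n)$ is absorbed into $\lesssim$. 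Two remarks worth recording. First, your observation that only the lower bound $\underline{b}$ on the source density enters is correct --- the hypothesis $\ps\leq\bu$ is vacuous for this direction of the estimate, and the paper inherits the symmetric two-sided phrasing from the cited theorem. Second, you should make explicit that the implicit constant in $\lesssim$ and the threshold $\alpha_0$ depend on $d$ and $\underline{b}$ but not on $n$, which is precisely the independence the lemma asserts; with $c_1$ fixed, one can take $c_2=\underline{b}\kappa_d c_1^d$ exactly (your halving is a harmless buffer). No gap.
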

Building on the Lemma \ref{lemma:mlse} and Lemma \ref{lemma:bound_on_cr}, we can use the result from Appendix E.3 in \cite{blanchet2024can} to demonstrate that the function estimator $K_n$ satisfies the upper bound specified in Assumption \ref{ass:oracle}.
}

\section{Appendix}\label{app:real_data}
\subsection{Comparison of different methods}
\textcolor{black}{
In this subsection, we conduct experiments on both synthetic and real datasets to compare the performance of various methods, highlighting the necessity and advantages of the two-stage approach under covariate shift.\\
\textbf{Synthetic dataset.} In our experiment, the source distribution $\P^{\circ}$ is defined as a truncated normal distribution  on $[0,1]$ with mean $0.2$ and standard deviation $0.3$, denoted by $\texttt{tnorm}([0,1], 0.2, 0.3)$. To vary the intensity of covariate shift, we consider the target distributions  $\texttt{tnorm}([0,1], \mu, 0.3)$, adjusting the mean parameter $\mu$. We set $f(x) = 1 + x^2 + \frac{1}{5}\sin(16x)$ and select $\mu=0.2, 0.4, 0.6$, resulting in corresponding $B$ values of 1.00, 3.98, and 12.08, respectively. \\
We set $q=2$ and train $\hat{f}_{\S_1}$ using linear regression. Each experiment is repeated 100 times. The performance of the following three estimators is compared:
\begin{itemize}
    \item Monte Carlo estimator (\texttt{MC}): estimate $I_f^q$ using $\frac{1}{|\S|}\sum_{i \in \S} w(\bmx_i)y_i^q$.
    \item One-stage estimator (\texttt{One-stage}): train $\hat{f}_{\mathcal{S}}$ using the entire dataset $\mathcal{S}$, and then estimate $I_f^q$ by $\mathbb{E}_{\mathbf{X} \sim \mathbb{P}^{\ast}}\left[\hat{f}_{\mathcal{S}}^q(\mathbf{X})\right]$. 
    \item Two-stage estimator (\texttt{Two-stage}): estimate $I_f^q$ using the truncated estimator from \eqref{eq:truncated-estimator}, with $T$ set to $\frac{3}{4}B$.
\end{itemize}
The results are presented in Figure \ref{fig:comparison}. It indicates that the Monte Carlo estimator outperforms the two-stage methods in the absence of covariate shift ($B = 1$). However, as $B$ increases, the two-stage estimator demonstrates greater accuracy and improved stability, highlighting its necessity under significant covariate shift.
}

\textcolor{black}{
\textbf{Real dataset.} We illustrate the application of the proposed two-stage estimator under covariate shift with an empirical example. Specifically, we use the airfoil dataset $\S$ from the UCI Machine Learning Repository, which comprises $N=1503$ observations of a response variable $Y$ (scaled sound pressure level of NASA airfoils) and a covariate $\bmX$ with $d=5$ dimensions: log frequency, angle of attack, chord length, free-stream velocity, and suction side log displacement thickness.\\
We conducted an experiment over 100 trials, where in each trial, we randomly partitioned the dataset $\S$ into three subsets: $\S_{train}$, $\S_{cls}$, and $\S_{test}$.
\begin{itemize}
    \item $\S_{train}$, containing $50\%$ of the data, is used to construct an estimator of $I_f^q$.
    \item $\S_{test}$, containing $30\%$ of the data, is used to create the covariate shift and establish a ground truth for $I_f^q$. The covariate shift is introduced by sampling points from $\S_{test}$ with replacement, using probabilities proportional to $h(\bmx) = \exp\left(\bmx^{\top} \bm{\beta}\right)$, where $\bm{\beta} = (-1, 0, 0, 0, 1)^{\top}$. We denote the resulting sample as $\S_{shift}$, which is used to compute the ground truth by $\frac{1}{|\S_{shift}|}\sum_{(\bmx_i,y_i)\in\S_{shift}}y_i^q$.
    \item $\S_{cls}$, comprising $20\%$ of the data, is combined with $\S_{shift}$ to train a classifier for estimating the likelihood ratio and obtaining the estimator $\hat{w}(\bmx)$.
\end{itemize}
We compare the three different estimators as follows:
\begin{itemize}
    \item Monte Carlo estimator (\texttt{MC}): estimate $I_f^q$ using $\frac{1}{|\S_{train}|}\sum_{(\bmx_i,y_i)\in\S_{train}} \hat{w}(\bmx_i)y_i^q$.
    \item One-stage estimator (\texttt{One-stage}): train $\hat{f}_{\mathcal{S}_{train}}$ using the entire dataset $\mathcal{S}_{train}$, and then estimate $I_f^q$ by $\frac{1}{|\S_{shift}|}\sum_{(\bmx_i,y_i)\in\S_{shift}}\hat{f}^q_{\S_train}(\bmx_i)$. 
    \item Two-stage estimator (\texttt{Two-stage}): estimate $I_f^q$ using the truncated estimator from \eqref{eq:truncated-estimator}.
\end{itemize}
The results in Figure \ref{fig:realdata} demonstrate that the proposed two-stage method outperforms both the Monte Carlo estimator and the one-stage estimator under the covariate shift setting.
}

\begin{figure}
    \centering
    \includegraphics[width=0.4\linewidth]{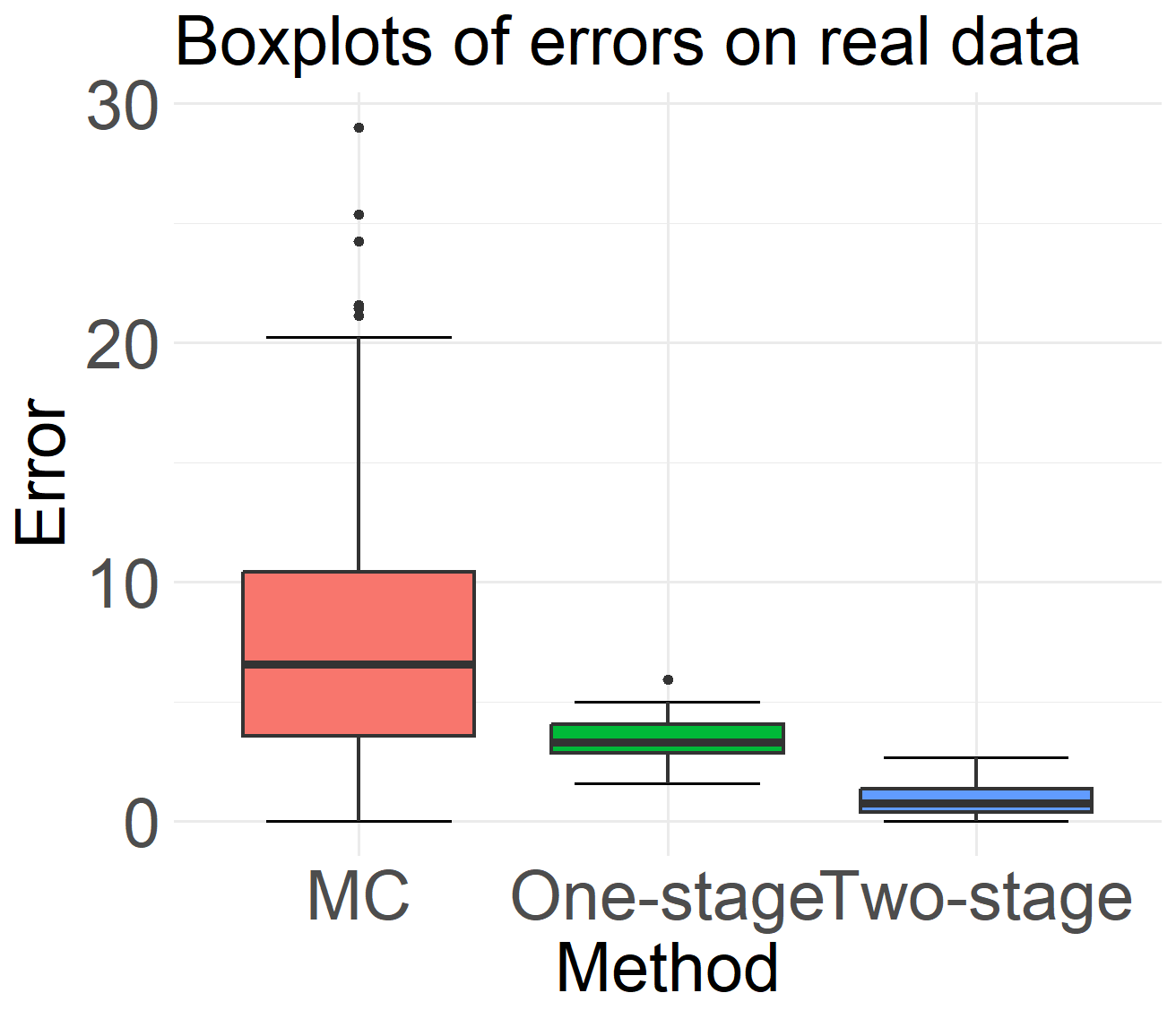}
    \caption{Boxplots of errors for the Monte Carlo estimator, one-stage estimator, and two-stage estimator on the airfoil dataset. The proposed two-stage method outperforms both the Monte Carlo estimator and the one-stage estimator under the covariate shift setting.}
    \label{fig:realdata}
\end{figure}

\end{document}